\newcommand{\changedCR}[1]{#1}  % uncomment to disable highlighting
\newcommand{\Wvec}{\vec{W}}
\newtheorem{lem}{Lemma}
\newtheorem{thm}{Theorem}
\newcommand*{\QED}{\hfill\ensuremath{\blacksquare}}
\newcommand{\sproof}{ \noindent {\textbf{Proof Sketch}}}
\begin{document}

%
%\title{Maximal Causes for Exponential Family Observables}
%
%\title{Maximum Likelihood Parameter Optimization in a Latent Variable Model for Exponential Family Observables}
%
%\title{A Generic Maximum Likelihood Optimization for a\\ Latent Variable Model with Exponential Family Observables}
%
%\title{Generic Optimization of a Non-Linear Latent Variable Model\\ With Exponential Family Observables}
%
%\title{Generic Unsupervised Optimization for a Non-Linear\\ Latent Variable Model with Exponential Family Observables}
%
\title{Generic Unsupervised Optimization for a Latent Variable Model \\
With Exponential Family Observables}
%
%\title{Generic Unsupervised Optimization for a Latent Variable Model\\ With Maximum Superposition and Exponential Family Observables}
%
%\title{Generic Unsupervised Optimization for a Maximum Superpostion LVM\\ With Exponential Family Observables}
%
%\title{Generic Maximum Likelihood Optimization for a\\ Latent Variable Model with Exponential Family Observables}
%
%\title{Generic Maximum Likelihood Optimization for a\\ Latent Variable Model with Exponential Family Observables}
%

%A Generic Approach to Maximum Likelihood Learning for a Latent Variable Model with Exponential Family Observables

%A Generic Approach to Optimize 

%Generic Optimization for a Latent Variable Model with Exponential Family Observables

%A Latent Variable Model with Generic Optimization for the Exponential Family

%Generic Optimization of Latent Variable Models with Exponential Family Observables

%Generic Optimization of a Family of Latent Variable Models with Exponential Family Observables

%\title{General Parameter Optimization for
%
%General Maximum Likelihood Parameter Estimation in a Latent Variable Model for Exponential Family Observables
%
%A General Maximum Likelihood Approach for a Latent Variable Model for Exponential Family Observables

%Generic Optimization of a Latent Variable Model for Exponential Family Observables}

%
%\title{A Generally Applicable Parameter Optimization Approach for a Latent Variable Model for Exponential Family Observables}

\author{\name Hamid Mousavi \email hamid.mousavi@uol.de 
%       \addr Machine Learning Lab\\
%       University of Oldenburg\\
%       26129 Oldenburg, Germany
       \AND
       \name Jakob Drefs \email jakob.drefs@uol.de 
%       \addr Machine Learning Lab\\
%       University of Oldenburg\\
%       26129 Oldenburg, Germany
       \AND
        \name Florian Hirschberger \email florian.hirschberger@uol.de 
%       \addr Machine Learning Lab\\
%       University of Oldenburg\\
%       26129 Oldenburg, Germany     
       \AND
       \name J\"{o}rg L\"{u}cke \email joerg.luecke@uol.de \\
       \addr Machine Learning Lab\\
       University of Oldenburg\\
       26129 Oldenburg, Germany     
%       \addr Machine Learning Lab, University of Oldenburg, 26129 Oldenburg, Germany
       }

\editor{Aapo Hyv\"{a}rinen}

\maketitle

\begin{abstract}%   <- trailing '%' for backward compatibility of .sty file
Latent variable models (LVMs) represent observed variables by parameterized functions of latent variables. 
%
%Latent variable models (LVMs) define the probability distribution of observed variables in terms of parameterized functions of latent variable distributions.
%
Prominent examples of LVMs for unsupervised learning are probabilistic PCA or probabilistic sparse coding which both assume a weighted linear summation of the latents to determine the mean of a Gaussian distribution for the observables. In many cases, however, observables do not follow a Gaussian distribution. 
For unsupervised learning, LVMs which assume specific non-Gaussian observables (e.g., Bernoulli or Poisson) have therefore been considered.
Already for specific choices of distributions, parameter optimization is challenging and only a few previous contributions
considered LVMs with more generally defined observable distributions.
In this contribution, we do consider LVMs that are defined for a range of different distributions, i.e., observables can
follow any (regular) distribution of the exponential family. Furthermore, the novel class of LVMs presented here is defined for binary latents,
and it uses maximization in place of summation to link the latents to observables.
%
%While parameter optimization for LVMs with non-Gaussian observables is usually challenging (even for specific examples of non-Gaussian distributions), we are able to address this issue based on the considered modeling choices of our LVMs. 
%
%Parameter optimization for LVMs with non-Gaussian observables is usually challenging already for specific examples of non-Gaussian distributions.
%%
%However, based on the modeling choices considered here, we address parameter optimization in general.
%%Based on the here made modeling choices, parameter optimization can be addressed in general, however.
%
In order to derive an optimization procedure, we follow an expectation maximization approach for maximum likelihood parameter estimation. 
We then show, as our main result, that a set of very concise parameter update equations can be derived which feature the same functional form for all exponential family
distributions. The derived generic optimization can consequently be applied (without further derivations) to different types of metric data (Gaussian and non-Gaussian) as well as to different types of discrete data.
%
%
%The derived update equations are generic any data which can be modeled using the exponential family, which includes common as well as unusual distributions.
%
%, therefore, generally and directly (i.e. without further derivations) applicable to data which follow common as well as unusual distributions.
% with common or unus without further derivations.
%
Moreover, the derived optimization equations can be combined with a recently suggested variational acceleration which is likewise generically applicable to the
LVMs considered here. % (without further derivations).
Thus, the combination maintains generic and direct applicability of the derived optimization procedure, but, crucially, enables efficient scalability.
We numerically verify our analytical results using different observable distributions, and, furthermore, discuss some potential applications such as 
learning of variance structure, noise type estimation and denoising.
\end{abstract}

\begin{keywords}
Latent variable models, unsupervised learning, exponential family distributions, expectation maximization, variational optimization.
\end{keywords}

\section{Introduction}
\label{Sec_intro}
Latent Variable Models (LVMs) represent a probabilistic approach in which observed variables are assumed
to depend on latent variables through specific parameterized functions. 
%Here we consider LVMs for unsupervised learning, where one seeks to infer the latent variables
%in order to either extract meaning from the inferred values directly, or to use inferred latent values for a variety of tasks of which compression or denoising
%are two of many examples.
%
A large variety of unsupervised Machine Learning approaches is based on LVMs, and prominent examples are probabilistic Principal Component Analysis \citep[p-PCA;][]{TippingBishop1999,Roweis1998}, Factor Analysis \citep[FA;][]{Everitt1984,BartholomewKnott1987} or probabilistic Sparse Coding \citep[SC;][]{OlshausenField1996}. % which assume a probabilistic graphical model to demonstrate the statistical dependencies between the considered variables.
% and enables us to successfully encode probability distributions over complex domains. Moreover, the specific connection between latents and observables in LVMs is parameterized using a so-called link function defined over the model parameters.
%v(Koller and Friedman, 2009)
%
%Probabilistic Principle Component Analysis (p-PCA) \cite{TippingBishop1999,Roweis1998}, Factor Analysis (FA) \cite{Cattel2012} and Sparse Coding (SC) \cite{OlshausenField1996} are well-known Latent Variable Models (LVMs) which seek to model observable distributions, and are used to extract latent information from data. 
The data models of these three and many of the related approaches consist of latent variables whose weighted linear sum determines the mean of a Gaussian distribution. In probabilistic SC, for instance, the set of $D$ observed variables $y_d$ is modelled to depend on a set of $H$ latent variables $s_h$ (we assume $\vec{y} = (y_1, \ldots, y_D)^T$ and $\vec{s} = (s_1, \ldots, s_H)^T$, where superscript $T$ denotes the transpose operator), according to the following generative model:
\begin{align}
p(\sVec\,| \Theta) &=\prod_{h=1}^{H} p_{\mathrm{sparse}}(s_h;\Lambda) , \label{EqnSCPrior}\\
p(\yVec\,|\,\sVec,\Theta) &= \prod_{d=1}^{D}\mathcal{N}(y_{d};\sum_{h=1}^{H}W_{dh}s_{h},\sigma^{2}) , \label{EqnSCNoise}
\end{align}
where $\Wvec_h=(W_{1h}, \ldots, W_{Dh})^{\mathrm{T}}$ is commonly referred to as the {\em generative field} of unit $h$, and the matrix containing all generative fields, $W=(\Wvec_1,\ldots,\Wvec_H)$, is often known as the model's {\em dictionary}. Here, $\Theta=(\Lambda,W,\sigma^2)$ denotes all parameters of the model (including prior parameter $\Lambda$,  dictionary $W$ and a variance $\sigma^2$),  and the term `$p_{\mathrm{sparse}}$' refers to a sparse distribution that is used as prior distribution $p(\sVec\,| \Theta)$. %As noise model the usual Gaussian distribution \NN(y_{d};
The canonical choice for such a distribution is the Laplace distribution \citep[see, e.g.,][]{OlshausenField1996,Tibshirani1996}. Other choices include
the Cauchy distribution \citep[which was used by][alongside Laplace]{OlshausenField1996}, Student-t \citep[][]{BerkesEtAl2009}, Bernoulli \citep[][]{HaftEtAl2004,HennigesEtAl2010}, Categorical \citep[][]{ExarchakisLucke2017},
%or spike-and-slab \cite{TitsiasGredilla2011,GoodfellowEtAl2012,SheikhEtAl2014,DrefsEtAl2022}.
or spike-and-slab \citep[][]{TitsiasGredilla2011,goodfellow2012scaling,SheikhEtAl2014}.

All LVMs just referred to (and presumably the great majority of LVMs) assume a Gaussian distribution for the observables. Most LVMs for Gaussian data then
linearly link the set of latent variables to the set of observables using a weighted sum. 
But a Gaussian distribution cannot be assumed for all data. Consequently, other observable distributions have been investigated. For instance, the Poisson distribution has been of interest for count data,  and likewise, the Bernoulli distribution for binary data.
The aforementioned LVMs (p-PCA, SC etc.) represent well-known examples for Gaussian observables. For the case of non-Gaussian observables, examples include FA with Poisson observables \citep[][]{ZhouEtAl2012}, FA with binary (Boolean) observables \citep[BFA;][]{FrolovEtAl2014} or PCA with Poisson observables \citep[][]{salmon2014poisson,ChiquetEtAl2018}. Also well-known models such as noisy-OR Bayes nets \citep[][]{SingliarEtAl2006,JerniteEtAl2013} or shallow Sigmoid Belief Networks \citep[SBNs;][]{SaulEtAl1996,GanEtAl2015} can be considered as LVMs with Bernoulli distributed observables \citep[also compare][]{LindenHambleton2013}. Moreover, types of non-negative matrix factorization 
(where non-Gaussian distributions are also used) can be regarded as a form of LVM
%non-Euclidean distances are frequently used 
\citep[see, e.g.,][and references therein]{Hoffman2012,basbug2016hierarchical}.

Unlike such models with specific observable distributions, some previous contributions have also considered more generally defined LVMs for unsupervised learning. For instance, \citet[][]{CollinsEtAl2002}, \citet[][]{MohamedEtAl2009} or \citet[][]{LiTao2010} considered a PCA generalization to include observables of the exponential family (referred to as EF-PCA), and \citet{lee2009exponential} considered SC with exponential family observables (EF-SC).
Parameter optimization for specific non-Gaussian distributions is already considered more challenging than for Gaussian distribution. If LVMs are defined
more generally, optimization becomes still more challenging as any method has to be defined without exploiting properties that only apply for a specific distribution.

Besides the approaches mentioned above,  another class of models which is very general by considering exponential family distributions for observables is represented by Generalized Linear Models \citep[GLMs;][]{NelderWedderburn1972,McCullaghNelder1989}. 
GLMs can be related to the aforementioned approaches (and also to our work here) since they likewise exploit other noise distributions besides the Gaussian and do provide generic parameter update equations also for the case of non-Gaussian distributions \citep[e.g.][]{McCullaghNelder1989}.  
Nonetheless, GLMs essentially rely on supervised learning, i.e.\ regression, which is not the subject of this study \citep[see][for a discussion]{CollinsEtAl2002}.

In this study, we consider a class of LVMs for unsupervised learning which is like EF-PCA or EF-SC more generally defined to
include observable distributions of the exponential family. 
Unlike EF-PCA or EF-SC, however, we will seek an optimization procedure that is generally applicable to any regular exponential family distribution. 
% including distributions with one, two and more parameters per observable. 
%\changedHM{Consequently (and unlike previous approaches), we will examine concrete instances of algorithms for exponential family distributions with one and two parameters in our experiments. (MAYBE DELETING THIS!)}
%
%We will further discuss the relation of the novel approach in relation to GLMs and  in relation to other unsupervised approaches for exponential family distributions in more detail in Sec.~\ref{Relation_to_Previous_Work}, i.e.\,after the definition of the class of LVMs considered here.
%
In order to achieve our goal of a generic parameter optimization for any observable distribution of the exponential family,  we replace the common linear summation of latents by a maximization. More precisely, for the LVMs considered here, the latents determine the \textit{mean} of the observable distribution through maximization in contrast to the conventional Gaussian-based LVMs such as p-PCA or SC (and also GLM-related approaches) that determine the mean using summation.
In addition, our approach maintains the same influence of latents on the observable mean (using a maximum non-linearity) independently of the specific choice for the observable distribution.
%using a (weighted) maximum function
%
For approaches such as EF-SC, a weighted sum of latents is used to set the natural parameter(s) of the (usually single parameter) exponential family distribution. 
Doing so implies a general non-linear function for the distribution mean, and this function changes depending on the used exponential family distribution.    
Also in the context of regression, GLMs similarly use a weighted summation followed by a non-linear function to set the distribution mean. 

%\changedJL{The following paragraph is to be revised.}
Models with a non-linear superposition of the latents have been of interest for a variety of tasks, and were actively researched in previous works.  
%Such non-linearities have previously been used to link latents to observables, of course
For instance,  the combination of noisy-OR Bayes nets can be considered as such a non-linear link as well
as point-wise sigmoidal functions after summation as used in SBNs \citep[][]{SaulEtAl1996,GanEtAl2015}. Still further forms of models with a specific latent variable structure have been investigated in the context of non-linear Independent Component Analysis \citep[ICA;][]{HyvarinenPajunen1999} and deep generative models such as Variational Autoencoders \citep[VAEs;][]{KingmaWelling2014}, generative adversarial nets, flow nets etc. \citep[e.g.][]{BondTaylerEtAl2022}.
%uses a non-linear link between latents and observables. 
%We can regard standard ICA as a noiseless limit of standard SC \citep[][]{DayanAbbott2001},  
Standard (linear) ICA approaches \citep[][]{HyvarinenOja2000,Comon1994} investigate continuous non-noisy observables while (similar to standard SC) non-Gaussian latents are considered.
%where, similar to standard SC, non-Gaussian latents are considered.  
Later developments have significantly generalized earlier restrictions with recent work allowing, e.g., for binary observations \citep[][]{HyttinenEtAl2022} and non-linear influences of the latents on the observables \citep[][]{HyvarinenPajunen1999}.
% and for more general noisy observations \citep[][]{HalvaEtAl2021}. % have also been used. 
For non-linear ICA approaches, non-linearities can take either the form of a post-linear non-linearity (i.e., a linear superposition of generative fields followed by an invertible non-linear function; \citet[][]{TalebJutten1999}),  or,  more generally, the form of a general smooth and invertible function that can be used in place of the sum \citep[][]{SprekelerEtAl2014,HyvarinenMorioka2016,HyvarinenMorioka2017,HyvarinenEtAl2019}. %Also compare the latter to flow net or diffusion net approaches  
%
%The practical realizations of non-linear ICA make use of a post-linear non-linearity (i.e., a linear superposition of generative fields followed by a sigmoidal non-linearity).  
%
Non-linear ICA and variational autoencoders as presumably most relevant types of deep models will be discussed further in Sec.~\ref{Relation_to_Previous_Work}, i.e.\,after the definition of the proposed LVMs.  
%, more generally, deep generative models 

%We will later discuss the contribution of the current work in relation to GLMs and in relation to other unsupervised approaches (either linear or non-linear) for exponential family distributions in more detail in Sec.~\ref{Relation_to_Previous_Work}, i.e.\,after the definition of the proposed LVMs.  

Like the above types of non-linearities, the here considered fixed non-linearity in the form of a (weighted) maximization, 
%
%The here considered fixed non-linearity 
%
%The here considered maximum superposition,  termed Maximal Causes Analysis (MCA),    
has also been investigated previously. 
%Also the here considered maximization to determine the observable means has been considered as a non-linear link before. 
For instance, \citet[][]{Roweis2003}, \citet[][]{SheltonEtAl2012}, \citet[][]{SheltonEtAl2017} or \citet[][]{SheikhEtAl2019} used maximization for LVMs with observables distributed according to standard Gaussian distributions, and \citet[][]{LuckeSahani2008} used the maximum for an LVM with Poisson observables. % following a Poisson distribution.
Moreover, similar fixed non-linear superpositions have been used to model mutual occlusions of objects in images \citep[][]{LuckeEtAl2009,HennigesEtAl2014}. The approach named Occlusive Component Analysis (OCA) likewise uses Gaussian observables (also see \citet[][]{DaiLucke2014} for another relevant study in this direction).   
Like the work by \citet[][]{LuckeSahani2008} or \citet[][]{SheikhEtAl2019}, we will use the fixed maximum non-linearity in this study. But
unlike such previous works, we will not use the non-linearity together with only one specific observable distribution. Instead,
we will consider LVMs which encompass any (regular) member of the exponential family as observable distribution. 

After defining the general class of LVMs, we will address the challenge of parameter optimization by following a maximum likelihood approach using Expectation Maximization \citep[EM;][]{DempsterEtAl1977}. 
Our central analytical result will then be a set of concise parameter update equations (M-steps) that are generally and directly applicable to any regular member of
the exponential family. % distributions.
Furthermore, the here derived M-steps can be combined with likewise generally applicable and efficient variational E-steps using a recently suggested variational approach \citep[][]{DrefsEtAl2022}. Neither the generic M-steps derived in this contribution nor the variational E-steps of \citet[][]{DrefsEtAl2022} require
any analytical derivations before they can be applied. If generic M-steps and variational E-steps are combined, we then obtain an efficient EM optimization algorithm that is, without any derivations, applicable to any observables that follow a regular exponential family distribution\footnote{Generic applicability does by itself not mean that the EM algorithm derived here will necessarily be based entirely on closed-form equations. Even estimating the parameters of many exponential family distributions is itself often not 
% available
possible  
in closed-form (e.g. for Gamma or Beta distributions).
But {\em if} the parameters of the exponential family distribution can be estimated in closed-form, then EM for the corresponding LVM will be closed-form.
%, i.e., the M-step will be closed-form but will take the form of a fixed-point equation for updates.
%
If the parameters of the exponential family distribution can {\em not} be estimated in closed-form, then this will turn out to be the only part requiring approximations (which are usually readily available). See end of Sec.\,\ref{generalEF_MCA} for details.}. Finally, we provide numerical evaluations for one- as well as two-parameter distributions of the exponential family and point to some potential applications.

\section{Preliminary Definitions}
%\label{sec:nonlinear_sc_models}
%
Before defining the family of LVM models, we provide details on exponential family distributions which will be used throughout this paper. We use observable distributions $p(y; \vec{\eta}\,)$ that take the form of an exponential family distribution given by: %(with $y$ assumed to be a scalar) by:
\begin{align}\label{Exp_pdf}
p(y; \vec{\eta}\,) = h(y) \exp\big(\vec{\eta}^{\, T} \vec{T}(y) - A(\vec{\eta}\,) \big), \quad y\in \mathcal{Y}, %\subseteq \mathbb{R}
%p(y; \vec{\eta}) = h(y) \exp(\vec{\eta}^T \vec{T}(y) - A(\vec{\eta})), \ y\in \mathcal{Y} \subseteq \mathbb{R}
\end{align}
where $\mathcal{Y}$ is the domain of a scalar observable $y$, $h(y)$ is the base measure, $\vec{T}(y)$ represents the \textit{sufficient statistics} of the data, and $\vec{\eta}$ and $A(\vec{\eta}\,)$ are the \textit{natural parameters} and \textit{log-partition} function, respectively. The vectors $\vec{T}(y) = (T_1(y), \ldots, T_L(y))^T$ and $\vec{\eta} = (\eta_1, \ldots, \eta_L)^T$ have $L$ elements each when the distribution $p(y; \vec{\eta}\,)$ is an $L$-parameter distribution. 
We remark that a notational alternative is provided by using  
%
%Also, note that we could use the notation of conditional probabilities to define
%
$p(y\,|\,\vec{\eta}\,)$ in \eqref{Exp_pdf} instead of $p(y; \vec{\eta}\,)$.  However, we here use the latter notation (which describes $p(\cdot)$ as a function of $y$ with given parameters $\vec{\eta} \,$) for the sake of a better readability. We follow the same argumentation for the other probability density functions used in the paper (including those of Eqns.~\eqref{EqnSCPrior} and \eqref{EqnSCNoise}).

%Moreover, 
% $A(\eta)$ is the term that makes the distribution integrate to $1$, meaning that
%\begin{align}\label{log-partition}
%A(\vec{\eta}) = \log(\int h(y) \exp\{\vec{\eta}^T \vec{T}(y)\} dy).
%\end{align}
%
%
The log-partition $A(\vec{\eta}\,)$ plays the role of a normalizer. Following from \eqref{Exp_pdf}, it is for a given $\vec{\eta}$ defined as:
\begin{align}\label{log-partition}
A(\vec{\eta}\,) = \log \Big(\int h(y) \exp\big( \vec{\eta}^{\, T} \vec{T}(y)\big) dy \Big).
\end{align}
Formally, the domain of $A$ is usually defined to be the set of all natural parameters $\vec{\eta}$ for which $A(\vec{\eta}\,) < \infty$.
%
%For the purposes of the paper, 
We will only consider exponential families that are \textit{regular}, i.e., families
for which the set of all natural parameters $\vec{\eta}$ is a non-empty and open set \citep[e.g.][]{BickelDoksum2015,Efron2023}. Furthermore, all integrals over the domain
${\cal Y}$ can be general Lebesgue integrals but we will not use Lebesgue measures explicitly.
For the purposes of this paper, the integrals can be considered as using the ordinary Lebesgue measure on $\mathbb{R}$ in the continuous case, while in the discrete case integrals become a summation over finite probabilities \citep[compare, e.g.,][]{wainwright2008graphical}.
%
%\footnote{In the continuous case the here
%used integrals can be thought of as being equipped with the ordinary Lebesgue measure on $\RRR$; in the discrete case the integral
%becomes a summation over finite probabilities \citep[compare, e.g.,][]{wainwright2008graphical}}.
%
%as it carries salient features such as being convex w.r.t. the natural parameters (we will show this further below).
%such that any 
%distribution $p(y; \vec{\eta}\,)$ is said to be \textit{regular} if $A(\vec{\eta}) < \infty$ for each $\vec{\eta}$.
%In fact, from \eqref{Exp_pdf},  we can derive:
%\begin{align}\label{log-partition}
%A(\vec{\eta}) = \log \Big(\int h(y) \exp\big( \vec{\eta}^T \vec{T}(y)\big) dy \Big).
%\end{align}
%
For any regular exponential family, and for any $l,k \in \{1,\ldots,L\}$, we have \citep[e.g.][]{BickelDoksum2015,Efron2023}:
\begin{align}\label{A_eta_expected_value}
\frac{\partial A(\vec{\eta}\,)}{\partial \eta_l} = 
\mathbb{E}_{p(y;\,\vec{\eta}\,)}[T_l(y)], \quad
\frac{\partial^2 A(\vec{\eta}\,)}{\partial \eta_l \partial \eta_k} = 
\mathrm{cov} \big( T_l(y), T_k(y) \big),
%\frac{\partial}{\partial \eta_l} \log(\int h(y) \exp\big( \vec{\eta}^T \vec{T}(y)\big) dy) \cr
%&= \frac{\int T_l(y) h(y) \exp\big( \vec{\eta}^T \vec{T}(y)\big) dy}{\int h(y) \exp\big( \vec{\eta}^T \vec{T}(y)\big) dy}\cr
%&= \frac{\int T_l(y) h(y) \exp\big( \vec{\eta}^T \vec{T}(y)\big) dy}{ \exp\big( A(\vec{\eta}) \big) }\cr
%&= \int T_l(y) h(y) \exp\big( \vec{\eta}^T \vec{T}(y) - A(\vec{\eta})\big)  dy\cr
\end{align}
where $\mathbb{E}_{p(\cdot)}[f(y)]$ denotes the expectation value of $f(y)$ w.r.t.\ the distribution $p(\cdot)$ (evaluated per entry if $f$ is a vector) and $\mathrm{cov}(\cdot, \cdot)$ represents the covariance. 
Also note that an alternative notation for the expectation value is $\langle f(y) \rangle_{p(\cdot)}$ which has been used in previous works \citep[e.g.,][]{LuckeSahani2008,SheikhEtAl2014}. 

As an important tool to later derive parameter update equations, we will use the {\em mean value parametrization} (a.k.a.\ the \textit{mean parametrization}) of the exponential family \citep[e.g.][]{wainwright2008graphical,Cox2006,Efron2023}. 
That is, we consider parameters $\vec{w}$ (as the model's parameters) defined by:
\begin{align}\label{mean_value_parameters}
\vec{w} := \mathbb{E}_{p(y;\,\vec{\eta}\,)}[ \vec{T}(y) ], \quad \vec{w}=(w_1,\ldots,w_L)^T.     %, \quad i = 1, \ldots, l
\end{align}
%
%where $\langle f(y) \rangle_{p(y)}$ denotes the expectation value of $f(y)$ w.r.t.\ the distribution $p(y)$ (evaluated per entry if $f$ is a vector). 
%Note that an alternative notation to $\langle f(y) \rangle_{p(y)}$ is using $\mathbb{E}_p[f(y)]$. However, we will use the former notation in the following
%for the sake of consistency with previous works \citep[][]{LuckeSahani2008,SheikhEtAl2014}. \\
%instead of $<f(y)>_{p(y)}$. In the following we will use the 
% where we hereafter stick to the former notation for the sake of consistency with previous works.
 %
%The mapping (\ref{mean_value_parameters}) is bijective and its inverse is well defined in non-degenerate cases\footnote{Essentially the mapping is invertible when the elements of $\vec{T}(y)$ are not interdependent (compare \cite{RockafellarWets1998}). A counter-example is, for instance, the beta distribution with parameters $\alpha=\beta$. Also see \textit{minimal representation} property \cite{wainwright2008graphical}.}.
The mapping from $\vec{\eta}$ to $\vec{w}$ is bijective, i.e. it is invertible, if the distribution has a \textit{minimal representation}\footnote{A distribution of the exponential family is said to be minimal if there are no linear dependencies among components of the sufficient statistics,
%the components of the parameter vector nor are there linear constraints among the 
i.e., there is no coefficient $\vec{a}\in\mathbb{R}^L$ with $\vec{a}\neq \vec{0}$ such that 
\begin{equation*}
\sum_{l=1}^L a_l T_l(y) = \mathrm{constant} \quad \mbox{for\, all} \ y \in \mathcal{Y}.
\end{equation*}} property \citep[compare][and many others]{wainwright2008graphical,Efron2023}.
Assuming minimality from now on, there exists a function $\vec{\Phi}$ which maps the mean value parameters to the natural parameters as follows:
%
%the inverse of mapping (\ref{mean_value_parameters}) exists and will be denoted by $\vec{\Phi}$, i.e.,
%
\begin{align}\label{phi_function}
\vec{\eta} = \vec\Phi(\vec{w}) \ \mbox{\ \ such that\ \ }\ \vec{w} = \mathbb{E}_{p(y;\, \vec{\Phi}(\vec{w}))}[ \vec{T}(y) ].
\end{align}
For some distributions of the exponential family the function $\vec{\Phi}$ can be obtained in closed-form (e.g., for Poisson, Bernoulli and Gaussian). For others, however, a closed-form expression may in general not be available. Still, the existence of the inverse function $\vec{\Phi}$ is guaranteed as long as the distribution has a minimal representation property. % \citep[e.g.][]{wainwright2008graphical}. 
In the next section, we will show how the function $\vec{\Phi}$ along with other properties presented above will be exploited to define a specific family of generative models for exponential family observables. %distributions.

%\subsection{Mathematical Notation}
%Throughout the paper, we will then use the following notations. 

\section{Proposed Family of Latent Variable Models}
%
%\section{A Family of Non-Linear Latent Variable Models}
%\section{Exponential Family Maximal Causes Analysis?!}
%
%
%
\label{sec:nonlinear_sc_models}

We hereafter consider LVMs with the same notation as in Eqns.~(\ref{EqnSCPrior})-(\ref{EqnSCNoise}). 
However, we assume the latents to be binary (i.e. each element $s_h \in \{0, 1\}$), and for the sake of consistency with previous work, we assume each $s_h$ to follow a Bernoulli distribution.  
Using small values of Bernoulli parameters, the model can thus realize a sparse encoding of the latents. 
We remark, nonetheless, that the analytical results derived in the following 
only require binary latents such that the results will also apply for non-sparse encodings and priors more complex than $H$ independent Bernoulli distributions. 
As a further assumption we assume, similar to the standard SC, that all observables $y_d$ are conditionally independent given the latents.
Unlike standard SC, however, the conditional distribution for the observables can follow any regular distribution of the exponential
family \eqref{Exp_pdf} with natural parameters depending on $\vec{s}$.  
Finally, and importantly, the observables in our model class depend on the latents through maximization.

\subsection{Model Definition}

We investigate LVMs according to the following generative model:
\vspace{-3ex}

\begin{align} 
p(\vec{s}\,|\, \Theta) &= \prod_{h=1}^H \big( \pi_h^{s_h} (1 - \pi_h)^{1 - s_h} \big) , \label{modelEq1}\\
p(\vec{y}\,|\, \vec{s}, \Theta) &= \prod_{d=1}^D p\big(y_d; \vec{\tilde{\eta}}_d(\vec{s},\Theta)\big), \ \ \mbox{where}\label{modelEq2}\\
p(y; \vec{\eta}\,\big) &= h(y)\exp\big( \vec{\eta}^{\, T} \vec{T}(y) - A(\vec{\eta}\,) \big) , \label{modelEq3}
\end{align} 
and where $\pi_h \in (0, 1)$ parameterizes the prior distribution on $s_h$; we let $\vec{\pi} = (\pi_1, \ldots, \pi_H)^{\mathrm{T}}$.  
For model (\ref{modelEq1})-(\ref{modelEq3}), the latent variables $\sVec$ couple to the observed
variable $y_d$ through the function $\vec{\tilde{\eta}}_d(\vec{s},\Theta)$ which is defined further below (the function can be considered as a link function in a broader sense). The observable distribution (\ref{modelEq3}) will in the following also be referred to as {\em noise distribution}.

Note that $\vec{\tilde{\eta}}_d(\vec{s},\Theta)$ is a vector with $L$ entries for each $d$.
For one-parameter distributions of the exponential family (Poisson, Exponential, Pareto etc.) $L=1$, for two-parameter distributions (Gamma, Beta etc.) $L=2$, and so forth (e.g., Categorical and Multinomial distributions can be mentioned for the case of arbitrary $L$).
For standard SC, a Gaussian with known variance is commonly considered 
%as an exponential family distribution with $L=1$ parameter 
(just the mean is parameterized).  
Standard SC then sets the {\em mean} of the observables using a matrix $W\in\mathbb{R}^{D \times H}$, see Eqns. \eqref{EqnSCPrior} and \eqref{EqnSCNoise}. Here, we seek to couple latents and observables in an analog but more generally applicable way. %, which is challenging for the general exponential family case, of course.

In order to facilitate our notation, let us from now on assume two-parameter distributions ($L=2$) as it applies, e.g., for Gamma, Beta and Gaussian distributions. Arbitrary $L$ will be treated later on. 
For the generative model presented above, we will for each observable $y_d$ parameterize the distribution \eqref{modelEq2} in terms of the 
%
 %demand the distribution \eqref{modelEq2} to be parameterized w.r.t. the 
%
 mean value parameters $\vec{w}=(w_1,w_2)^{\mathrm{T}}$ (Eqn.~\ref{mean_value_parameters}). 
% , and therefore require to define the parameters of the distribution properly. 
 For this purpose and for the $L=2$ case, let $w_1=\bar{W}_d$ and $w_2=\bar{V}_d$ for each $d$, where
$\bar{W}_d$ and $\bar{V}_d$ are functions depending on the 
latents $\sVec$ and \textit{two} matrices $W$ and $V$ with $D \times H$ entries. 
Linear and non-linear combination of latents to determine the observable distributions will be possible
through different choices for the functions $\bar{W}_d$ and $\bar{V}_d$ (see below).
%
%will represent the mean value parameters 
%of the noise distribution (the choice of linear or non-linear combination
%of latents will below be embedded into the model through the definitions of $\bar{W}$ and $\bar{V}$). 
The notation using $L=2$ serves for gaining some intuition as (e.g., for Gaussian or Gamma) $\bar{W}_d$ may denote the distribution mean of observable $y_d$ and $\bar{V}_d$ the distribution variance (or the second moment of the distribution). Figure~\ref{fig_Network} illustrates the structure of the proposed model where, in the case of $L = 2$, two matrices connect the latents to the observables. Now, using the function $\vec{\Phi}(\vec{w})$ given by (\ref{phi_function}), we define the link from latents to observables as follows:
%
%For the $L=2$ case, the mean value parameters $\vec{w}$ (Eqns.\,\ref{mean_value_parameters} and \ref{phi_function}) will be denoted by $w_1=\bar{W}$ and $w_2=\bar{V}$. The notation serves for gaining some intuition because, e.g., for Gaussian or Gamma distributions, $\bar{W}$ can be thought of
%as the parameter of the mean, and $\bar{V}$ as the parameter of the variance (or of the second moment). We require parameters $\bar{W}$ and $\bar{V}$ for each observable, and they will depend on the
%latents $\sVec$ and two $D \times H$ matrices $W$ and $V$. Fig. \ref{fig_Network} then illustrates the structure of the proposed model. Using the function $\vec{\Phi}(\vec{w})$ of (\ref{phi_function}) we now define the link from latents to observables as follows:
%  
\begin{figure}[t!]
\centering
\includegraphics [scale=0.3]{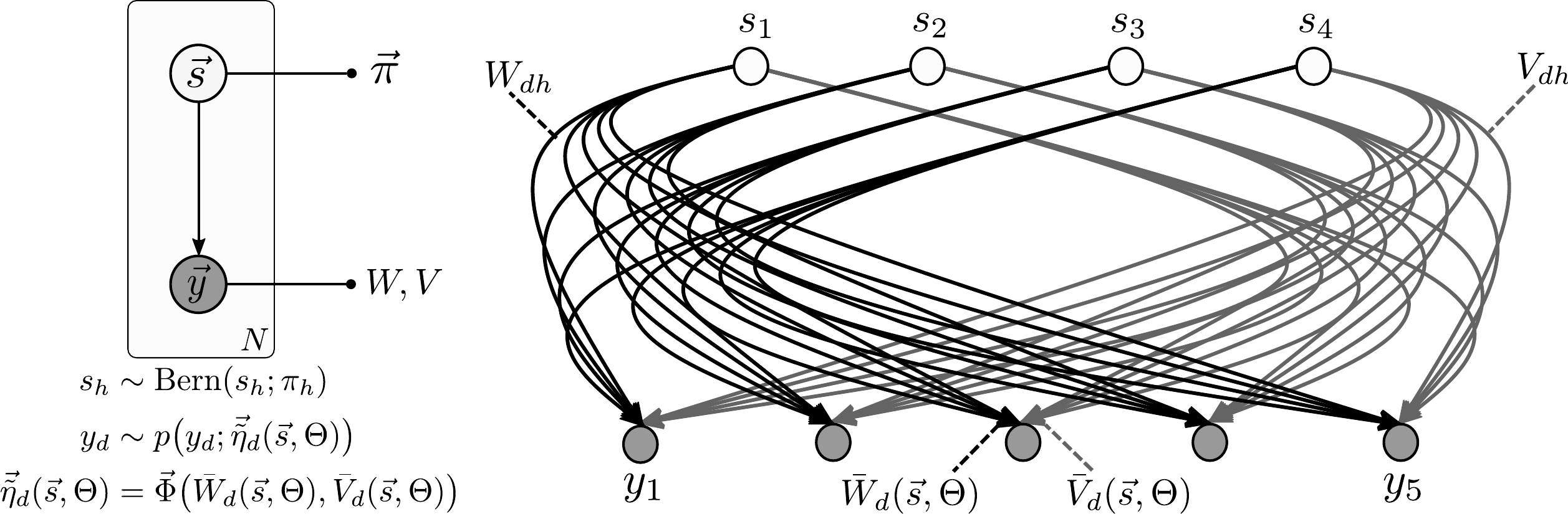}
  \caption{Graphical representation of the proposed generative model with (as an example) $H = 4$ latent variables and $D = 5$ observed variables. Given a binary vector $\vec{s}$, the value $y_d$ of an observable follows an exponential family distribution $p(y_d; \vec{\tilde{\eta}}_d(\vec{s},\Theta) \big)$ with natural parameters
  determined by the functions $\bar{W}_d(\vec{s}, \Theta)$ and $\bar{V}_d(\vec{s}, \Theta)$ for the distribution's mean value parameters.
The functions depend on the latents and on the matrices $W\in\mathbb{R}^{D\times{}H}$ and $V\in\mathbb{R}^{D\times{}H}$ visualized at the right-hand-side. Here, the set of all model parameters is $\Theta=(\vec{\pi},W,V)$
 and $N$ denotes the number of samples.
%
%  conditionally independent and drawn from a regular distribution of the exponential family with mean value parameters $\bar{W}_d$ and $\bar{V}_d$ determined by   
%  
%  conditionally independent and drawn from a regular distribution of the exponential family with mean value parameters $\bar{W}_d$ and $\bar{V}_d$ determined by weights $W_{d1}, \ldots, W_{d4}$ and $V_{d1}, \ldots, V_{d4}$, respectively.
}
\label{fig_Network}
\end{figure}
%
%
%
%\begin{small}
\begin{align}\label{eta}
\vec{\tilde{\eta}}_d(\vec{s},\Theta) := \vec{\Phi}\big(\bar{W}_d(\vec{s}, \Theta), \bar{V}_d(\vec{s}, \Theta)\big), \quad d = 1, \ldots, D.
\end{align}
%\end{small}

The mapping $\vec{\Phi}$, as mentioned before, is specific to the choice of observable distribution \eqref{modelEq3} which is a regular exponential family distribution. Although a closed-form formulation of 
the function is only available for some instances of the family, 
%the function can only trivially be computed in closed-form for some instances, 
the function is well-defined as long as the noise distribution \eqref{modelEq3} has a minimal representation property (the assumption that we consider here). Thus, such a general link function is well-defined for the proposed data models using $\bar{W}_d(\vec{s},\Theta)$ and $\bar{V}_d(\vec{s},\Theta)$ (also compare Fig.~\ref{fig_Network}).
For the case of arbitrary $L$, we will define this link using $L$ parameters and subsequently $L$ matrices (see Sec.~\ref{generalEF_MCA} further below). To complete the definition of our generative models, it now remains to define the functions $\bar{W}_d(\vec{s},\Theta)$ and $\bar{V}_d(\vec{s},\Theta)$ 
which determines the influence of latents on the observable distributions.
Before that, let us elaborate the notation above using a preeminent example of the exponential family distributions.

\begin{example}
\label{example1}
%\noindent{\bf Example~1:}
To provide an intuition for the link defined by (\ref{eta}), consider Gaussian observation noise. In that case, sufficient statistics and natural parameters are given by $\vec{T}(y) = (y, y^2)^T$ and $\vec{\eta} = (\frac{\mu}{\sigma^2}, \frac{-1}{2 \sigma^2})^T$ (where $\mu$ and $\sigma^2$ are the mean and variance of the Gaussian distribution, respectively). By expressing the mean value parameters $\vec{w}$ in terms
of the natural parameters $\vec{\eta}$, we get:
\begin{align}\label{Gaussian_ex1}
%\vec{w}=\left(\begin{array}{c} \bar{W} \\ \bar{V} \end{array}\right) = \frac{1}{4\eta_2^2} \left(\begin{array}{c} -2\eta_1\eta_2\\ \eta_1^2-2\eta_2 \end{array}\right)=\left(\begin{array}{c}...\\ ...
\vec{w}=\left(\begin{array}{c} \mu \\ \mu^2+\sigma^2 \end{array}\right) = \frac{1}{4\eta_2^2} \left(\begin{array}{c} -2\eta_1\eta_2\\ \eta_1^2-2\eta_2 \end{array}\right).
\end{align}
Given the equation above, the inverse mapping $\vec{\Phi}$ can be computed in closed-form (in this case) and is given by:
\begin{align}\label{Gaussian_ex2}
\vec{\Phi}(\vec{w} ) = \frac{1}{  2(w_2 - w_1^2)} \left(\begin{array}{c} 2w_1 \\ -1 \end{array}\right). %=\left(\begin{array}{c}...\\ ... \end{array}\right)
\end{align}
Using the definition of the link function in (\ref{eta}), the coupling of latents to observables is consequently given by:
\begin{align}\label{etaexamplePre}
\vec{\tilde{\eta}}_d(\vec{s},\Theta) &= \vec{\Phi}\big(\bar{W}_d(\vec{s}, \Theta), \bar{V}_d(\vec{s}, \Theta)\big)\\
&= \frac{1}{2 \big(\bar{V}_d(\vec{s}, \Theta) - \bar{W}^2_d(\vec{s}, \Theta)\big)} \left(\begin{array}{c} 2\,\bar{W}_d(\vec{s}, \Theta) \\ -1 \end{array}\right). \nonumber
\end{align}
In order to recover standard linear SC, we can complete the definition of $\vec{\tilde{\eta}}_d(\vec{s}, \Theta)$ by using parameters $(\sigma^2,W)$ with
a global, scalar $\sigma^2\in\mathbb{R}^+$ and $W\in\mathbb{R}^{D \times H}$ and further setting:
\begin{align}\label{etaexampleStandardSC}
\bar{W}_d(\vec{s}, \Theta) = \sum_h W_{dh}s_h\,\phantom{iii}\mbox{and}\phantom{iii}
%\bar{V}_d(\vec{s}, \Theta) = \sigma^2 + \sum_h W_{dh}s_h \label{DefV}
\bar{V}_d(\vec{s}, \Theta) = \sigma^2 + \bar{W}_d^2(\vec{s}, \Theta), %\label{DefV}
\end{align}
which then results in:
 % $\vec{\tilde{\eta}}_(\vec{s}, \Theta) = (W\vec{s})_d / \sigma^2$ and $\tilde{\eta}_2(\vec{s}, \Theta) = - 1 / 2 \sigma^2$
\begin{align}\label{etaexampleStandardNormSC}
\vec{\tilde{\eta}}_d(\vec{s}, \Theta) = \left(\begin{array}{c} (W\vec{s} \,)_d / \sigma^2 \\[1mm] - 1 / 2 \sigma^2 \end{array}\right).
\end{align}
That is, we recover the standard SC parametrization with $\mu_d=(W\vec{s}\,)_d$ as the mean of observable $d$ and $\sigma^2$ as its variance (with $\sigma^2$ being the same for all $d$). 
\end{example}

\subsection{Definition of a Maximum Superposition Model}
%{\bf Non-linear Superposition Model.} 
%

In Example~\ref{example1}, given a
matrix $W\in\mathbb{R}^{D \times H}$, the mean $\mu_d$ of observable $d$ was given by $(W\vec{s}\,)_d$. Our definition of the
link (\ref{eta}) from latents to observables is sufficiently flexible to also allow for types of latent
combinations other than the weighted linear sum that is very commonly used (e.g., for \mbox{p-PCA}, sparse coding etc).
In this respect, let us first ask the question what aspects of a given observable distribution the two functions $\bar{W}_d$ and $\bar{V}_d$ shall determine. Most commonly, the latents determine the {\em mean} of observable $d$ using summation (again as, e.g., for \mbox{p-PCA}, sparse coding etc). 
For many members of the exponential family, the first mean value parameter coincides with the mean of the corresponding exponential family distribution (that is
we have $T_1(y)=y$). Note that if there exists an $l$ with $T_l(y)=y$ (which is not always the case), we will assume this sufficient statistics'
component to be listed at the first position ($l=1$). If $T_1(y)=y$, then (following from our definition) the function $\bar{W}_d(\vec{s}, \Theta)$ alone directly determines the observable mean:
\begin{align}
\mu_d &=\, \mathbb{E}_{p ( y;\,\vec{\tilde{\eta}}_d(\vec{s},\Theta) ) } [ y ] =\, \mathbb{E}_{ p ( y;\,\vec{\Phi} ( \bar{W}_d(\vec{s},\Theta), \bar{V}_d(\vec{s},\Theta) ) ) } [ y ] \cr
&=\, \mathbb{E}_{ p(y;\,\vec{\Phi} ( \bar{W}_d(\vec{s}, \Theta), \bar{V}_d(\vec{s}, \Theta) ) ) }  [ T_1(y) ] = \bar{W}_d(\vec{s}, \Theta), \label{basicWa}
\end{align}
%In Example~1, given a
%matrix $W\in\mathbb{R}^{D \times H}$, the mean $\mu_d$ of observable $d$ was given by $(W\vec{s})_d$. Our definition of the
%link from latents to observables (\ref{eta}) is sufficiently flexible to also allow for other types of superpositions
%that can be defined by the functions $\bar{W}_d(\vec{s}, \Theta)$ or $\bar{V}_d(\vec{s}, \Theta)$.
%In this respect the question arises what aspects of a given observable distribution the two functions shall determine. 
%Most commonly the latents determine the {\em mean} of observable $d$ using (e.g., for \mbox{p-PCA}, sparse coding etc). For many members
%of the exponential family, the first mean value parameter coincides with the mean of the corresponding exponential family distribution (that is
%we have $T_1(y)=y$). If this is the case, the function $\bar{W}_d(\vec{s}, \Theta)$ alone directly determines the observable mean:
%%
%\begin{align}
%%\phantom{iiiiiiiiiiimmmiiii}\textstyle
%\mu_d &= <y>_{ p(y;\vec{\Phi} ( \bar{W}_d(\vec{s}, \Theta), \bar{V}_d(\vec{s}, \Theta) ) }\nonumber\\
%      &= <T_1(y)>_{ p(y;\vec{\Phi} ( \bar{W}_d(\vec{s}, \Theta), \bar{V}_d(\vec{s}, \Theta) ) } = \bar{W}_d(\vec{s}, \Theta) \label{basicWa}
%\end{align}
%
where the last step follows from (\ref{phi_function}) and the fact that $w_1 = \bar{W}_d (\vec{s}, \Theta)$.  
The function $\bar{W}_d(\vec{s}, \Theta)$ can then be either a weighted summation $\bar{W}_d(\vec{s}, \Theta)=(W\vec{s}\,)_d$ as in Example~\ref{example1} or a non-linear function of the latents.
For our purposes, we will follow previous work \citep[][]{LuckeSahani2008,BornscheinEtAl2013,SheikhEtAl2019} and demand that the mean of an observable distribution be given by a weighted maximum instead of a weighted sum. More concretely, we will replace (e.g., for the Gaussian or Gamma distribution):
\begin{align}\label{maxBasic}
\ \nonumber\\[-3ex]
 \bar{W}_d(\vec{s},\Theta) = \textstyle(W\vec{s}\,)_d = \sum_h W_{dh}s_h\phantom{ii}\mbox{by}\phantom{ii}\bar{W}_d(\vec{s},\Theta) = \textstyle\max_h\{ W_{dh}s_h \}. 
\end{align}
Figure \ref{fig_linearMaxCombination} illustrates the differences between weighted summation and weighted maximization 
of the latents using artificial generative fields in the form of bars. Maximization instead of summation has been suggested before for acoustic data \citep[][]{Roweis2003,LuckeSahani2008,SheikhEtAl2019} and for visual data \citep[][]{BornscheinEtAl2013} where it aligns more closely with the actual data generating process. As summation of generative fields is also referred to as a {\em linear superposition} of generative fields, maximization can be considered as a {\em non-linear superposition}. In the following, summation and maximization are, respectively, referred to linear or maximum {\em superposition models}.
%
% are therefore in the following also referred to as linear or non-linear {\em superposition model}
%
%
\begin{figure}[t!]
\centering
\includegraphics [scale=0.5]{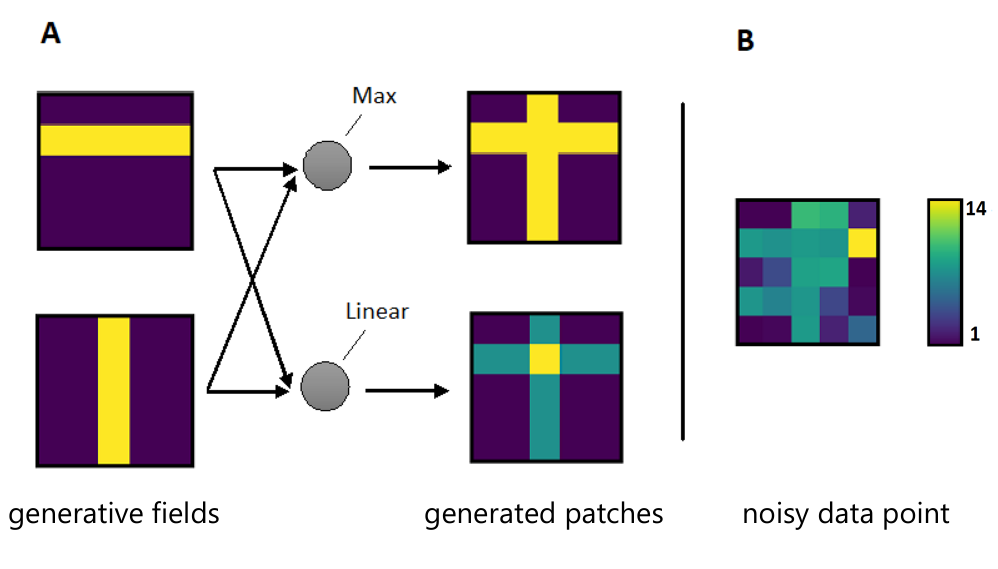}
  \caption{Illustration of non-linear vs. linear superposition of generative fields. \textbf{A} Two generative fields in the form of bars are combined through
  maximization and summation to generate data. \textbf{B} Example of a noisy data point which is generated using the Exponential-MCA model (i.e.,  we used the patches generated by the maximum superposition in \textbf{A} together with Exponential noise).}
\label{fig_linearMaxCombination}
\end{figure}
%
% and showed persistent superiority over linear superposition for some datasets, specifically in occlusion models (see \cite{BornscheinEtAl2013}). 
Importantly for the purposes of this study, the specific properties of maximization will enable us to derive a generic parameter optimization
procedure for the family of data models \eqref{modelEq1}-\eqref{modelEq3}.
%any regular distribution of the exponential family.

Equation (\ref{maxBasic}) would cover members of the exponential family such as Gaussian and Gamma (as $T_1(y) = y$ holds for these distributions). However, to be applicable to the whole exponential family, the definition of the maximum superposition has to be further generalized. 
%
 %(also $\bar{V}(\vec{s}, \Theta)$ should be defined). 
% 
The challenge is that, in general, the sufficient statistics $\vec{T}(y)$ may not have an element that is proportional to $y$,
and also a definition for the function $\bar{V}_d(\vec{s}, \Theta)$ is required. As an example which makes a generalization necessary
consider the Beta distribution where $T_1(y) = \log(y)$ and $T_2(y) = \log(1-y)$. As a consequence, the definition
of the maximum superposition has to involve both functions $\bar{W}_d(\vec{s},\Theta)$ and
$\bar{V}_d(\vec{s},\Theta)$. Nevertheless, it is possible to define $\bar{W}_d(\vec{s},\Theta)$ and 
$\bar{V}_d(\vec{s},\Theta)$ such that the mean of the observables is determined by a maximum superposition. 
For this purpose, consider a weight matrix $M(\Theta)$ with $D \times H$ entries which can potentially be a non-trivial function of the parameters~$\Theta$. Given a latent vector $\vec{s}$, we now require the mean $\mu_d$ of an observable $d$ to be given by:
\begin{align}\label{basicWc}
%\phantom{iiiiiiiiiiimmmiiii}\textstyle
\mu_d &= \mathbb{E}_{ p(y;\, \vec{\Phi} ( \bar{W}_d(\vec{s}, \Theta), \bar{V}_d(\vec{s}, \Theta) ) ) }  [ y ] \stackrel{!}{=} \mathrm{max}_{h} \{M_{dh}(\Theta)\, s_h\}.
\end{align}
Clearly, if $T_1(y)=y$, we can satisfy the demand by choosing $M_{dh}(\Theta)=W_{dh}$ and further setting $\bar{W}_d(\vec{s}, \Theta)=\textstyle\max_h\{ W_{dh}s_h \}$. From (\ref{basicWa}), we can consequently obtain:
\begin{align}
\mu_d \,=\, \mathbb{E}_{ p(y;\,\vec{\Phi} ( \bar{W}_d(\vec{s}, \Theta), \bar{V}_d(\vec{s}, \Theta) ) ) }  [  y ] \,=\, \bar{W}_d(\vec{s}, \Theta) %\cr
      \,=\, \hspace{0ex}\textstyle\max_h\{ W_{dh}s_h \} \,=\, \mathrm{max}_{h} \{M_{dh}(\Theta)\, s_h\}, \label{basicWd}
\end{align}
which fulfils (\ref{basicWc}). 
%
%%
%Clearly, if $T_1(x)=x$, we can satisfy the demand by choosing $\bar{W}_d(\vec{s}, \Theta)=\textstyle\max_h\{ W_{dh}s_h \}$, which follows
%from (\ref{basicWa}):
%\begin{align}
%%\phantom{iiiiiiiiiiimmmiiii}\textstyle
%\mu_d &= < y >_{ p(y;\vec{\Phi} ( \bar{W}_d(\vec{s}, \Theta), \bar{V}_d(\vec{s}, \Theta) ) ) }
%      = \bar{W}_d(\vec{s}, \Theta) = \hspace{0ex}\textstyle\max_h\{ W_{dh}s_h \}\,.\label{basicWd}
%\end{align}
%%
%Hence, in this case, $M_{dh}(\Theta)=W_{dh}$. 
For the general case, nonetheless, the definition of $M(\Theta)$ has to be more elaborate in order to apply for any exponential family distribution.
%So far, our derivation relied on the sufficient statistic $T_1(y)$ being equal to $y$. %, see (\ref{muintstepzero}). 
%%For the general case, let us first define the matrix $M(\Theta)$ before we show that it fulfils (\ref{basicWc}).
As mentioned before, we consider two $D \times H$ matrices $W$ and $V$ as part of the model parameters $\Theta$, i.e. $\Theta = (\vec{\pi}, W, V)$. We can then define $M(\Theta)$ as follows:
\begin{align}
\forall d,h:\ \ M_{dh}(\Theta) :=\, F(W_{dh},V_{dh}) \phantom{ii}\mbox{where}\phantom{ii} F(w,v) =\, \mathbb{E}_{p(y;\,\vec{\Phi}(w,v))}  [ y ]. \label{meanf_eq}
%
%\label{meanf_eq2}
%\ \mbox{where}\ &f(\vec\eta) = \mathbb{E}  [ y\rangle_{p(y;\vec{\eta})}
\end{align}  
%
%
%A definition of $M(\Theta)$ which is generally applicable for the whole exponential family has to be more elaborate.
%So far, our derivation relied on the sufficient statistics $T_1(y)$ being equal to $y$. %, see (\ref{muintstepzero}). 
%For the general case, let us first define a matrix $M(\Theta)$ before we show that it fulfills (\ref{basicWc}).
%We consider (as part of the model parameters $\Theta$) two matrices $W,V\in\mathbb{R}^{D \times H}$
%and define $M(\Theta)\in\mathbb{R}^{D \times H}$ as follows:
%%
%\begin{align}
%\forall d,h:\ \ M_{dh}(\Theta) &=\ F(W_{dh},V_{dh}) \phantom{ii}\mbox{where}\phantom{ii} F(w,v) &=\ <y>_{p(y;\vec{\Phi}(w,v))}. \label{meanf_eq}
%%
%%\label{meanf_eq2}
%%\ \mbox{where}\ &f(\vec\eta) = <y>_{p(y;\vec{\eta})}
%\end{align}
%
Using matrix $M(\Theta)$, we now define our mappings $\bar{W}_d(\vec{s},\Theta)$ and $\bar{V}_d(\vec{s},\Theta)$ as follows:
\begin{align}\label{maxh}
\bar{W}_d(\vec{s},\Theta) &:= W_{dh(d,\vec{s}, \Theta)},\ \ \bar{V}_d(\vec{s},\Theta) := V_{dh(d,\vec{s}, \Theta)}, \\[1mm]
\mbox{where}\phantom{iiii}h(d,\vec{s}, \Theta) &:= \mathrm{argmax}_{h} \{M_{dh}(\Theta)\, s_h\}. \nonumber
\end{align}
That is, we define the functions $\bar{W}_d(\vec{s},\Theta)$ and $\bar{V}_d(\vec{s},\Theta)$ via an index $h(d,\vec{s}, \Theta)$
which selects, for a given observable $d$, the latent with the maximal value of $M_{dh}(\Theta)$.
Using the relations in (\ref{meanf_eq}) and (\ref{maxh}) for the link (\ref{eta}) completes the definition of the non-linear superposition model.
The model represents a generalization of the superposition model (\ref{basicWd}) suitable for the whole exponential family. To see this, we derive:
%
%such that we obtain:
%we derive:
%
\begin{align}
\mu_d &= \mathbb{E}_{p(y; \, \vec{\tilde{\eta}}_d(\vec{s},\Theta))}  [ y ] \nonumber
      = \mathbb{E}_{ p(y; \, \vec{\Phi} ( \bar{W}_d(\vec{s},\Theta), \bar{V}_d(\vec{s},\Theta) ) ) }  [ y ] \\
      &= F( \bar{W}_d(\vec{s},\Theta), \bar{V}_d(\vec{s},\Theta) ) 
      = F( W_{dh(d,\vec{s},\Theta)}, V_{dh(d,\vec{s},\Theta)} ) \nonumber\\
      &= M_{dh(d,\vec{s},\Theta)}(\Theta) = \textstyle\max_h\{ M_{dh}(\Theta)\,s_h \},
\end{align}
which again fulfils (\ref{basicWc}) but this time for the general case. 
In virtue of (\ref{meanf_eq}) and (\ref{maxh}), we do {\em not} have to require that there exists a sufficient statistic $T_1(y)=y$,
i.e., no further restrictions to an exponential family distribution have to be demanded. 
%
%\changedJL{We never "set" $F(w,v)$, it's always defined as in (\ref{meanf_eq}), I changed accordingly.}
If $T_1(y)=y$, then definition (\ref{meanf_eq}) results in $F(w,v)=w$.
%we set $F(w,v)=w$. 
Consequently, we drop back to the easier special case of $M_{dh}(\Theta)=W_{dh}$.

%\changedJL{Keep following paragraph, maybe rephrase.}
%The definitions of $\bar{W}_d(\vec{s},\Theta)$ and $\bar{V}_d(\vec{s},\Theta)$ are relatively technical.
%However, they define a link $\vec{\tilde{\eta}}_d(\vec{s},\Theta)$ for the generative model which is
%a consistent generalization of the maximum non-linearity (\ref{basicWd}) used, e.g., for Gaussian observables \cite{LuckeEggert2010}:
%Independently of the choice of the observable distribution, the link $\vec{\tilde{\eta}}_d(\vec{s},\Theta)$ defined by 
%(\ref{meanf_eq}) and (\ref{maxh}) ensures that the latents change the means of the observables always according to the maximum
%superposition model. 
%The link from latents to observables remains in this sense consistently defined for all noise
%distributions of the exponential family.

The definition of functions $\bar{W}_d(\vec{s},\Theta)$ and $\bar{V}_d(\vec{s},\Theta)$ are relatively technical but allow
for a general definition of generative models using any exponential family (with a minimal representation property). Our LVM defines
%
%here such that we could accurately define the family of generative models and subsequently a set of update equations for the models' parameters which will %be discussed afterwards. 
%They 
%
a link $\vec{\tilde{\eta}}_d(\vec{s},\Theta)$ from latents to observables for the proposed data models which is
a consistent generalization of the maximum non-linearity (\ref{basicWd}) used, e.g., for Poisson \citep[][]{LuckeSahani2008} and Gaussian \citep[][]{LuckeEggert2010} observables. 
In our case, the proposed link remains consistently defined for all noise
distributions of the exponential family. 
That is, 
independently of the choice of noise distribution, the link $\vec{\tilde{\eta}}_d(\vec{s},\Theta)$ defined by 
(\ref{meanf_eq}) and (\ref{maxh}) ensures that the latents change the \textit{mean} of the observables always according to the maximum
superposition model. 
This is in contrast to the previous LVMs for exponential family distributions such as EF-PCA \citep[][]{CollinsEtAl2002}, Bayesian EF-PCA \citep[][]{MohamedEtAl2009} and EF-SC \citep[][]{lee2009exponential} 
where the latents set the natural parameters of the observables using  
a linear superposition model (we will elaborate on these studies further below). 
%\changedJL{HERE IS A GOOD POINT TO FURTHER ELABORATE ON GLMs and ... .}

%The definition of functions $\bar{W}_d(\vec{s},\Theta)$ and $\bar{V}_d(\vec{s},\Theta)$ using 
%
To summarize, the definition of the proposed family of generative models is given by 
%Equations \eqref{meanf_eq} and \eqref{maxh}
%finalize the definition of the proposed family of generative models. The complete model family is defined by 
(\ref{modelEq1})-(\ref{modelEq3}) and link (\ref{eta}) with (\ref{meanf_eq}) and (\ref{maxh}). In analogy to previous models defined using the maximum non-linearity \citep[][]{LuckeSahani2008}, we will refer to the family of models as {\em Exponential Family Maximal Causes Analysis} (EF-MCA). 
While the EF-MCA data models are very general, we will later see that the chosen parametrization results in generic equations for parameter updates. Before that, we will illustrate the above technical definitions using again the Gaussian distribution as an example.

%The definition of the functions $\bar{W}_d(\vec{s},\Theta)$ and $\bar{V}_d(\vec{s},\Theta)$ using Eqns.\,\eqref{meanf_eq} and \eqref{maxh}
%finalizes the definition of the family of generative models we here consider. The complete model family is defined by (\ref{modelEq1}) to (\ref{modelEq3}) and link (\ref{eta}) with (\ref{meanf_eq}) and (\ref{maxh}). In analogy to previous models defined using the maximum non-linearity \cite{LuckeSahani2008}, we will refer to family of models as {\em exponential family MCA} model (ef-MCA). 
%While the ef-MCA data model is very general, we will later see that the chosen parameterization nevertheless results in generic equations for parameter updates. Before, let us consider a canonical example of the family.

\begin{example}
\label{example2}
%\noindent{\bf Example~2:} 
Consider the Gaussian case of Example~\ref{example1}. For the corresponding EF-MCA model, the matrix $M(\Theta)$ is (because of $T_1(y)=y$) given by $M_{dh}(\Theta)=W_{dh}$ and
\begin{align}\label{etaexampletwoA}
\bar{W}_d(\vec{s}, \Theta) &= W_{dh(d,\vec{s},\Theta)} = \mathrm{max}_h \{ W_{dh}s_h \} \\
\bar{V}_d(\vec{s}, \Theta) &= V_{dh(d,\vec{s},\Theta)}.\label{etaexampletwoB}
\end{align}
Using Eqn. (\ref{phi_function}) and the $\vec{\Phi}$ function (\ref{Gaussian_ex2}), we again obtain $\vec{\tilde{\eta}}_d(\vec{s}, \Theta)$
similar to (\ref{etaexamplePre}) but this time with $\bar{W}_d(\vec{s}, \Theta)$ and $\bar{V}_d(\vec{s}, \Theta)$ given by (\ref{etaexampletwoA}) and (\ref{etaexampletwoB}), respectively.
%
%\begin{align}
%
%\vec{\tilde{\eta}}_d(\vec{s}, \Theta) = \left(
%\begin{array}{c} 
%\bar{W}_d(\vec{s}, \Theta) \big/ \big( \bar{V}_d(\vec{s}, \Theta) - \bar{W}^2_d(\vec{s}, \Theta) \big)\\[1mm]
%-1 \big/ 2\,\big( \bar{V}_d(\vec{s}, \Theta) - \bar{W}^2_d(\vec{s}, \Theta) \big)                         
%\end{array}\right),\nonumber
%\end{align}
In terms of the standard Gaussian parametrization, we would thus obtain as noise model:
\begin{align}
p(y_d;\vec{\tilde{\eta}}_d(\vec{s}, \Theta)) &= \mathcal{N}(y_d;\bar{W}_d(\vec{s}, \Theta),\bar{V}_d(\vec{s}, \Theta)-\bar{W}^2_d(\vec{s}, \Theta)). \nonumber\vspace{-1ex}
%p(y_d;\vec{\tilde{\eta}}_d(\vec{s}, \Theta)) &= \prod_d\mathcal{N}(y_d;\bar{W}_d(\vec{s}, \Theta),\bar{V}_d(\vec{s}, \Theta)-\bar{W}^2_d(\vec{s}, \Theta)).
%
\end{align}
The matrix $V\in\Theta$ thus allows for the latents to also parameterize the variance. We will later see how this parametrization can also be changed back to the
normal parametrization using a matrix $\sigma^2_{dh}=V_{dh}-W^2_{dh}$ instead of $V_{dh}$ 
(using $\sigma^2_{dh}$ is more closely aligned with the conventional parametrization). Note, however,  
%the transition from a scalar variance (as usually used)
%to a matrix as variance. 
that we here use a variance element $\sigma^2_{dh}$ for each $d$ and $h$ which is in contrast to a global scalar variance considered in \eqref{EqnSCPrior}-\eqref{EqnSCNoise}.  
A generalization assuming a matrix~\big(with elements $\sigma^2_{dh}$\big), to the best of our knowledge, has not been considered before and the data model presented in (\ref{modelEq1})-(\ref{modelEq3}) together with (\ref{eta}), (\ref{meanf_eq}) and (\ref{maxh}) is the first non-linear LVM which allows for the training of two matrices (one corresponding to the mean of observables, $W$, and another which parameterizes the variances, $V$). This feature makes the data model more flexible compared to previous approaches (e.g. \cite{LuckeEggert2010,PuertasEtAl2010}) and provides more information about the given dataset. The generalized Gaussian model has previously been applied to model the statistics of natural
image patches \citep[][]{mousavi2020ddDictionary} where standard Gabor-like generative fields were obtained together with variance profiles
per latent variable. The new statistical properties captured by the novel matrix $V$ can predict richer response properties of simple cells in primary visual cortex (V1) than conventional modeling using one matrix $W$ for the mean responses (we refer the reader to \citet[][]{mousavi2020ddDictionary} and Sec.~\ref{sec_natural_image_patches} for more information). 
If we wanted to enforce scalar variances (as conventionally used), we could define $\bar{V}_d(\vec{s}, \Theta)$ as in (\ref{etaexampleStandardSC}). 
\end{example}

\section{Relation to Previous Work on Gaussian and Non-Gaussian Observables}
\label{Relation_to_Previous_Work}
%\noindent{\bf Relation to previous work.}
%
%Gaussian distributions do not model many types of data well.
It has been the goal of many previous approaches
to derive learning algorithms for non-Gaussian data \citep[see, e.g.,][]{CollinsEtAl2002,MohamedEtAl2009,lee2009exponential,mohamed2011bayesian,giryes2014sparsity,lu2016sparse,ZhouEtAl2012,salmon2014poisson,ChiquetEtAl2018,tolooshams2020convolutional}.
Often the derivation of an update procedure for just a specific non-Gaussian distribution is aimed at \citep[e.g., by assuming Poisson distributed observables for count data,][]{salmon2014poisson}. However, a subset of these approaches do aim at more generally deriving updates allowing for a range of different non-Gaussian observable distributions \citep[see, e.g.,][]{CollinsEtAl2002,MohamedEtAl2009,lee2009exponential,LiTao2010,ChiquetEtAl2018,tolooshams2020convolutional}.
%
%specific non-Gaussian data is assumed \citep[as Poisson for count data,][]{salmon2014poisson} but a subset of these approaches does aim 
%
%A subset of these approaches does not aim at deriving an update procedure just for a specific non-Gaussian distribution \citep[as Poisson for count data,][]{salmon2014poisson},
%but instead aim at more generally deriving updates for a range of different non-Gaussian observable distributions \citep[see, e.g.,][]{CollinsEtAl2002,MohamedEtAl2009,lee2009exponential,LiTao2010,ChiquetEtAl2018,tolooshams2020convolutional}.
%for observables of the exponential family 
%
%to generalize learning algorithms developed for the Gaussian case to other distributions or to exponential family distributions \citep[see, e.g.,][]{CollinsEtAl2002,MohamedEtAl2009,lee2009exponential,mohamed2010sparse,mohamed2011bayesian,giryes2014sparsity,lu2016sparse,ZhouEtAl2012,salmon2014poisson}.
%
%As for many types of data a Gaussian assumption 
%Latent variable models with observables following an exponential family distribution 
%

As an important reference for this work, the approach of exponential family sparse coding \citep[EF-SC;][]{lee2009exponential}, for instance, defines a SC approach for noise distributions of the exponential family. The work chooses a link function that assumes the natural
parameter of an exponential family distribution to be given
by a linear superposition of generative fields. Essentially, if $\tilde{\eta}_d(\vec{s},\Theta)$ is the natural parameter of a
one-parameter distribution, then $\tilde{\eta}_d(\vec{s},\Theta)=(W\vec{s}\,)_d$ \citep[in][$W$ is denoted by $B$]{lee2009exponential}.
%
%assumes a linear superposition of latent variables to determine the natural parameters of a given exponential family distribution.
%
With a link defined in this way and a standard Laplacian prior, the posteriors of the model maintain a mono-modal shape such that a maximum a-posteriori (MAP) approximation can be applied \citep[][]{lee2009exponential}. Nevertheless, MAP-based training of SC is usually restricted to the generative fields; like for standard SC, neither prior parameters nor parameters of the noise model are inferred. Cross-validation may be used in addition but is only feasible for very few additional parameters. 

Setting the natural parameters of a given exponential family distribution using a linear superposition may (while mathematically convenient) not in general capture the true data generating process well.  
Which of the modeling choices is preferable (conventional linear superposition or a specific maximum non-linearity) does very much depend on the data, of course. As an example for the conventional linear modeling choice, consider Poisson noise as a one-parameter  exponential family distribution. Then the natural parameter is given by $\eta=\log(\lambda)$, where $\lambda$ is the mean.  
Using a linear superposition to set the natural parameter (as for EF-SC according to \cite{lee2009exponential}) would then mean $\lambda_d=\exp\big(\sum_{h}W_{dh}s_{h}\big)$ for an observable $d$. The same choice for the link was used for the Poisson PCA approach \citep[][]{salmon2014poisson} which focuses on Poisson noise and reported state-of-the-art denoising performance when it was first suggested \cite[also see][]{giryes2014sparsity}.  
The example shows that the conventional modeling choice results in a specific post-linear non-linearity to set the mean of the observable distribution. 
Other distributions of the exponential family will result in other types of non-linearities if the natural parameter is
assumed to be set by a linear sum.  
Examples of data that may well be described by a non-linear generative model include, e.g., low-intensity images where image pixels may be assumed to approximately follow a Poisson distribution whose mean is determined by a non-linearity that is close to a maximum function (also compare Poisson FA; \citealp[][]{ZhouEtAl2012}). Acoustic waveforms, on the other hand, suggest the use of conventional linear modeling (with an appropriate noise distribution).

Most approaches like Poisson PCA \citep[][]{salmon2014poisson,ChiquetEtAl2018} or EF-PCA \citep[][]{CollinsEtAl2002,SajamaOrlitsky2004,LiTao2010}, use links from latents to observables very similar to EF-SC \citep[][]{lee2009exponential}.  
%
% and these links basically 
%all represent a transfer of methods from supervised learning to unsupervised. 
%
Concretely, the method for generalizing
linear regression to non-Gaussian data as introduced for the GLM \citep[][]{NelderWedderburn1972,McCullaghNelder1989} was 
used in definitions of the corresponding models for unsupervised learning.
%
% GLMs make specific modeling choices such as (A)~using linear summation followed by a non-linear link to determine observable means, and (B)~in the case of two parameter distributions, a specific dependency of the observable variance on
%the mean is usually considered. 
%
Furthermore, the parameter optimization procedures for unsupervised learning are themselves also often strongly inspired by GLM parameter optimization \citep[][and others]{CollinsEtAl2002,lee2009exponential}. EF-PCA thus uses (like Poisson PCA)
a deterministic optimization for parameter learning. 

The original EF-PCA work only considered one-parameter distributions of the exponential family, however, with sufficient statistics proportional to $y$ \citep[see][Eqn.\,2]{CollinsEtAl2002}; this restriction excludes the Pareto, Laplace and other one-parameter distributions. 
A fully Bayesian approach to exponential family PCA \citep[][]{MohamedEtAl2009} also uses the GLM-inspired linear superposition to set the natural parameters but replaces MAP-based training with a Hybrid Monte Carlo approach (which also allows for learning parameters other than the weight matrix). 
Bayesian EF-PCA is formulated for distributions of the exponential family with one and several parameters, but the concrete algorithm uses a sufficient statistic equal to the identity function \citep[see][Eqn.\,7]{MohamedEtAl2009}. EF-SC \citep[][]{lee2009exponential} can be considered as more general in its initial formulation. However, the used generative fields are all defined to set one-parameter distributions (their matrix $B$ sets the natural parameter of an exponential family distribution). Also for the Gaussian case, which is used as an introductory example, a single-parameter Gaussian (with known variance) is chosen. Closely related to such assumptions is the restriction to the natural exponential family as explicitly discussed by some approaches \citep[][]{ChiquetEtAl2018,tolooshams2020convolutional}. 
%
%%%%%     Previous version!!!!
% Notably, none of the previous approaches Poisson PCA \cite{salmon2014poisson}, Poisson factor analysis \cite{ZhouEtAl2012}, EF-PCA \cite{CollinsEtAl2002}, Bayesian EF-PCA \cite{MohamedEtAl2009} and EF-SC \cite{lee2009exponential} investigate any other than one-parameter distributions of the exponential family. To our best knowledge, they all only use Bernoulli and Poisson distributions for any of their numerical experiments, which includes EF-SC. We are also not aware of any contribution which uses any of the previous results for other than Bernoulli or Poisson distributions. Although, e.g.\ EF-SC, would in principle be able to treat other distributions, the meaning of setting all natural parameters using a linear superposition of latents is arguably unclear. For instance, even for a Gaussian (as a standard two-parameter distribution) the second moment would be determined by two linear sums. Such a case (and more intricate cases resulting for other two-parameter distributions) have not been discussed or treated previously. Approaches such as EF-SC are, furthermore, restricted in other aspects of learning. As a deterministic MAP-based learning is used, only weight parameters are updated (and neither additional noise nor prior parameters).
%

%
Notably, none of the above mentioned previous approaches, i.e.\ Poisson PCA \citep[][]{salmon2014poisson}, Poisson FA \citep[][]{ZhouEtAl2012}, EF-PCA \citep[][]{CollinsEtAl2002,SajamaOrlitsky2004,ChiquetEtAl2018}, Bayesian EF-PCA \citep[][]{MohamedEtAl2009} and EF-SC \citep[][]{lee2009exponential}, used any other than one-parameter distributions of the exponential family in their contributions as a concrete example.
Moreover, and to the best knowledge of the authors, the numerical experiments of these contributions (including EF-SC) actually only used the Bernoulli, Exponential or the Poisson distribution (usually two of the three).
Although, e.g.\ EF-SC, may in principle be able to treat other distributions of the exponential family, the meaning of setting the natural parameters of a distribution with more than one parameter using a linear superposition of the latents is arguably unclear. For instance, even for the Gaussian (as a standard two-parameter distribution), the second moment would be determined by two linear sums. Such a case (and more intricate cases resulting from other two-parameter distributions) have not been discussed or treated by EF-SC nor
by EF-PCA or similar approaches. In addition, note that approaches such as EF-SC are restricted beyond being focused on one-parameter distributions. As a deterministic
MAP-based learning is used, only weight parameters are updated (and neither additional noise nor prior parameters are learned).
%
%by earlier approaches to unsupervised learning.
%

In contrast to such previous approaches for exponential family observables, the here studied EF-MCA models define the links from latents to the observables using a maximum superposition model. Moreover, the link function always sets the mean of the observable distribution using the auxiliary weight matrix $M(\Theta)$ which is defined by the two dictionaries $W$ and $V$ (if the distribution has two parameters). 
This novel definition enables us to also treat more intricate cases such as Gamma and Beta distributions (and, indeed, any two parameter distributions of the exponential family).
% (and, importantly, any regular distribution of the family can be treated). 
We will later show how our definitions can be generalized to exponential family distributions with an arbitrary number of parameters $L$,
and how a generic parameter optimization can be derived using the EM algorithm. The derived algorithm will consequently
apply for the entire set of (regular) exponential family distributions. 
%the whole exponential family, and that how a generic parameter optimization can be derived using the EM algorithm. 
We note, however, that the proposed EF-MCA models exploit binary latents (here in the form of Bernoulli distributed priors), which may be restrictive in some settings. Also, the proposed link function defined by (\ref{eta}) with (\ref{meanf_eq}) and (\ref{maxh}) imposes
a specific non-linearity in the form of a (weighted) maximum function. While these assumptions will be advantageous for our focus on different (possibly intricate) noise models, other data models in the literature show other advantages. Examples are, for instance, models with emphasis on learnable non-linearities to link latents to observables; we discuss non-linear ICA \citep[][]{HyvarinenPajunen1999} and VAEs \citep[][]{KingmaWelling2014} further below.
%
%This can also be compared with, e.g., non-linear ICA approaches that assume non-linearity in the form of any invertible function that can be used \citep[see, for e. g., ][]{HyvarinenPajunen1999}.
%\changedJL{I may edit here further.}  

One example of studies which did investigate other than Bernoulli and Poisson distributions is the work by \cite{khan2010variational} where a generalized mixture of factor analyzers model is presented that allows for addressing both continuous and discrete data (Gaussian and Multinomial distributions have been treated). Moreover, they established a new variational EM algorithm 
%is established in that contribution 
based on a lower bound of the Multinomial likelihood for fitting the model to the considered mixed data. Although the model can, in principle, deal with binary and categorical data along with the continuous case, it is not defined to subsume all distributions of the exponential family. Further examples are LVMs with the focus on estimating the noise distribution for observables of unknown types \citep[][]{valera2017automatic,MolinaEtAl2018,vergari2019automatic}. Those approaches can typically select between a predefined set of observable distributions to estimate the best fit given a set of (usually) heterogeneous observations. Grounded on
sum-product networks with assumptions such as piecewise polynomial leaf distributions, generalized Bayesian inference procedures are developed \citep[e.g.][]{MolinaEtAl2018}. Such approaches are consequently very different from approaches discussed above or the here investigated approach.
They are related to the considered approach here as being relatively generally defined, however, and we will come back to common features when we discuss noise type estimation in Sec.~\ref{subsec:noise_type_estimation}.
%
%using the here developed approach. %in principle be addressed. 

Still other lines of research for unsupervised learning follow deep generative modeling approaches with 
variational autoencoders \citep[VAEs;][]{KingmaWelling2014,RezendeEtAl2014} being among the most
popular such models. In order to relate latents to observables, VAEs make
use of non-linear functions which are parameterized using deep neural networks (DNNs). The non-linear functions are used to define an encoding model that, in the standard case, performs amortized inference (the encoder) and a generative model (the decoder). 
% for
%the encoder model (for amortized inference) and for the generative model (the decoder). 
VAEs owe
much of their capabilities to the used DNNs, while prior and observation noise distributions 
for standard VAEs are very elementary (i.e., diagonal Gaussians). Nevertheless, VAEs can in principle be used to also model
non-Gaussian observables, and standard stochastic gradient ascent for optimization is not limited
to the Gaussian case. 
%a generally applicable optimization for VAEs. 
However, already optimization of Gaussian VAEs is considered challenging:
Major challenges are inherited from DNN optimization (starting with choices for decoder architecture, many hyper-parameters etc.)
paired with the same choices for encoder DNN(s) and necessarily stochastic optimization.
Concrete examples of non-Gaussian VAEs use one specific observation noise, usually in the form of elementary distributions
such as Bernoulli or Poisson \citep[e.g.,][]{GronbechEtAl2020,JangEtAl2017,BabyBourlard2021}. 
%
%
% \todo{Double-check BabyBourlard2021};
% and VAEs with binary or Poisson distributed observables
% have been investigated in the literature
%
The latents are for standard forms of VAEs continuous and usually Gaussian distributed.  
But variants with discrete latents have also been investigated \citep[e.g.,][]{JangEtAl2017,MaddisonEtAl2017,Rolfe2017,YinEtAl2019,BerlinerEtAl2021,DrefsEtAl2022a}. 
% which provides a relation to the here considered EF-MCA models.
% Finally, regarding the assumption of binary latents as used by our approach, such prior assumptions have also been considered for VAEs.
% While not conventional, VAEs with discrete latents have in recent years attracted
% increasing attention for different applications \citep[e.g.,][]{JangEtAl2017,MaddisonEtAl2017,Rolfe2017,YinEtAl2019,BerlinerEtAl2021,DrefsEtAl2022a}
% as binary latents represent a natural choice or a desired encoding for many types of data.

In a related field,  recent works on non-linear ICA approaches focus on identifiability which can be guaranteed under certain latent dependency conditions including temporal dependencies \citep[see, e.g.,][]{SprekelerEtAl2014,HyvarinenMorioka2016,HyvarinenMorioka2017,HyvarinenEtAl2019}. 
Commonly for these approaches, a contrastive learning (CL) algorithm is applied in order to estimate the non-linear ICA
model: \citet[][]{HyvarinenMorioka2016} use a time-contrastive learning (TCL), \citet[][]{HyvarinenMorioka2017} use a
permutation-contrastive learning (PCL), and \citet[][]{HyvarinenEtAl2019} use a generalized form of CL.
Identifiability in non-linear ICA is, in principle, possible also for non-Gaussian observation noise. Recent work explicitly specifies relatively general conditions for noise and identifiability in a framework for structured non-linear ICA \citep[][]{HalvaEtAl2021}. At the same time, the concrete realization of an algorithm within this framework uses Gaussian noise (which is explicitly stated as limitation). 
In another related work, \citet[][]{KhemakhemEtAl2020} investigate an identifiable VAE (iVAE) 
that bridges the gap between VAEs and non-linear ICA. 
The approach, however, assumes the noise distribution to be known and additive which 
likewise restricts the applicability of the model to Gaussian distributed observables. 
%given that the noise distribution is known and additive. 
%The realization of such assumptions likewise restricts the application of the model to Gaussian distributed observables. 
A later extension of iVAE (termed pi-VAE), was proposed by \citet[][]{ZhouWei2020} which assumes Poisson noise for the observables. 
In the pi-VAE approach, the mean of the Poisson distribution is set by a link function that is defined in terms of a general invertible function.

%\ \\
%OLD:
%\ \\
%
%\changedJL{TO HM AND JD: Do revise the following three sentences appropriately.}
%In addition and unlike previous non-linear ICA models mentioned above, 
%% (that usually assume data to be fully observed and non-noisy), 
%iVAE investigates noisy observations given that the noise distributions is known.  Later, 
%\citet[][]{ZhouWei2020} proposed an extension of this model called pi-VAE, which assumes Poisson noise for the observables, and investigated its application to neural population data. The approach assumes the mean of the Poisson distribution to be set by a link function that is defined in terms of a general invertible function applied on the latents. 
%
%\ \\
%\ \\

\begin{table}[t!]
\caption{Summary of related previous work as discussed in Secs.~\ref{Sec_intro} and \ref{Relation_to_Previous_Work}.}
\centering
\newcommand\spac{-0.1cm}
\renewcommand{\aboverulesep}{0ex}
\renewcommand{\belowrulesep}{0ex}
\newcolumntype{K}[1]{>{\centering\arraybackslash}p{#1}}
\resizebox{\linewidth}{!}{
\begin{tabular}[c c c]{@{\vline} c c c @{\vline}}%{K{2.0cm}K{6.5cm}K{5.8cm}}
\toprule
\multirow{2}{*}{Acronym}&\multirow{2}{*}{Model Details}&\multirow{2}{*}{Reference}\\
&&\\
\bottomrule
\multicolumn{3}{c}{\vspace{\spac}}\\
\midrule
\multicolumn{3}{@{\vline} c @{\vline}}{Examples of LVMs with a linear component structure}\\
\midrule
EF-PCA & exp. family obser. / dist. with sufficient stat. proportional to $y$  & \citet[][]{CollinsEtAl2002} \\
\ Bayes. EF-PCA &  exp. family obser. / sampling-based optim.  / dist. with sufficient stat. proportional to $y$ & \citet[][]{MohamedEtAl2009}\\
EF-SC &  exp. family obser. / MAP approx. / only one parameter dist.  used & \citet[][]{lee2009exponential}\\
Poiss. FA & Poiss. obser. / sampling-based optim.   & \citet[][]{ZhouEtAl2012}\\
Linear ICA &  contin. obser. / non-Gaussian sources & \citet[][]{HyvarinenOja2000}\\
Binary ICA & binary obser. / non-stationary sources / linear mixing of sources followed by binarization & \citet[][]{HyttinenEtAl2022}\\
NLSPCA & Poiss. obser. / non-local (sparse) PCA / designed for Poiss. denoising & \citet[][]{salmon2014poisson}\\
SPDA & Poiss. SC / designed for Poiss. denoising & \citet[][]{giryes2014sparsity}\\
\midrule
\multicolumn{3}{c}{\vspace{\spac}}\\
\midrule
\multicolumn{3}{@{\vline} c @{\vline}}{Examples of LVMs with a specific non-linear component structure}\\
\midrule
non-linear ICA & non-linear blind source separa. / slow feature analy. / temporal correl. & \citet[][]{SprekelerEtAl2014}\\
non-linear ICA & exp. family sources / temporal non-stationary structure / TCL & \ \citet[][]{HyvarinenMorioka2016}\,\,\\
non-linear ICA & non-Gaussian and temporally dependent stationary sources / PCL & \ \citet[][]{HyvarinenMorioka2017}\,\,\\
non-linear ICA & conditionally independent sources (given an auxiliary var.) / gen. CL & \citet[][]{HyvarinenEtAl2019}\\ % condi. exp. family sources / 
%Non-linear ICAs?? &  ?? / ?? & \citet[][]{!!}\\    depen. of sources on auxiliary var. 
OCA & Gauss. obser. / occlusion non-linearity / var. to control the depth of objects & \citet[][]{HennigesEtAl2014} \\
MCA & Poiss. obser. / truncated approximation / binary latents & \citet[][]{LuckeSahani2008} \\
ss-MCA & Gauss. obser. / trunc.\,approx. / non-linear and spike-and-slab SC & \citet[][]{SheltonEtAl2012} \\
G-MCA & Gauss. obser. / truncated variational approximations / binary latents & \citet[][]{SheikhEtAl2019} \\
{\bf EF-MCA} & exp. family obser. / generic M-steps and EVO / binary latents / $L$ dictionaries & our contribution\\
\midrule
\multicolumn{3}{c}{\vspace{\spac}}\\
\midrule
\multicolumn{3}{@{\vline} c @{\vline}}{Examples of related DNNs for Gaussian / non-Gaussian observables}\\
\midrule
DCEA & natural exp. family obser. / trained both supervised and unsupervised & \citet[][]{tolooshams2020convolutional}\\
DenoiseNet & Poiss. obser. / designed for Poiss. denoising & \citet[][]{remez2017deep} \\
TVAE & Gauss. obser. / variational training based on EVO / binary latents & \citet[][]{DrefsEtAl2022a}\\
iVAE & noisy (Gauss.) obser. / identifiable VAE (unifies  non-linear ICA and VAE) & \citet[][]{KhemakhemEtAl2020} \\
pi-VAE & Poiss. obser. / extends the iVAE model to Poisson noise & \citet[][]{ZhouWei2020} \\
\midrule
\end{tabular}
}
\label{table:ModelComparisons}
\end{table}

Other than non-linear ICA and VAEs, there also have been investigations of more conventional deep models, e.g., on convolutional neural networks (CNNs), that focused on non-Gaussian observables.
% In addition to the VAEs, convolutional neural networks (CNNs) have been also studied with (non-)Gaussian observables.
% In related works,
For example, contributions such as \citet[][]{remez2018class,remez2017deep} proposed an approach for denoising images corrupted with Gaussian and Poisson noises
using supervised training.
Also, the study by \citet[][]{tolooshams2020convolutional} considered observables of non-Gaussian distributions using convolutional autoencoders trained in both a supervised or an unsupervised manner. The method explicitly investigated the natural exponential family distributions which is a specific subset of the exponential family. 

For supervised learning (i.e., regression), GLMs do explicitly consider different noise distributions and specifically distributions of the exponential family with more than one parameter. GLMs can, for instance, use a specific dependency of the observable variances on the mean to treat two-parameter distributions. 
However, regression approaches for specific distributions (such as the Beta distribution) still require further considerations related to GLM techniques in order to realize concrete regression algorithms \citep[see, e.g.,][for a concrete discussion]{FerrariCribariNeto2004}. 
%\todo{find the right place for it!}

In summary, the here considered LVM (while based on a standard bipartite graphical model) is exceptionally general in allowing for any
(regular) exponential family distribution to be used to model observation noise.  
For a general treatment of such observables, the crucial challenge is, of course, the derivation of a generic parameter optimization procedure, which we will address in the following. 
Tab.~\ref{table:ModelComparisons} presents a short summary of the approaches discussed here. 

\section{Maximum Likelihood}
Having defined the family of EF-MCA models in Sec.\,\ref{sec:nonlinear_sc_models}, we now seek parameters $\Theta$ that optimize the models for a given set of $N$ independent and identically
distributed (i.i.d.) data points $\vec{y}^{\,(1)},\ldots,\vec{y}^{\,(N)}$.
%
%
%\subsection{Maximum Likelihood}
%
We follow a standard maximum likelihood approach and seek parameters $\Theta$ that optimize the log-likelihood $\mathcal{L}(\Theta)=\sum_n \log \big(p(\vec{y}^{\,(n)}\,| \Theta)\big)$. 
We use the EM algorithm \citep[][]{DempsterEtAl1977,NealHinton1998} which instead of maximizing the log-likelihood directly, optimizes a lower bound known as the \textit{free energy} \citep[][]{NealHinton1998} or the \textit{evidence lower bound} \citep[ELBO;][]{BleiMcAuliffe2008}. 
For the family of generative models defined in \eqref{modelEq1}-\eqref{modelEq3}, the ELBO is given by:
%
%he free energy function $\mathcal{F}(q, \Theta)$ (or the ELBO) is then defined as follows: % (a.k.a., {\em free energy} or ELBO) given by:
%
%Here, we further consider the Expectation Maximization (EM) framework introduced, e.g., by \citep{NealHinton1998} and optimize the free energy function as a lower bound of the likelihood:
\begin{align}\label{EqnFreeEnergy}
\mathcal{F}(q, \Theta) :=& \sum_{n} \sum_{\vec{s}} q^{(n)}(\vec{s}\,) \big\{\sum_d\log\big(p(y_d^{(n)};  \vec{\tilde{\eta}}_d(\vec{s}, \Theta)) \big) + \sum_h\log(p(s_h\,| \Theta)) \big\} + \mathcal{H}(q),
\end{align}
where $q^{(n)}(\vec{s}\,)$ for each $n$ is a variational distribution, $q = (q^{(1)}, \ldots, q^{(N)})$ and $\mathcal{H}(q)$ is the Shannon entropy term.  
The ELBO is the objective we seek to optimize, and its relation to the log-likelihood is, for completeness, given in Appendix~\ref{appendix_ELBO}. 
In our learning algorithm, the ELBO is optimized iteratively w.r.t.\ distributions $q$ (the E-step) and w.r.t.\ the model parameters $\Theta$ (the M-step) as
commonly done.
%; where the terms `E-step' and `M-step' derive from the optimization steps of the EM algorithm. 
It has been shown, e.g. by \cite{NealHinton1998}, that the ELBO is maximized w.r.t.\ distributions $q$ if, for each $\vec{s}$ and $n$, we equate $q^{(n)}(\sVec \,)$ to the posteriors, i.e. $q^{(n)}(\vec{s}\,)=p(\vec{s}\,|\,\vec{y}^{\,(n)},\Theta)$.
Indeed, the distributions $q^{(n)}(\vec{s}\,)$ can be the exact posteriors for tractable models (we refer to this case as \textit{full EM} or \textit{exact EM}) or they can represent variational approximations (in this case we refer to \textit{variational EM} and consequently \textit{variational E-steps}). The terms (variational) free energy $\mathcal{F}(q, \Theta)$ or ELBO are more common for intractable models, i.e., if the $q^{(n)}$ are approximations of intractable exact posteriors. 
%
%approximate posteriors are used for $q$. % are not tractable.

For tractable models, the E-steps reduce to the computation of exact posteriors. For our EF-MCA model, a generic form to compute the E-steps is, therefore, already given by Bayes' rule as long as the model is sufficiently small scale (small $H$). For larger scale EF-MCA models, Sec.~\ref{sec:estep} will address how generic
variational E-steps can be used. 
%
%
% formulas in terms of #small scale models (small $H$) are  
%
% is also referred to as the variational lower bound.
%Distributions $q^{(n)}(\sVec)$ equal to the posteriors, $q^{(n)}(\sVec)=p(\sVec\,|\,\yVec^{(n)},\Theta)$, represent the optimal choice for $q^{(n)}(\sVec)$ in the E-step.
%
%As the E-steps (or variational E-steps) are generally defined, they apply also for any choice of noise distribution in \eqref{modelEq2} of an EF-MCA data model. 
%
% can be generally applied for any EF-MCA data model. 
%
%Nonetheless, 
%
For all EF-MCA models, the central challenge that we have to address here is, however, the derivation of parameter update equations in the M-step, which is pertinent to the noise distribution.  We therefore pursue such a task in the next section and importantly show that the non-linear link function \eqref{maxh} enables us to derive a set of concise and generally applicable updates that 
maximizes the ELBO \eqref{EqnFreeEnergy}.
\subsection{Parameter Update Equations -- The M-step}
Following the standard optimization procedure, we will set the derivatives of $\mathcal{F}(q, \Theta)$
w.r.t.\ all model parameters to zero, and derive parameter update rules from the resulting equation systems.
For this, first note that the derivatives of $\mathcal{F}(q, \Theta)$ will contain derivatives of $\bar{W}_d(\vec{s}, \Theta)$ and $\bar{V}_d(\vec{s}, \Theta)$.
Then, the following applies for these derivatives:
\begin{align}
\frac{\partial}{\partial W_{dh}} \bar{W}_d(\vec{s}, \Theta) &= \mathcal{A}_{dh}(\vec{s}, \Theta), &\frac{\partial}{\partial W_{dh}} \bar{V}_d(\vec{s}, \Theta) = 0, \label{derivatives_WbarVbar1}\\
\frac{\partial}{\partial V_{dh}} \bar{V}_d(\vec{s}, \Theta) &= \mathcal{A}_{dh}(\vec{s}, \Theta), &\frac{\partial}{\partial V_{dh}} \bar{W}_d(\vec{s}, \Theta) = 0, \label{derivatives_WbarVbar2}
\end{align}
where \vspace{-3ex}
\begin{align}\label{Adh}
\phantom{iiii}\mathcal{A}_{dh}(\vec{s}, \Theta) := \begin{cases} 1 \quad &h = h(d,\vec{s}, \Theta)\\
0\quad &\mathrm{otherwise} \end{cases}
\end{align}
\noindent{}which follows simply from considering the cases $h=h(d,\vec{s}, \Theta)$ and $h\not=h(d,\vec{s}, \Theta)$ separately. We will later discuss the properties of the function $\mathcal{A}_{dh}(\vec{s}, \Theta)$ in Appendix~\ref{app:theorem} (in Lemma~\ref{lemma1}). Now, given the equations above, we present the main result of this study: We derive concise equations for $W$ and $V$ which guarantee that all derivatives of $\mathcal{F}(q, \Theta)$ w.r.t.\ $W_{dh}$ and $V_{dh}$ vanish. These equations can then be used for parameter updates in each M-step. For the sake of readability,  however,  we only provide a proof sketch of the theorem now and present the full proof in Appendix~\ref{appendixA0}. Also, note that we here discuss the case for $L = 2$, and the general case will be further presented in Sec.~\ref{generalEF_MCA} and Appendix~\ref{appendixA1}.
\begin{thm}\label{theorem1}
Consider an EF-MCA data model \eqref{modelEq1}-\eqref{modelEq3} with $p(y;\vec{\eta}\,)$ being a regular exponential family distribution with \mbox{$L=2$}.
Furthermore, let the parameters $\Theta$ contain the matrices $W$ and $V$ with $D \times H$ entries and $\vec{\tilde{\eta}}_d(\vec{s},\Theta)$
to be defined as in Eqns. \eqref{eta} and \eqref{meanf_eq}-\eqref{maxh}. Then, the derivatives of the ELBO \eqref{EqnFreeEnergy} w.r.t.\ all matrix elements $W_{dh}$ and $V_{dh}$ are zero if for all $d$ and $h$:
\begin{align}\label{Update_W}
W_{dh} &= \frac{\sum_{n = 1}^N \mathbb{E}_{q^{(n)}}  [ \mathcal{A}_{dh}(\vec{s},\Theta) ] \, T_1(y_d^{(n)})}{\sum_{n = 1}^N \mathbb{E}_{q^{(n)}}  [ \mathcal{A}_{dh}(\vec{s},\Theta) ] },
\end{align}
and %\vspace{-1ex}
\begin{align}
V_{dh} &= \frac{\sum_{n = 1}^N \mathbb{E}_{q^{(n)}}  [ \mathcal{A}_{dh}(\vec{s},\Theta) ] \, T_2(y_d^{(n)})}{\sum_{n = 1}^N \mathbb{E}_{q^{(n)}}  [ \mathcal{A}_{dh}(\vec{s},\Theta) ] }, \label{Update_V}
\end{align}
where $\mathcal{A}_{dh}(\vec{s},\Theta)$ is given by \eqref{Adh}. 
\end{thm}
\sproof \,
%\begin{proof}
To proof the theorem, we decisively make use of the relations in \eqref{A_eta_expected_value} and the maximum superposition defined in \eqref{eta} and \eqref{meanf_eq}-\eqref{maxh}, and set the derivatives of the ELBO w.r.t. the dictionary elements $W_{dh}$ and $V_{dh} $ to zero. 
In short, if we abbreviate $ \bar{W}_d(\vec{s}, \Theta)$ and $\bar{V}_d(\vec{s}, \Theta)$ by $\bar{W}_d$ and $\bar{V}_d $, then for a single dictionary element $W_{dh}$ we have (using the chain rule):

\begin{align}
\frac{\partial}{\partial W_{dh}} & \log\Big(p\big(y_d; \vec{\tilde{\eta}}_d(\vec{s}, \Theta)\big)\Big) = \frac{\partial}{\partial W_{dh}} \log(p(y_d; \vec{\Phi}(\bar{W}_d, \bar{V}_d)))\\
%&= \sum_{l=1}^2 \Big( \frac{\partial}{\partial W_{dh}} \Phi_l(\bar{W}_d, \bar{V}_d) \Big) \Big( \frac{\partial}{\partial \eta_l} \log(p(y_d; \vec{\eta})) \big|_{\vec{\eta} = \vec{\Phi}(\bar{W}_d,\bar{V}_d)} \Big) \cr
%&= \sum_{l=1}^2 \Big( \frac{\partial}{\partial W_{dh}} \bar{W}_d \Big) \Big( \frac{\partial}{\partial w} \Phi_l(w, \bar{V}_d) \big|_{w = \bar{W}_d} \Big) \Big( \frac{\partial}{\partial \eta_l} \log(p(y_d; \vec{\eta})) \big|_{\vec{\eta} = \vec{\Phi}(\bar{W}_d,\bar{V}_d)} \Big) \cr
%&+ \sum_{l=1}^2 \Big( \underbrace{\frac{\partial}{\partial W_{dh}} \bar{V}_d}_{=\ 0} \Big) \Big( \frac{\partial}{\partial v} \Phi_l(\bar{W}_d, v) \big|_{v = \bar{V}_d} \Big) \Big( \frac{\partial}{\partial \eta_l} \log(p(y_d; \vec{\eta})) \big|_{\vec{\eta} = \vec{\Phi}(\bar{W}_d,\bar{V}_d)} \Big) \cr
&= \sum_{l=1}^2 \mathcal{A}_{dh}(\vec{s}, \Theta) \Big( \frac{\partial}{\partial w} \Phi_l(w, \bar{V}_d) \big|_{w = \bar{W}_d} \Big) \Big( T_l(y_d) - \frac{\partial A(\vec{\eta}\,)}{\partial \eta_l} \Big) \Big |_{\vec{\eta} = \vec{\Phi}(\bar{W}_d, \bar{V}_d)}\,.\nonumber
\end{align}
For this expression we now substitute $\frac{\partial A(\vec{\eta}\,)}{\partial \eta_1} =\, \mathbb{E}_{p(y;\, \vec{\eta}\,)}  [ T_1(y) ]$ and $\frac{\partial A(\vec{\eta}\,)}{\partial \eta_2} =\, \mathbb{E}_{p(y;\, \vec{\eta}\,)}  [ T_2(y) ]$ from Eqn.\,\eqref{A_eta_expected_value}. Furthermore, we use that for any function $g$ of $\bar{W}_d$ and $\bar{V}_d$ applies:
\begin{align}
\mathcal{A}_{dh}(\vec{s}, \Theta)\, g ( \bar{W}_d, \bar{V}_d ) = \mathcal{A}_{dh}(\vec{s}, \Theta)\, g(W_{dh}, V_{dh}), \,\nonumber
\end{align}
see Lemma~\ref{lemma1} in Appendix~\ref{app:theorem}. We then obtain:
\begin{align}
\frac{\partial}{\partial W_{dh}} \log\Big(p\big(y_d; \vec{\tilde{\eta}}_d(\vec{s}, \Theta)\big)\Big) =& \sum_{l=1}^2  \mathcal{A}_{dh}(\vec{s}, \Theta) \Big( \frac{\partial}{\partial w} \Phi_l(w, V_{dh}) \big|_{w = W_{dh}} \Big) \cr
&\times \Big( T_l(y_d) - \mathbb{E}_{p(y;\, \vec{\Phi}(W_{dh}, V_{dh}))}  [ T_l(y) ] \Big).
\end{align}
%
%Note that the above equation depends on parameter $\vec{s}$ of the hidden states only through the function $\mathcal{A}_{dh}(\vec{s}, \Theta)$. This is an important property of Lemma~\ref{lemma1} that alleviates the complexity of the aforementioned equation and enables us to extract a set of concise update equations for dictionaries $W$ and $V$. To see this, we derive (using Eqn. \eqref{EqnFreeEnergy}):
Now, using Eqn.\,\eqref{EqnFreeEnergy} and the definition of the mean value parameters, $\vec{w} :=\, \mathbb{E}_{p(y;\, \vec{\Phi}(\vec{w}))}  [\vec{T}(y) ]$, we get:
\begin{align}
\frac{\partial}{\partial W_{dh}}\mathcal{F}(q, \Theta) =& \sum_{n} \sum_{\vec{s}} q^{(n)}(\vec{s}\,) \Big( \sum_{d'} \frac{\partial}{\partial W_{dh}} \log \big( p(y_{d'}^{(n)}; \vec{\tilde{\eta}}_{d'}(\vec{s}, \Theta)) \big) \Big)\\
=& \sum_{n} \sum_{\vec{s}} q^{(n)}(\vec{s}\,) \sum_{l=1}^2 \mathcal{A}_{dh}(\vec{s}, \Theta) \Big( \frac{\partial}{\partial w} \Phi_l(w, V_{dh}) \big|_{w = W_{dh}} \Big)\cr
&\times \Big( T_l(y_d^{(n)})\, - \mathbb{E}_{p(y;\, \vec{\Phi}(W_{dh}, V_{dh}))}  [ T_l(y) ] \Big)\cr
=& \sum_{l=1}^2 \Big( \frac{\partial}{\partial w} \Phi_l(w, V_{dh}) \big|_{w = W_{dh}} \Big) \sum_{n} \sum_{\vec{s}} q^{(n)}(\vec{s} \,)\, \mathcal{A}_{dh}(\vec{s}, \Theta) \cr
&\times \Big( T_l(y_d^{(n)})\, - \mathbb{E}_{p(y;\, \vec{\Phi}(W_{dh}, V_{dh}))}  [ T_l(y) ] \Big). \nonumber
%=& 
%\ \Big( \frac{\partial}{\partial w} \Phi_1(w, V_{dh}) \big|_{w = W_{dh}} \Big) \sum_{n} \mathbb{E}_{q^{(n)}}  [ \mathcal{A}_{dh}(\vec{s}, \Theta) ] \Big( T_1(y_d^{(n)}) - W_{dh} \Big)\\
%&+ \Big( \frac{\partial}{\partial w} \Phi_2(w, V_{dh}) \big|_{w = W_{dh}} \Big) \sum_{n} \mathbb{E}_{q^{(n)}}  [ \mathcal{A}_{dh}(\vec{s}, \Theta) ] \Big( T_2(y_d^{(n)}) - V_{dh} \Big).\nonumber
\end{align}
The expression is equal to zero, i.e. $\frac{\partial \mathcal{F}(q, \Theta)}{\partial W_{dh}} = 0$, if the following equations are satisfied:
%
%\noindent{}Now, independently of the functions $\frac{\partial}{\partial w} \Phi_l(w, V_{dh}) \big|_{w = W_{dh}}$ for $l = 1, 2$, derivative of the ELBO w.r.t. $W_{dh}$ is zero, i.e. $\frac{\partial \mathcal{F}}{\partial W_{dh}} = 0$, if applies:  
%
\begin{align}
\sum_n \mathbb{E}_{q^{(n)}}  [ \mathcal{A}_{dh}(\vec{s}, \Theta) ] \big(T_1(y_d^{(n)}) - W_{dh}\big) &= 0,\ \ \mbox{and}\\
\sum_n \mathbb{E}_{q^{(n)}}  [ \mathcal{A}_{dh}(\vec{s}, \Theta) ] \big(T_2(y_d^{(n)}) - V_{dh}\big) &= 0.
\end{align}
%
%and
%\begin{align}
%\sum_n \langle\mathcal{A}_{dh}\rangle_{q^{(n)}} \big(T_2(y_d^{(n)}) - V_{dh}\big) = 0
%\end{align}
Then, the above equations yield \eqref{Update_W} and \eqref{Update_V}, which completes the proof. \QED
%Also note that one can obtain similar results by setting $\frac{\partial \mathcal{F}}{\partial V_{dh}} = 0$. 
%\end{proof}

%\noindent{}Fulfilling equations \eqref{Update_W} and \eqref{Update_V} guarantees vanishing derivatives, and in practice the equations can be used
%to increase the variational lower bound to (possibly local) maxima. We do remark, however, that we have not strictly proven that equations \eqref{Update_W} and \eqref{Update_V} correspond to a maxima (and not minima or saddle points). In addition, observe that a straightforward outcome of the foregoing theorem is when the distribution does contain a sufficient statistics proportional to $y$, i.e.\ $T_1(y)=y$, that yields a further simplification of the algorithm:

\ \\

\noindent{}Fulfilling Eqns.~\eqref{Update_W} and \eqref{Update_V} guarantees vanishing derivatives and provides a generally applicable approach for updating $W$ and $V$ in each M-step. We do remark, however, that we have not strictly proven that \eqref{Update_W} and \eqref{Update_V} correspond to a maximum (and not a minimum or a saddle point). 
In this respect, one can further investigate the second derivatives of the ELBO \eqref{EqnFreeEnergy} at these stationary points to observe if Eqns.~\eqref{Update_W} and \eqref{Update_V} do in fact correspond to a maximum.  We will later discuss this point in  Appendix~\ref{Sec_second_derivative}. 
%We next follow previous work
%by using the derived equations directly in order to update the dictionaries $W$ and $V$ in each M-step. 
% because, in practice, such updates usually increase the free energy function
%to a (possibly local) maxima.
%In fact, one can obtain further details on the nature of the stationary points by computing second derivatives. Here we abstain from further analysis and follow previous work
%by using the derived equations directly in order to update the dictionaries $W$ and $V$ because, in practice, such updates usually increase the free energy function
%to a (possibly local) maxima.
%
%compute the second derivatives and perform the required tests to examine the considered critical points closely. 
%But this is far from our goal in this study and we hereafter follow previous work in order to update the dictionaries $W$ and $V$  using these equations. In practice, %nonetheless, we observed that the updates obtained by the above equations  
%do increase the free energy function to a (possibly local) maxima. 
%
Furthermore, we here emphasize that Eqns. \eqref{Update_W} and \eqref{Update_V} are valid for any regular two-parameter distribution of the exponential family which includes Gaussian, Gamma, Beta and many more, i.e., a large variety of noise models is covered. Importantly, the above theorem reveals that the same functional form is obtained for the parameters of all these distributions under the generative model \eqref{modelEq1}-\eqref{modelEq3} (note that we here considered the $L=2$ case but the general case will be  discussed further below). A straightforward outcome of the foregoing theorem is when the distribution does contain a sufficient statistic proportional to $y$, i.e.\ $T_1(y)=y$. This specific form yields a further simplification:
%which is presented in the following:
%
%
\begin{corollary}
Prerequisites as for Theorem~\ref{theorem1}. If the distribution $p(y;\vec{\eta}\,)$ has a sufficient statistic $T_1(y)=y$, then the condition for $W_{dh}$ is given by:
%If the noise distribution is within the subclass $\mathcal{C}$ of the exponential family, then the same update equation for the elements of the weight matrix $W$ is obtained as:
\begin{align}\label{oldupdate}
W_{dh} = \frac{\sum_{n = 1}^N \mathbb{E}_{q^{(n)}}  [ \mathcal{A}_{dh}(\vec{s},\Theta) ] \, y_d^{(n)}}{\sum_{n = 1}^N \mathbb{E}_{q^{(n)}}  [ \mathcal{A}_{dh}(\vec{s}, \Theta) ] }.
\end{align}
\end{corollary}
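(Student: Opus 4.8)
The plan is to derive the corollary as an immediate specialization of Theorem~1. First I would note that the stated prerequisites are exactly those of Theorem~1: an ef-MCA data model \eqref{modelEq1}--\eqref{modelEq3} with $L=2$, parameters $\Theta$ containing $W,V\in\mathbb{R}^{D\times H}$, and the link $\vec{\tilde{\eta}}_d(\vec{s},\Theta)$ defined by \eqref{eta} and \eqref{meanf_eq}--\eqref{maxh}. Hence Theorem~1 applies verbatim and guarantees that the derivative of the lower bound \eqref{EqnFreeEnergy} w.r.t.\ $W_{dh}$ vanishes whenever the fixed-point condition \eqref{Update_W} holds.

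The second and essentially only step is to substitute the extra hypothesis $T_1(y)=y$ into \eqref{Update_W}. Since then $T_1(y_d^{(n)})=y_d^{(n)}$ for all $n$ and $d$, the numerator $\sum_n <\mathcal{A}_{dh}(\vec{s},\Theta)>_{q^{(n)}} T_1(y_d^{(n)})$ becomes $\sum_n <\mathcal{A}_{dh}(\vec{s},\Theta)>_{q^{(n)}} y_d^{(n)}$, and \eqref{Update_W} turns into exactly \eqref{oldupdate}. No other quantity in \eqref{Update_W} depends on the form of $\vec{T}$, so the substitution is the whole content of the proof.

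If a more self-contained derivation is wanted, I would instead re-examine the M-step computation \eqref{theorem_Pderivative1}--\eqref{theorem_Pderivative4} under $T_1(y)=y$: in that case $F(w,v)=<y>_{p(y;\vec{\Phi}(w,v))}=w$ by the mean-value parameterization, so that $\bar{W}_d(\vec{s},\Theta)$ coincides with the mean $\mu_d$, and the stationarity equation $\sum_n <\mathcal{A}_{dh}(\vec{s},\Theta)>_{q^{(n)}}\big(T_1(y_d^{(n)})-W_{dh}\big)=0$ reads $\sum_n <\mathcal{A}_{dh}(\vec{s},\Theta)>_{q^{(n)}}\big(y_d^{(n)}-W_{dh}\big)=0$, which rearranges to \eqref{oldupdate}. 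I expect no real obstacle: the corollary is a trivial specialization, and the only point requiring a line of care is that the hypothesis $T_1(y)=y$ is used purely to rewrite the $W$-numerator and leaves the $V_{dh}$ update \eqref{Update_V} and the vanishing of $\partial\mathcal{F}/\partial V_{dh}$ untouched.
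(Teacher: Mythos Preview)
Your proposal is correct and matches the paper's treatment: the paper presents the corollary as a ``straightforward outcome of the foregoing theorem'' without a separate proof, and your argument---simply substituting $T_1(y)=y$ into \eqref{Update_W}---is exactly this direct specialization.
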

The corollary is instructive about previous contributions: It explains why the update equation for the original MCA data model \citep[][]{LuckeSahani2008} (which used Poisson noise) and the update equation for the later MCA models \citep[][]{LuckeEggert2010,BornscheinEtAl2013,SheikhEtAl2019} (which used Gaussian noise) are identical and given by (\ref{oldupdate}).

Finally, observe that Eqns. \eqref{Update_W} and \eqref{Update_V} do not represent
closed-form solutions for $W$ and $V$ since the right-hand-sides also depend on $W$ and $V$ through the function $\mathcal{A}_{dh}(\vec{s},\Theta)$. 
We thus follow previous work
%Following previous work 
\citep[e.g.,][]{LuckeSahani2008,LuckeEggert2010}, and use Eqns. \eqref{Update_W} and \eqref{Update_V} in the fixed-point sense and 
update (for each $d$ and $h$):
\begin{align}\label{update_W_real}
W^{\mathrm{new}}_{dh} = \frac{\sum_{n = 1}^N \mathbb{E}_{q^{(n)}}  [ \mathcal{A}_{dh}(\vec{s}, \Theta^{\mathrm{old}}) ] \, T_1(y_d^{(n)})}{\sum_{n = 1}^N \mathbb{E}_{q^{(n)}}  [ \mathcal{A}_{dh}(\vec{s}, \Theta^{\mathrm{old}}) ] },
\end{align} 
%\vspace{-1ex}
%and \vspace{-2ex}
%\ \\\vspace{5.5ex}
%and \vspace{-10ex}
and
\begin{align}\label{update_V_real}
V^{\mathrm{new}}_{dh} = \frac{\sum_{n = 1}^N \mathbb{E}_{q^{(n)}}  [ \mathcal{A}_{dh}(\vec{s}, \Theta^{\mathrm{old}}) ] \, T_2(y_d^{(n)})}{\sum_{n = 1}^N \mathbb{E}_{q^{(n)}}  [ \mathcal{A}_{dh}(\vec{s}, \Theta^{\mathrm{old}}) ] },
\end{align}
where also $q^{(n)}=q^{(n)}(\vec{s} \,;\Theta^{\mathrm{old}})$ depends on the old parameters.
If repeated updates result in the values of $W$ and $V$ to converge, then the converged values fulfill \eqref{Update_W} and \eqref{Update_V}, respectively.

To complete the M-step parameter updates, we can also derive updates for the prior parameters $\vec{\pi}$. Those derivations do not involve the specific form of the observables' distribution.
We can therefore use previous derivations \cite[e.g.,][]{HennigesEtAl2010,PuertasEtAl2010} and update $\vec{\pi}$ as follows:
\begin{align}\label{Update_pi}
\pi_h^{\mathrm{new}} = \frac{1}{N} \sum_{n = 1}^N \mathbb{E}_{q^{(n)}}  [ s_h ], \quad h = 1, \ldots, H.
\end{align}
%
%
%
%%%%%Equations \eqref{update_W_real}-\eqref{update_V_real} and \eqref{Update_pi} present a complete set of update equations for parameters $\Theta = (\vec{\pi}, W, V)$ of the proposed generative models. If the considered probability distribution has only one parameter (i.e. $L=1$), then $\Theta = (\vec{\pi}, W)$ and we only require to use \eqref{update_W_real} and \eqref{Update_pi} in the M-step. For other cases where $L > 2$, the same procedure can be applied to obtain the update rules. We will later investigate such a general case further below. 
%%%%%% But, before that, let us summarize the exact EM algorithm for training the proposed family of generative models. 
%%%%%The concrete EM algorithm for training the proposed data model is then given by Alg.~\ref{algorithm1} for a two-parameter distribution ($L=2$).
%%%%%%

\begin{algorithm}[h!]
\SetAlgoLined
 initialize model parameters $\Theta = (\vec{\pi}, W, V)$\;
 \Repeat{parameters $\Theta$ have sufficiently converged}{
 consider the inverse mapping $\vec{\Phi}$ defined in \eqref{phi_function}\;
 \eIf{the distribution $p(y;\vec{\eta}\,)$ has a sufficient statistic $T_1(y)=y$}{
   $M(\Theta) = W$\;
   }{
   %$\vec{\eta} = \vec{\Phi}(W, V)$\;
   consider the function $F$ in \eqref{meanf_eq} based on the relation of the natural parameters $\vec{\eta}$ and the mean of $p(y;\vec{\eta}\,)$\;
   $M(\Theta) = F(W, V)$;
  }
 \For{each vector $\vec{s}$ of the latent space}{
 \For{$d = 1 : D$}{
  $h(d, \vec{s}, \Theta) = \mbox{argmax}_h\{M_{dh}(\Theta) s_h\}$\;
  $\bar{W}_d = W_{dh(d, \vec{s}, \Theta)}$ and $\bar{V}_d = V_{dh(d, \vec{s}, \Theta)}$\;
  $\vec{\tilde{\eta}}_d = \vec{\Phi}(\bar{W}_d, \bar{V}_d)$;
  }
  \For{$n = 1 : N$}{
  $q^{(n)}(\vec{s}\,) = p(\vec{s}\,|\,\vec{y}^{\,(n)},\Theta)$\;
  % or alternatively use truncated posteriors \eqref{eq:TVEMq}\;
\For{$h = 1 : H$ and $d = 1 : D$}{
  compute $q^{(n)}(\vec{s}\,)s_h$\; % $\mathbb{E}_{q^{(n)}}[s_h]$
  compute $q^{(n)}(\vec{s}\,) \mathcal{A}_{dh}(\vec{s}, \Theta)$ where $\mathcal{A}_{dh}(\vec{s}, \Theta)$ is defined in \eqref{Adh}\; %$\mathbb{E}_{q^{(n)}}[\mathcal{A}_{dh}(\vec{s}, \Theta)]$
%  
%  compute $q^{(n)}(\vec{s}) s_h$\;
%  compute $q^{(n)}(\vec{s}) \mathcal{A}_{dh} (\vec{s}, \Theta)$ where $\mathcal{A}_{dh}(\vec{s}, \Theta)$ is defined in \eqref{Adh}\;
%  $\langles_h\rangle_{q^{(n)}} = \sum_{\vec{s}} s_h q^{(n)}(\vec{s})$\;
%  $\langle\mathcal{A}_{dh}\rangle_{q^{(n)}} = \sum_{\vec{s}} \mathcal{A}_{dh}  q^{(n)}(\vec{s})$, where $\mathcal{A}_{dh}$ is defined in \eqref{Adh}\;
  }
  }
  }
  update parameters $\Theta$ using \eqref{update_W_real}-\eqref{Update_pi}\;
 }
 \caption{EM for training EF-MCA data models (for $L=2$ parameter distributions)}
 \label{algorithm1}
\end{algorithm}

\subsection{Optimization of Variational Distributions -- The E-step}
\label{sec:estep}
The main focus of the previous section was the derivation of parameter updates in the M-step (Eqns.~\eqref{update_W_real}-\eqref{Update_pi}). 
%
% The M-steps do require expectation values w.r.t.\ full posteriors, i.e.\ $q^{(n)}(\vec{s}\,)=p(\vec{s}\,|\,\vec{y}^{\,(n)},\Theta)$.
These M-step equations contain expectation values w.r.t.\ distributions $q^{(n)}(\vec{s}\,)$ that can be the exact posteriors $p(\vec{s}\,|\,\vec{y}^{\,(n)},\Theta)$
for sufficiently small scale models (small $H$). 
%
%full posteriors, i.e.\ $q^{(n)}(\vec{s}\,)=p(\vec{s}\,|\,\vec{y}^{\,(n)},\Theta)$.
%While for small numbers $H$ of latent variables, such expectation values can be computed exactly,
%
Larger $H$ require approximate posteriors $q^{(n)}(\vec{s}\,)$ as the computational cost of evaluating expectations w.r.t.\ exact posteriors scales exponentially, i.e., with $2^H$. 
To enable efficient approximations in variational E-steps,
% of expectations (the E-step),
different
% approximation
approaches can be applied. For our purposes, we here use a variational
% E-step
approximation based on truncated posteriors as a family of variational distributions \citep[][]{LuckeEggert2010,LuckeSahani2008}.
Concretely, we use variational distributions $q^{(n)}$ defined as:
\begin{equation}\label{eq:TVEMq}
q^{(n)}(\sVec \mid  \KKn, \Theta) \defeq \frac{p(\vec{s}\,|\,\vec{y}^{\, (n)},\Theta)}{\sum\limits_{\sVec \,' \in \KKn} p \left(\sVec \,' | \, \yVecN, \Theta \right)}\delta ( \sVec \in \KKn ), \quad n = 1, \ldots, N,
\end{equation}
where the set $\mathcal{K}^{(n)}$ contains, for each data point $\vec{y}^{\,(n)}$, a fixed number of hidden states $\sVec$, and where the indicator function $\delta ( \sVec \in \mathcal{K}^{(n)} )$ is equal to 1 if $\sVec$ is in the set $\mathcal{K}^{(n)}$ (and 0 otherwise). 
Distributions (\ref{eq:TVEMq}) approximate full posteriors by truncating sums over the whole latent space to sums over those subsets $\KKn$ which ideally accumulate most of the posterior mass. 

% Such defined variational approaches
Variational approaches based on truncated posteriors
have been shown to result in very efficient optimization for mixture models \citep[][]{HirschbergerEtAl2022,ExarchakisEtAl2022} as well as LVMs \citep[e.g.,][]{SheikhLucke2016,SheltonEtAl2017,SheikhEtAl2019}.
%
% Concretely, if we use variational distributions $q^{(n)}$ defined by (\ref{eq:TVEMq})
When choosing $q^{(n)}$ as defined by \eqref{eq:TVEMq}
to approximate full posteriors, the expectation values $\mathbb{E}_{q^{(n)}}[\mathcal{A}_{dh}(\vec{s}, \Theta)]$ in (\ref{update_W_real}) and (\ref{update_V_real}) are, for the family of LVMs in \eqref{modelEq1}-\eqref{modelEq3}, obtained as follows:
\begin{align}\label{eq:TVEMdetail}
\mathbb{E}_{q^{(n)}}[\mathcal{A}_{dh}(\vec{s}, \Theta)] &= \frac{\sum\limits_{\sVec \in \KKn} p \left(\sVec, \yVecN \mid \Theta \right) \mathcal{A}_{dh}(\vec{s}, \Theta)}{\sum\limits_{\sVec \,' \in \KKn} p \left(\sVec \,', \yVecN \mid \Theta \right)} \\
&= \frac{\sum\limits_{\sVec \in \KKn} 
\exp \left( \sum_d \big( (\vec{\tilde{\eta}}_d(\vec{s},\Theta))^T \vec{T}(y_d^{(n)}) - A(\vec{\tilde{\eta}}_d(\vec{s},\Theta)) 
\big) \right) \mathrm{Bern}(\vec{s} \,; \vec{\pi}) \,  
\mathcal{A}_{dh}(\vec{s}, \Theta)}{\sum\limits_{\sVec \,' \in \KKn} 
\exp \left( \sum_d \big( (\vec{\tilde{\eta}}_d(\vec{s}\,',\Theta))^T \vec{T}(y_d^{(n)}) - A(\vec{\tilde{\eta}}_d(\vec{s}\,',\Theta)) 
\big) \right) \mathrm{Bern}(\vec{s}\,' ; \vec{\pi})
} \ ,  \nonumber
\end{align}
where $\mathrm{Bern}(\vec{s} \,; \vec{\pi})$ abbreviates the Bernoulli prior in \eqref{modelEq1}. Likewise, the expectation values $\mathbb{E}_{q^{(n)}}[s_h]$ in (\ref{Update_pi}) can be obtained (for each $h$) accordingly.

\subsubsection{Evolutionary Variational Optimization (EVO)}
\label{sec_EVO}
Crucial for variational optimization of truncated posteriors \eqref{eq:TVEMq} are strategies to efficiently and effectively populate the $\KKn$ sets with latent states capturing high posterior mass. 
% ``good'' approximation of the exact poste
% %
% The goodness and performance of the posterior approximation based on truncated variational distributions depends crucially on strategies to efficiently and effectively find  search  
% %
% The quality and performance of the approximation are procedures ...
% %
% Important for the quality and performance of the approximation are procedures ...
% Essential for the quality and 
To this end, here we apply a recently suggested strategy based on evolutionary algorithms (Evolutionary Variational Optimization, EVO; \citealt[][]{DrefsEtAl2022}).
% as the most recent form of these approaches.
%
The approach treats the sets $\KKn$ in Eqns.~\eqref{eq:TVEMq} and \eqref{eq:TVEMdetail} as variational parameters, which consequently yields the variational lower bound 
\eqref{EqnFreeEnergy} to be reformulated as follows \citep[see][for a concrete discussion]{Lucke2019}:
\begin{equation*}
  \mathcal{F}(\KK^{(1,\dots,N)}, \Theta) = \sum_{n} \log\Bigl(\sum_{\sVec\in\KKn} p(\yVecN,\sVec\mid\Theta)\Bigr).
\end{equation*}
Based on this form, the variational lower bound can be increased by replacing $\sVec\in\KKn$ with $\sVecNew \notin\KKn$ for each $n$ such that $p(\yVecN,\sVecNew\mid\Theta)>p(\yVecN,\sVec\mid\Theta)$. EVO, specifically, treats the bit vectors $\sVec$ as genomes and uses them to evolve $\sVecNew$ through application of elementary genetic operations including selection, mutation and crossover.
%
% Given (initial) sets $\KKn$, EVO evolves 
% of the EVO approach,
A requirement of EVO is that the model's joint $p(\yVecN,\sVec\mid\Theta)$ is analytically tractable in order to define a fitness function for the selection step. Apart from that, EVO is generically applicable, i.e., the approach does not involve model-specific derivations (in contrast to, e.g., mean-field approaches; \citealt[][]{JordanEtAl1999}).
For details, see \citep{DrefsEtAl2022,DrefsEtAl2022a}.
%
% Alongside its efficiency, EVO has the crucial advantage of being generically applicable. In contrast to approaches such as mean-field \citep[][]{JordanEtAl1999}, applying EVO does not require analytical derivations as the method is directly defined based on the joint probability of a given generative model.
%
% and their optimization 
%
% In addition, for the optimization of the variational parameters $\KKn$, the EVO approach only 
% requires the evaluation of functions
% proportional to the joint 
% % distribution
% $p(\sVec,\yVec\,|\,\Theta)$ \citep[see][for details]{DrefsEtAl2022}. 
%

For the family of LVMs we consider here,
% this property implies that
the required joints are directly defined by the chosen exponential family distributions.
Consequently, if the sufficient statistics $\vec{T}(y)$, the log-partition $A(\vec{\eta}\,)$ and the link $\vec{\tilde{\eta}}_d(\vec{s},\Theta)$
are efficiently computable, then so are the expectation values (\ref{eq:TVEMdetail}) and the entire variationally accelerated learning algorithm. 

\section{Parametrization of the Proposed Generative Models -- General Case}
\label{generalEF_MCA}
Let us now finally generalize the definition of the proposed family of generative models presented in \eqref{modelEq1}-\eqref{modelEq3} to any arbitrary $L$-parameter distribution. 
%
%For this purpose, first note that the prior parameterization is independent
%parameter $\vec{\pi}$ does not change by the definition of the noise distribution. That is, 
%
We remain using the independent Bernoulli distributions of \eqref{modelEq1}. For the noise distribution given by \eqref{modelEq2} and \eqref{modelEq3}, however, 
instead of two matrices $W$ and $V$, we can in general consider $L$ matrices (corresponding to the $L$ parameters of an exponential family distribution). Let us denote these matrices by $W^{(1)}$ to $W^{(L)}$, i.e. $\Theta = (\vec{\pi}, W^{(1)}, \ldots, W^{(L)})$. 
Therefore, we can generalize our definition of $\vec{\tilde{\eta}}_d(\vec{s},\Theta)$, which is now given by:
\begin{align}\label{etagen}
\vec{\tilde{\eta}}_d(\vec{s},\Theta) := \vec{\Phi}\big(\bar{W}^{(1)}_d(\vec{s}, \Theta),\ldots, \bar{W}^{(L)}_d(\vec{s}, \Theta) \big).
\end{align}
Moreover, given $L$ matrices $W^{(1)}$ to $W^{(L)}$ with $D \times H$ entries, we have to define the mappings $\bar{W}^{(1)}_d(\vec{s}, \Theta)$ to $ \bar{W}^{(L)}_d(\vec{s}, \Theta)$ for each observable $d$.
In analogy to the $L=2$ case, we do so by defining a matrix $M(\Theta)$ as follows:
\begin{align}\label{meanfgen}
%\begin{array}{ll} 
%F(\vec{w}) &= \mathbb{E}_{p(y;\, \vec{\Phi}(\vec{w}))}  [ y ] \\
\forall d,h:\phantom{iii}M_{dh}(\Theta) := F(W^{(1)}_{dh},\ldots,W^{(L)}_{dh}) \ \ \mbox{with}\ \ \ F(\vec{w}) =\, \mathbb{E}_{p(y;\,\vec{\Phi}(\vec{w}))}  [\,y\,] ,
% \phantom{iii}\forall d,h.
%\ \mbox{where}\ &f(\vec\eta) = \mathbb{E}_{p(y; \, \vec{\eta})}  [ y ] ,
%\end{array}
\end{align}
where $\vec{w}$ is assumed a row vector. Using the matrix $M(\Theta)$, we now define our mappings $\bar{W}^{(l)}_d(\vec{s},\Theta)$ for $l = 1, \ldots, L$:
\begin{align}\label{maxhgen}
%\begin{array}{l} 
\forall d,h:\bar{W}^{(l)}_d(\vec{s},\Theta) := W^{(l)}_{dh(d,\vec{s}, \Theta)} \ \ \mbox{with}\ \ \ h(d,\vec{s}, \Theta) := \mathrm{argmax}_{h} \{M_{dh}(\Theta)\, s_h\}.
%\end{array}
\end{align}
Also in the general case, definitions (\ref{meanfgen}) and (\ref{maxhgen}) guarantee that always the \textit{mean} of observable~$d$
is given by $\mu_d=\max_h\{M_{dh}(\Theta)\, s_h\}$. The above equations then finalize the definition of EF-MCA data models where the noise distribution can now be any arbitrary regular distribution of the exponential family.  
Then, for Eqns.~\eqref{modelEq1}-\eqref{modelEq3} with (\ref{etagen})-(\ref{maxhgen}), 
% that generally define
%the family of EF-MCA data models (to any arbitrary $L$-parameter distribution),  
the following theorem applies which is a straightforward generalization of Theorem~\ref{theorem1}:

\begin{theorem}\label{theorem2}
Consider an EF-MCA data model (\ref{modelEq1})-(\ref{modelEq3}) with $p(y;\vec{\eta}\,)$ being a regular exponential family distribution with sufficient statistics $\vec{T}(y)$ of length
\mbox{$L\in\mathbb{N}$}. Moreover, let the parameters $\Theta$ contain $L$ matrices $W^{(1)},\ldots,W^{(L)}$ with $D \times H$ entries and let $\vec{\tilde{\eta}}_d(\vec{s},\Theta)$
be defined as in \eqref{etagen}-\eqref{maxhgen}. Then, the derivatives of the ELBO \eqref{EqnFreeEnergy} w.r.t.\ all $W^{(l)}_{dh}$ are zero if for all $d$, $h$, and $l$: %\vspace{-1mm}
\begin{align}\label{Update_W_general}
W^{(l)}_{dh} = \frac{\sum_{n = 1}^N \mathbb{E}_{q^{(n)}}  [ \mathcal{A}_{dh}(\vec{s},\Theta) ] \, T_l(y_d^{(n)})}{\sum_{n = 1}^N \mathbb{E}_{q^{(n)}}  [ \mathcal{A}_{dh}(\vec{s},\Theta) ] },
\end{align} %\vspace{-1.9mm}
where $\mathcal{A}_{dh}(\vec{s},\Theta)$ is given by \eqref{Adh}. 
\end{theorem}

\begin{proof}
The proof is a direct generalization of the proof of Theorem~\ref{theorem1} and will be further presented in Appendix~\ref{appendixA1}. 
\end{proof}

%Moreover, it should be noted that such a model can considers an $L$-dictionary approach and learns number of $L$ matrices that can be used for inference. 

%As the function $\vec{\tilde{\eta}}_d(\vec{s},\Theta)$ models the influence of latent variables on observed variables, it is reminiscent of the link functions as, e.g., defined
%for generalized linear regression. Furthermore, our definitions (\ref{etagen})-(\ref{maxhgen}) ensure that the means $\mu_d$ of the observables are always given by a superposition defined by matrix $M(\Theta)$, which is likewise reminiscent of the link functions for non-linear regression. Our definition of $\vec{\tilde{\eta}}_d(\vec{s},\Theta)$ via $M(\Theta)$ is less direct as we use a maximum superposition, which notably differs from usual definitions of link functions (alongside other technical differences). The general role of coupling observables to latent variables (or response variables in regression) is also played by $\vec{\tilde{\eta}}_d(\vec{s},\Theta)$ such that it may also in a broader sense be referred to as a link function.

%Having generalized the family of LVMs to $L$-parameter distributions, it is straightforward to generalize the result of Theorem~\ref{theorem1} accordingly (the details of this generalization is further presented in Appendix~\ref{appendixA1}). 
%Doing so, we obtain the following updates for the matrices $W^{(l)}$, $l = 1, \ldots, L$:

%The proof of the theorem is similar to the proof of Theorem~\ref{theorem1} and will be further presented in Appendix~A.1. 
Now, given the relations \eqref{Update_W_general} for the matrices $W^{(l)}$, $l = 1, \ldots, L$, the following update equations can be used (likewise in a fixed-point sense):
\begin{align}\label{Update_W_gneral_real}
\forall d,h:\phantom{iii} (W^{(l)}_{dh})^{\mathrm{new}} = \frac{\sum_{n = 1}^N \mathbb{E}_{q^{(n)}}  [ \mathcal{A}_{dh}(\vec{s},\Theta^{\mathrm{old}}) ] \, T_l(y_d^{(n)})}{\sum_{n = 1}^N \mathbb{E}_{q^{(n)}}  [ \mathcal{A}_{dh}(\vec{s},\Theta^{\mathrm{old}}) ] }.
\end{align}
For the proposed EM algorithm, now in the general case, we initialize dictionaries $W^{(l)}_{dh}$ for $l = 1, \ldots, L$ and update their elements in each M-step using \eqref{Update_W_gneral_real}. After convergence, the converged dictionaries represent the optimal (possibly local optima) parameters of the model. Moreover, note that the prior parameter update remains unchanged and is given by \eqref{Update_pi}. Finally, also note that the application of EVO as variational acceleration is possible in the same way as for $L=2$.

Having obtained a generic optimization procedure for general EF-MCA, let us finally emphasize again in what sense the procedure is directly applicable.
Optimization for the general case iterates M-step equations~\eqref{Update_W_gneral_real} and \eqref{Update_pi} and the E-step (which can optionally use full posteriors or the truncated posteriors in \eqref{eq:TVEMq}). Given a distribution of the exponential family, we require for those equations the following: The sufficient statistics $\vec{T}(y)$, the log-partition $A(\vec{\tilde{\eta}}_d(\vec{s},\Theta))$, and the function $\mathcal{A}_{dh}(\vec{s},\Theta)$. For essentially all exponential family distributions conventionally used as noise models in LVMs, the sufficient statistics and log-partition functions $A(\vec{\eta}\,)$ are efficiently computable. Also the mapping $\vec{\tilde{\eta}}_d(\vec{s},\Theta)$ from the standard parametrization to
the natural parameters is usually a closed-form function. Only the function $\mathcal{A}_{dh}(\vec{s},\Theta)$ of \eqref{Adh} can contain a potential analytical intractability. The function is defined using $h(d,\vec{s}, \Theta)$ of Eqns.~\eqref{etagen} to \eqref{maxhgen} and involves the ($L$ dimensional) mapping $\vec{\Phi}$ from
the mean parameters to the natural parameters. For many examples the mapping is known to be closed-form (e.g., Bernoulli, Poisson, Exponential, Gaussian, Categorical etc.) or partly closed-form (e.g.,  Gamma). If the mapping is not closed-form (e.g., Beta, Gamma), then our optimization inherits precisely the
same analytical intractability problem as parameter optimization for the single distribution itself already exhibits. However, the intractability problem is in all common cases 
low-dimensional (usually $L=1$ or $2$), and the function $\vec{\Phi}$ is smooth and one-to-one, i.e., analytical or numerical solutions are usually available.

The here derived generic optimization thus contrasts with more formal frameworks of EM-like optimization that are discussed in the context of exponential families and/or graphical models. Usually such frameworks rest on assumptions for the variational distributions such as mean-field assumptions \citep[e.g.][]{SaulEtAl1996,Jaakkola2001}; or assumptions for the generative model itself \citep[][]{wainwright2008graphical}. For instance, if we assume the generative model to have an exponential family representation of its {\em joint} distribution, it is (under mild conditions) straightforward\footnote{One needs to use the exponential family form of the joint. Then the derivative of $\mathcal{F}(q, \Theta)$ in \eqref{appendix_ELBOeq} has to be computed using the equivalent of \eqref{A_eta_expected_value} for the joint distribution. Mild conditions have to apply for the parametrization
from standard parameters $\Theta$ to natural parameters $\vec{\zeta}$ which are usually fulfilled.} to derive a general M-step which takes the form:
$\sum_{n} \big( \mathbb{E}_{q^{(n)}} [\vec{T}(\vec{y}, \vec{s}\,)]
- \mathbb{E}_{p(\vec{y},  \vec{s} \,  |\,  \vec{\zeta}(\Theta) )} [\vec{T}(\vec{y}, \vec{s}\,)] \big) = \vec{0}$, 
where $\vec{T}(\vec{y}, \vec{s}\,)$ denotes the joint sufficient statistics given an observation $\vec{y}$ and a latent variable $\vec{s}$, 
 and $\vec{\zeta}(\Theta)$ represents the natural parameter of the joint.
%
%and $\vec{\zeta}(\Theta)$ represents the mapping from the space of model parameters $\Theta$ to the natural parameters space.
% }
%\changedHM{(we dropped the summation over $n$ for the sake of simplicity).} 
Such a general M-step can serve theoretical considerations, or it can serve as a starting point to derive efficient optimization procedure \citep[e.g., if mean-field approaches are developed,][]{BleiEtAl2017,wainwright2008graphical}. It does by itself not  necessarily 
represent a directly applicable optimization procedure, however.  
In the case of the here considered EF-MCA models, a joint exponential family which is {\em genuine} \citep[i.e., truly non-curved, see][]{Efron1975,BickelDoksum2015,Efron2023} is not available, and differentiability of a (joint) partition function is not given because of the used maximum non-linearity. As a further example, energy models \citep[compare][]{HintonEtAl2006,LeCunEtAl2006,FischerIgel2014} usually do have a joint exponential family distribution by definition. However, their partition function is usually computationally intractable.

\section{Numerical Verification and Example Applications}
\label{section_experiments}
We now numerically verify the derived optimization procedure using different well-known distributions of the exponential family as examples.
Concretely, we use Bernoulli, Poisson and Exponential as one-parameter distributions, and Gaussian, Gamma and Beta as two-parameter distributions of the family, and optimize the corresponding EF-MCA data models
with EM presented in Alg.~\ref{algorithm1}. We use the results of Theorem~\ref{theorem1} (and Theorem~\ref{theorem2} as the general case) for the M-steps. For the E-steps, we use full posteriors when the used models
are sufficiently small or apply approximations \eqref{eq:TVEMq} otherwise\changedCR{\footnote{The codes are available in \url{https://github.com/tvlearn/evo}.}}.
%, that are optimized for discrete latents, otherwise.
For technical details, we refer to Appendix~\ref{appendixB},  and also refer to \citet[][]{MousaviEtAl2021} for further examples of the approach applied to Beta-distributed observables. 

\subsection{Artificial Data}
\label{sec:ArtificialData}
First, an example with artificial data and known data-generating parameters $\Theta^{\mathrm{gen}}$ (the ground-truth parameters) was considered to verify and evaluate the derived optimization procedure.   
Specifically, using the setup of a bars test \citep[][]{Foeldiak1990,LuckeSahani2008,DrefsEtAl2022}, 
we were interested in how well the derived update equations recover ground-truth parameters.  
%We were particularly interested in how well the derived update equations recover  (i.e. the parameters used for generation).
%We used data in the form of bars  to verify and evaluate the derived update equations. 
We generated data using the generative model (\ref{modelEq1})-(\ref{modelEq3}) itself using link (\ref{eta}) with (\ref{meanf_eq}) and (\ref{maxh}), 
%
%Also, we
and considered Exponential and Bernoulli distributions for observables to assess the efficacy of the proposed updates \eqref{update_W_real} and \eqref{Update_pi} in increasing the log-likelihood of the data. We refer to the corresponding EF-MCA models as Exponential-MCA (E-MCA) and Bernoulli-MCA (B-MCA) models, respectively.

\begin{figure}[t!]
\centering
\includegraphics[scale=0.5]{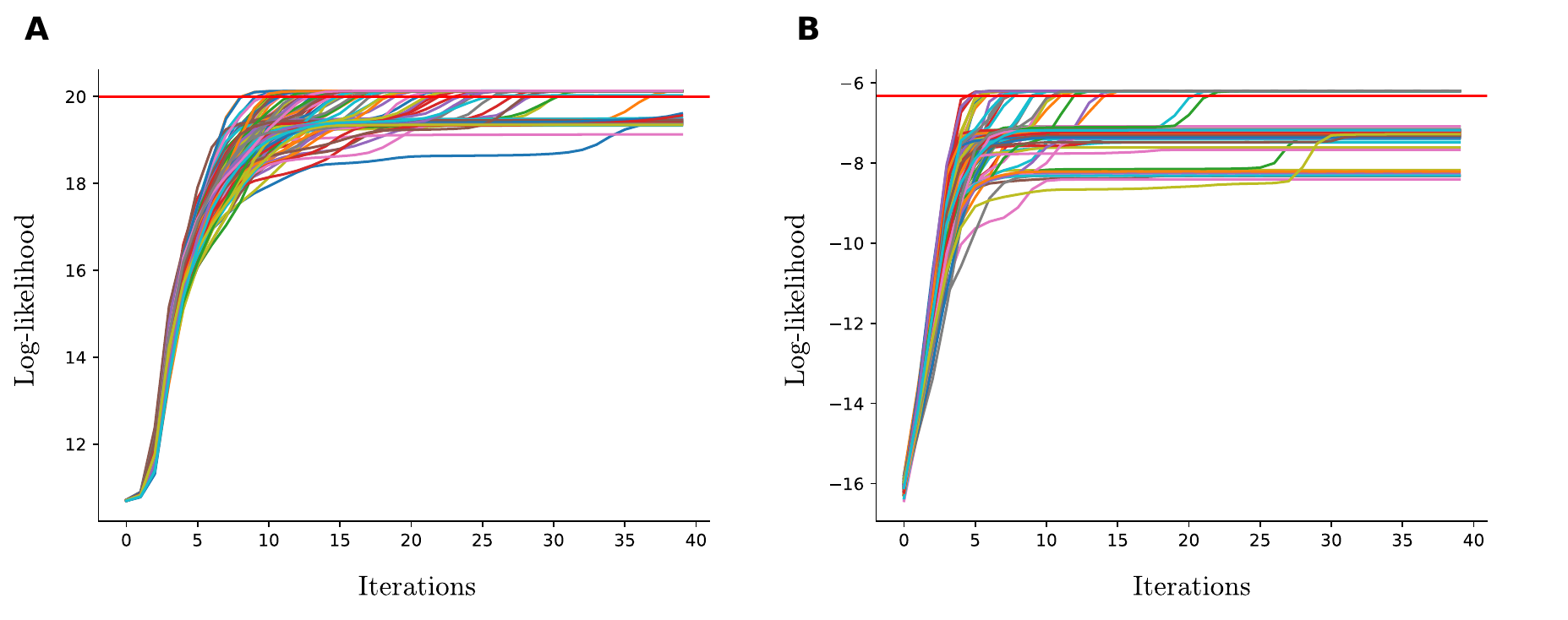}
\caption{Behaviour of the log-likelihood functions corresponding to (\textbf{A}) E-MCA and (\textbf{B}) B-MCA for 100 runs using artificial bars test. Different colors denote different runs of the algorithms.  
Both algorithms result in a monotonic increase of the log-likelihood function and demonstrate fast convergence (at least to a local optimum). 
Out of 100 runs, 71 for E-MCA and 29 for B-MCA, respectively, converged to the global optimum 
%Here, 71 and 29 of the 100 runs of respectively E-MCA and B-MCA converged to global optimum 
(i.e.,  to the ground-truth log-likelihood  denoted by the red horizontal line). The best runs achieved slightly higher log-likelihoods compared to the ground-truth, which may be attributed to the effect of overfitting due to the finite sample sizes.  
%and the best runs are in fact slightly above the the ground-truth that can be seen as overfitting (commonly observed in similar studies which is due to finite sample sizes)
}
\label{fig_100runs}
\end{figure}

For data generation and for the models, we used $H = 10$ generative fields $\vec{W}_h$ in the form of horizontal and vertical bars each occupying $5$ pixels on a $D = 5 \times 5$ grid.
The latents were sampled independently (according to the prior) with probability $\pi_h$ for $h = 1, \ldots, H$, and the corresponding generative fields were superimposed non-linearly according to \eqref{maxh}.
%More specifically, we use Exponential and Bernoulli distributions for our verifications here. Both Exponential and Bernoulli are one-parameter exponential family distribution and have not been used before for MCA models. 
%
We generated two sets of data points using the given EF-MCA models: One with Exponential distribution as the observation noise and another with Bernoulli distribution.
%
%and added noise in the form of values drawn iid from an exponential distribution. % (iid dependent draws per pixel)
We then trained the corresponding EF-MCA models using each of the generated datasets: E-MCA was trained using the data with Exponential noise and likewise B-MCA using the data with Bernoulli noise. 
%
%We stress again that no derivations are required: We only require the evaluation of joint probabilities which are directly given by the prior and the used noise model (here Exponential or Bernoulli distribution).
Details of the experimental settings and parameter initializations are presented in  Appendix~\ref{bars_test_appendix}.
%Model parameters $W$ and $\vec{\pi}$ were randomly initialized (see Appendix~\ref{bars_test_appendix} for details) and we performed 50 full EM iterations.

We repeated this procedure 100 times and closely examined the behaviour of the log-likelihood functions for each of the runs. The results are depicted in Fig.~\ref{fig_100runs}. 
As it can be seen, the application of the optimization procedure (Theorems~\ref{theorem1} and \ref{theorem2}) gives rise to a robust and reliable algorithm that in practice monotonically increases the log-likelihood of the data to at least local log-likelihood optima (note that we used full posteriors here). Also, the 
figure illustrates that the algorithms converge to ground-truth log-likelihood values (log-likelihood values computed using the generative parameters; red horizontal lines in Fig. \ref{fig_100runs}) in some cases. 
In these runs, the learned parameters are very close to the generating ones with small differences that could be due to finite sample size effects (we have observed that increasing the number of data points instantly increases the estimation accuracy).  
Nevertheless, local optima can be observed for some cases where the learned log-likelihood values differ significantly from the ground-truth.  
This effect appears to be pronounced to varying degrees in different models. 
%This effect shows to be pronounced to different degrees for the various models. 
%This effect is different from one model to another and requires further investigations for the improvement of the model performance. 
%, where in some cases we observed a high rate of convergence to global optima and in some cases a very low rate.  
%We have observed a similar behaviour of the update equations in comparison to the previous approaches where different factors can influence the convergence to global optima.  
%In comparison to previous approaches,  for e.g., 
% for the case of P-MCA that 
%
For instance, in the P-MCA model, we here observed 83 out of 100 runs that converged to global optimum. 
In comparison, the P-MCA approach of \citet[][]{LuckeSahani2008}, when  trained using exact EM, converged to global optimum in 92 out of 100 runs, i.e.  more often compared to the P-MCA model studied here. 
%
%Poisson distribution and in comparison to the previous approaches, we here observed number of 53 runs (out of 100)  that are converged to global optima when using the P-MCA model, while this value is 90 runs (out of 100) for the exact EM approach described in \citep[][]{LuckeSahani2008}. 
Such a difference, however, is expected since the latter model exploits an additional annealing procedure to avoid local optima.  While such annealing approaches have been commonly investigated in previous studies, our focus in this work remains on evaluating the monotonically increasing log-likelihood function for each of the EF-MCA data models (we will discuss further details in Appendix~\ref{bars_test_appendix}).  
Also Fig.~\ref{fig_ExponentialBars} displays the model parameters learned in one such run for the E-MCA algorithm.  

% For the other runs, in which the algorithm converges to local optima, the generating parameters are not recovered completely correctly (learned generative fields contain e.g. multiple bars). 
%
Besides the local optima effect, we also found the sparseness (the value of $\sum_h \pi_h$ or simply $\pi H$ if we use the same $\pi$ for all $h$) to be an important factor influencing the outcome of the algorithm: The \textit{reliability} of the algorithm (the probability of recovering all model parameters; compare, e.g., \cite{Spratling1999}) showed to decrease as the sparseness increases (see  Appendix~\ref{bars_test_appendix} for details).  
%Finally, note that here after choosing the Exponential and Bernoulli distributions as examples, Eqn.\ \eqref{update_W_real} can directly be used
%for the M-step. No additional derivations are required.

% Moreover, Fig.\,\ref{fig_ExponentialBars} illustrates the evolution of the parameters $\Theta$ together with the free energy function for one run of the E-MCA algorithm. For the displayed run, learned parameters are very close to those of the generating parameters with small differences due to finite sample size effects.
%
%\begin{figure}
%\center
%\begin{tabular}{ll}
%\includegraphics[scale=0.35]{Exponential_100Run1.eps}
%&
%\includegraphics[scale=0.35]{Bernoulli_100Run.png}
%\end{tabular}
%\caption{Behaviour of the log-likelihood function corresponding to the E-MCA (left) and B-MCA (right) models for 100 runs using artificial bars tests. Different colors denote different runs of the algorithms.}
%\label{fig_100runs}
%\end{figure}
%
%
\begin{figure}[t!]
\centering
\includegraphics[scale=0.72]{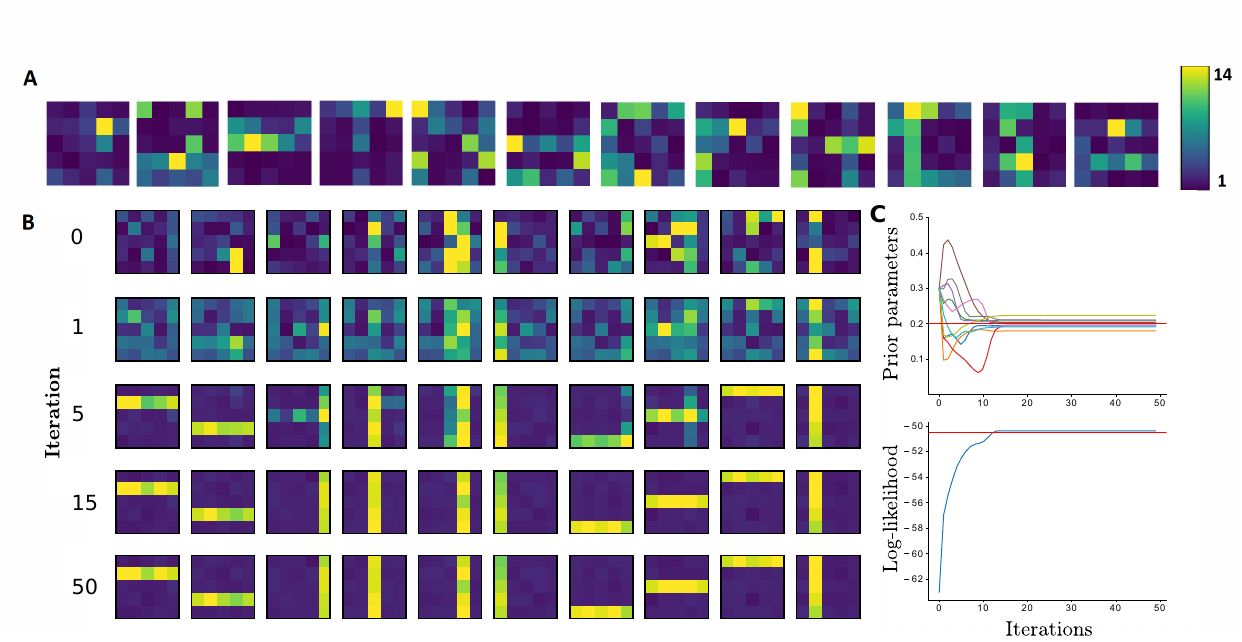}
\caption{\textbf{A} 12 examples of input data. \textbf{B} Learned generative fields using the E-MCA model with the number of iterations illustrated on the left hand side. \textbf{C}~Behaviour of the $\pi_h$ values for $h = 1, \ldots, H$ 
(top) and the log-likelihood function (bottom) given the 50 EM iterations. Here the generating parameter is $\pi_h^{\mathrm{gen}}=0.2$ for all $h$ (i.e.,  the sparseness is 2); 
%values of $\pi_h H$ for $h = 1, \ldots, H$ 
%(top) given the 50 EM iterations. Here the generating parameter is $\pi_h^{gen} H=2$ (i.e., $H = 10$ and $\pi_h^{gen} = 0.2$ for all $h$); 
see Appendix~\ref{bars_test_appendix} for more details.
}
\label{fig_ExponentialBars}
\end{figure}

\subsection{More Realistic Data}
%\label{subsec:image_patches}
%
After the initial verification above, let us now point to some example applications on real data. 
Applications often require large models such that computing full posteriors becomes infeasible.
To scale the algorithms to larger sizes, we applied variational E-steps
% given by \eqref{eq:TVEMq} and \eqref{eq:TVEMdetail} which exploit the EVO algorithm \citep[][]{DrefsEtAl2022}. 
as described in Sec.\,\ref{sec:estep} using $|\KKn| = 60$ variational states for each $n$.
% In detail, we use $60$ variational states (i.e., $|\KKn| = 60$ for each $n$) to approximate the full posteriors.  
%
% has the additional benefit of being a `black box', i.e., given a new distribution of the exponential
%family, we do not require derivations to apply the approximation.

\subsubsection{Feature Extraction -- Natural Image Patches} % 
\label{sec_natural_image_patches}
%\ \\
%\noindent{\bf Natural Image Patches.} 
For the first application to real data, we considered the model of Example~\ref{example2}, i.e., Gaussian-MCA.
As discussed, the model has two matrices that we can optimize using
Theorem~\ref{theorem1}: $W$ for the mean and a combination of $V$ and $W$ for the variance of the Gaussian. To facilitate interpretation, we reparameterized the matrices back
to the normal Gaussian parameters by setting $\sigma^2_{dh}=V_{dh}-W^2_{dh}$ (see Appendix~\ref{appendixA2} for details).
%
% Given that the data encompassed both positive and negative values \todo{JD to HM: Do not mention, since it may raise the question why negative values cannot be handled by \eqref{maxh}?},
Also, we followed \cite{BornscheinEtAl2013} and used a maximum magnitude non-linearity, i.e., we replaced $h(d, \vec{s}, \Theta)$ in \eqref{maxh} by (since $M(\Theta) = W$ for  Gaussian): % \vspace{-1mm}
\begin{equation}
h(d, \vec{s}, \Theta) = \argmax_h \{ | W_{dh} s_h | \},
\label{eq:mmca_combination_rule}
\end{equation}  %\vspace{-1mm}
where $| x |$ denotes the absolute value of $x$. 
% It should be further mentioned that such a definition
Such choice does not affect our main results from Theorems~\ref{theorem1} and \ref{theorem2} as the relations in \eqref{derivatives_WbarVbar1}-\eqref{derivatives_WbarVbar2} and also the proofs  (in Appendix~\ref{appendixA0}) remain valid.

%Using matrix $M(\Theta)$, we now define our mappings $\bar{W}_d(\vec{s},\Theta)$ and $\bar{V}_d(\vec{s},\Theta)$ as follows:
%%
%\begin{align}
%\bar{W}_d(\vec{s},\Theta) &:= W_{dh(d,\vec{s}, \Theta)},\ \ \bar{V}_d(\vec{s},\Theta) := V_{dh(d,\vec{s}, \Theta)},\\[1mm]
%\mbox{where}\phantom{iiii}h(d,\vec{s}, \Theta) &:= \mathrm{argmax}_{h} \{M_{dh}(\Theta)\, s_h\}. \nonumber
%\end{align}
%

Using a set of $N=100,000$ ZCA-whitened image patches of size $D=20 \times 20$ pixels taken from a database of natural scenes \citep[][]{HaterenSchaaf1998}, we trained a Gaussian-MCA model with $H=1,000$ components to learn individual dictionaries for component means and component variances.
After training, we observed a large variety of generative fields (GFs) for the component means (including the familiar Gabor-like and globular fields that have likewise been observed in previous studies using MCA models; e.g., \citealt{LuckeSahani2008,BornscheinEtAl2013}) together with a large variety of GFs for component variances. Figure \ref{fig:image_patches20} illustrates 20 examples of such GFs (see Appendix~\ref{Natural_images_appendix} for the full dictionaries).
\begin{figure}[t!]
\centering
% \includegraphics[scale=0.7]{NatImages_Samples.eps}
% \includegraphics[width=\textwidth]{NatImages-D20x20-25949301-small.pdf}
% \caption{Exemplary dictionary elements $\vec{W}_{h}$ and $\vec{\sigma}_h$ learned by Gaussian-MCA on natural image patches ($30$ generative fields each; we here depict $\sigma_{dh}$ instead of $V_{dh}$ to facilitate interpretation). The right sketch illustrates the different encodings of a standard SC model with a scalar $\sigma$ and the here considered Gaussian-MCA model with individual $\sigma_{dh}$ per observable and latent ($\bar{W}=\{W_{d h(d, \vec{s}, \Theta)} \}_{d=1}^{D}$ and $\bar{\sigma}=\{\sigma_{d h(d, \vec{s}, \Theta)} \}_{d=1}^{D}$ with $h(d, \vec{s}, \Theta)$ given by \eqref{eq:mmca_combination_rule}; note that for the standard SC model $\bar{\sigma}={\rm const}=\sigma\forall d,h$).
% See Appendix~\ref{Natural_images_appendix} for full dictionaries.}
\includegraphics[width=\textwidth]{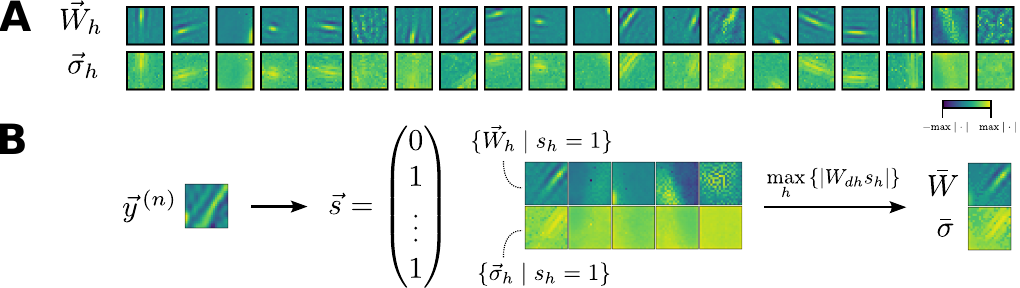}
\caption{{\bf A}~Exemplary dictionary elements $\vec{W}_{h}$ and $\vec{\sigma}_h$ learned by Gaussian-MCA on natural image patches of $D=20\times20$ pixels ($20$ generative fields each; we depict standard deviations $\sigma_{dh}$ instead of $V_{dh}$ to facilitate interpretation). See Appendix~\ref{Natural_images_appendix} for all $H=1000$ dictionaries.
{\bf B}~Illustration of data encoding. The left part depicts an exemplary training data point. The middle part depicts a latent vector $\sVec$ with high posterior probability as well as the dictionary elements of the components~$h$ associated with the non-zero entries in $\sVec$. These components effectively determine 
%the mean and variance parameter 
the Gaussian mean $\bar{W}$ and standard deviation $\bar{\sigma}$ through a point-wise maximum magnitude non-linearity
% in the noise model
as shown on the right part
($\bar{W}=\{W_{d h(d, \vec{s}, \Theta)} \}_{d=1}^{D}$ and $\bar{\sigma}=\{\sigma_{d h(d, \vec{s}, \Theta)} \}_{d=1}^{D}$ with $h(d, \vec{s}, \Theta)$ given by \eqref{eq:mmca_combination_rule}).}
%\caption{Dictionaries $(W_{dh})$ and $(\sigma^2_{dh})$ learned by Gaussian-MCA on natural image patches ($30$ generative fields each). We show $(\sigma^2_{dh})$ instead of $(V_{dh})$ for better interpretability. The right sketch illustrates the difference between using a standard SC model (scalar $\sigma^2$) and Gaussian-MCA. See Appendix~B.4 for full dictionaries.}
\label{fig:image_patches20}
\end{figure}
The observed variety of variance GFs stands in contrast to a uniform variance with equal value for all latents and observables (i.e., $\sigma_{dh}=\sigma, \forall d,h$) as assumed by standard SC or previous MCA versions. Compared to such previous approaches, a generalized encoding capturing component means and component variances
% allows to more accurately describe how first and second order statistics are combined in real-world datasets which may be 
may provide a more elaborate model of the combination of first- and second-order statistics in real-world datasets. 
% which could be hypothesized to be more closely aligned with neural responses in V1 \citep[see][for a further discussion]{mousavi2020ddDictionary}.
%
% As final remark for this experiment,
Such variance GFs may be used, e.g.,  to model intricate image structures, and/or to further analyze the effect of added noise on different parts of the image patches \citep[also see][]{mousavi2020ddDictionary,MousaviEtAl2021}.  %on the learned mean GFs

Besides,
% the here made observations regarding the learned data encoding, 
the data and model dimensionalities (i.e., the values of $N$, $D$, $H$) used here demonstrate the scalability 
%this experiment, i.e.  its problem size, demonstrates the scalability 
of the proposed EF-MCA data models when
% the approximation (\ref{eq:TVEMq}).
accelerated by EVO (Sec.\,\ref{sec_EVO}).

%The variances allow for increased flexibility in terms of intensity variations of elongated fields (for many elongated fields variances are high where the corresponding mean is high). GFs for variances also allow
%, however. As can be observed, for some fields (e.g., bottom right) variance encoding allows for shape variations rather than intensity variations: p = only at the transition of high to low mean values are variances high, which means that variations of the precise shape modelled by a component are allowed and modelled.

% Using the above described approximation (\ref{eq:TVEMq}), models with
%high-dimensional latent spaces (here we used $H=1,000$) can be trained.
% we here considered the challenging task of training a non-linear multiple causes model to learn a dictionary with $H = 1000$ elements. 

\subsubsection{Noise type estimation}
\label{subsec:noise_type_estimation}
%\noindent{\bf Noise Type Estimation.} 
Next, we considered the problem of determining the unknown noise type of a given dataset. This problem is of importance for data analysis in a number of applications and has been actively researched during the last years (see Appendix~\ref{app:noise_type_estimation} for a discussion of related work). 
Here, the results of Theorems~\ref{theorem1} and \ref{theorem2} enabled us to develop approaches for determining the type of the data distribution. As a proof-of-concept in this direction, we investigated how, e.g., Gaussian-, Gamma- and Beta-MCA models can be used to determine which of the three noise distributions is better suited for a given dataset. 
% In detail, we apply the two models on different datasets and compare their variational lower bounds (as approximations of their log-likelihoods).
To quantify suitability, we compared variational lower bounds (as approximations to log-likelihoods) obtained with the three models on a given dataset. 
%For each model, we performed three runs and chose the o
Note that the considered models have the same number of parameters such that we can directly use their lower bounds for model selection (i.e., we do not have to consider penalty terms for the number of parameters; see, e.g., \citet[][]{MacKay2003} for a comparison regarding AIC or BIC). 

We investigated noise type estimation using both visual and acoustic data.  
We also refer the reader to Appendix~\ref{app:noise_type_estimation} for technical details of the experiments, 
 Examples~\ref{example1}, \ref{example2} and Appendices~\ref{appendixA2} and  \ref{appendixA3} for detailed information regarding the parametrization of the Gaussian-MCA and the Gamma-MCA models, respectively, as well as to \citep[][]{MousaviEtAl2021} for the details of the Beta-MCA.

%Also the  are presented in Appendix~\ref{app:noise_type_estimation}.
% (note that similar to the Gaussian-MCA model, we use a combination of $W$ and $V$ to estimate the variances of the Gamma distributions).

% \noindent{\bf Images.}
\paragraph{Visual Data.}
We here considered standard benchmark images (depicted in Fig. \ref{fig:original_images}), to which we either added Gaussian, Gamma or Beta noise. We then trained the three models of Gaussian-, Gamma- and Beta-MCA on each of the noisy images.  
In order to have a reliable comparison,  we ran each algorithm three times (each time with a new set of initial parameters and  for 100 variational EM iterations),  and picked the one with the highest ELBO among these three runs.  
%We repeated the runs three times for each of the considered models and chose the ones with the highest values (among the three runs) of the variational lower bounds. 
%We used a common scenario for both noise models and 
%Following a common methodology,  we added Gamma or Gaussian noises with random variance values. 
%(we chose uniformly random values in the interval $[1\ 3]$).
The results are presented in Tab.\,\ref{table:house}.
% We then applied the two MCA models where the obtained variational lower bounds are presented in Tab.\,\ref{table:house}.
%
%We use a common scenario where the added noise is homoscedastic; accordingly we apply MCA models with a scalar variance parameter $\sigma^2$ here (transitioning from a dictionary to a scalar variable may change the update equations; details of the Gaussian-MCA model are presented in Examples\ 1 and 2 and further in Appendix~A.2, we also discuss the second-moment update of the Gamma-MCA in Appendix~A.3). 
%We also discuss the local optima issue for the Gamma-MCA model in Appendix~B.3. \todo{From JD: The previous sentence seems too unconcrete to me, i.e. it seems to lack some important explanations.} 
%
As can be observed and expected, the lower bound is, in most cases,  highest when the noise distribution assumed by the model matches the actual noise in the data.  
%This, although expected, demonstrates a better fit of the corresponding data model in comparison to the other two models.  
%In some few cases, however, we have seen that one of the other models is obtaining a higher ELBO value. This may then require further investigations on the differences between the trained models and/or extending the task to explore a more sophisticated noise type estimation approach which is beyond the goal of this contribution.  
%Nonetheless,  the differences between the lower bounds observed to be very small in some cases and also few discrepancies between the best model (in terms of lower bound) and the actual noise type have been also observed.  
%%Also in some case, small  that observed in some cases 
%Such effects may be attributed to different factors including local optima effects, the here considered images or even the noise levels which we have used for the experiments (we used noise level of 50 for the case of Gaussian and Gamma noise and 0.1, 0.2 or 0.3 for the case of Beta noise).
Factors potentially causing the few discrepancies between the best model (in terms of lower bound) and actual noise type, as well as the small differences between the lower bounds that observed in some cases may include, e.g.,  local optima effects and/or the noise levels which we have used for the experiments (we used noise level of 50 for the cases of Gaussian or Gamma noise and 0.1, 0.2 or 0.3 for the case of Beta noise).

\begin{table}[!t]
\caption{Comparison of the obtained ELBO values by the three different EF-MCA data models when trained on images corrupted with three different noises: Gaussian, Gamma and Beta were investigated here. Corrupted images were linearly rescaled to the interval $(0,1)$ for the sake of a reliable comparison. In addition,  we trained all the models with the same settings of hyper-parameters. For the case of Gamma-MCA (and also Beta-MCA for some cases), we used the approach presented in Appendix~\ref{bars_test_appendix} to avoid local optima. The results correspond to the highest value of the ELBO for each model among three different runs, and the bold numbers represent the best fit (for each noisy image) among the three models. We refer the reader to Sec.~\ref{subsec:noise_type_estimation} and Appendix  \ref{app:noise_type_estimation} for more details.}
\centering
\begin{tabular}{c c c c c c c c c}
\hline
Noise & Model & House & Camera & Peppers & Saturn & Bridge & Flag & Average \\ \hline
\multirow{3}{*}{Beta} & Beta-MCA & \textbf{39.00} & \textbf{-28.89} & \textbf{37.93} & 57.82 & \textbf{31.29} & 78.88 & \textbf{36.00} \\
& Gamma-MCA & 15.39 & -29.02 & 25.80 & \textbf{58.96} & 16.23 & \textbf{97.25} & 30.76 \\
& Gaussian-MCA & 16.89 & -42.36 & 14.73 & 32.31 & 7.99 & 66.00 & 15.92 \\
\hline
\multirow{3}{*}{Gamma} & Beta-MCA & 125.16 & 251.87 & 215.27 & \textbf{562.55} & 159.46 & 138.78 & 242.18 \\
& Gamma-MCA & \textbf{125.91} & \textbf{266.36} & \textbf{217.71} & 554.82 & \textbf{159.47} & \textbf{139.96} & \textbf{244.03} \\
& Gaussian-MCA & 116.38 & 232.57 & 200.05 & 534.19 & 143.00 & 118.13 & 224.05 \\
\hline
\multirow{3}{*}{Gaussian} & Beta-MCA & 123.43 & 143.77 & 119.66 & 150.23 & 116.20 & \textbf{146.02} & 133.21 \\
& Gamma-MCA & 121.70 & 143.15 & 118.50 & 148.65 & 114.11 & 143.97 & 131.68 \\
& Gaussian-MCA & \textbf{124.15} & \textbf{144.42} & \textbf{120.63} & \textbf{151.30} & \textbf{117.05} & 145.67 & \textbf{133.87} \\
\hline
\end{tabular}
\label{table:house}
\end{table}

% \noindent{\bf Acoustic Data.}
\paragraph{Acoustic Data.}
As a further example for data with natural noise sources,  
we considered the CHiME dataset \citep[][]{FosterEtAl2015} and 
fitted Gaussian- and Gamma-MCA models to amplitude spectrograms of three selected audio files.  
For both algorithms, we performed three runs for each audio example, and for comparison then selected the one with the highest ELBO (which can be computed efficiently). The resulting ELBO values are shown in Tab.\,\ref{table:chimeresults}.
The higher ELBOs observed for Gamma-MCA suggest the Gamma distribution as a better suited noise model compared to Gaussian for the considered data, which is consistent with amplitude data being positive. While differences of the ELBO values across runs showed to be comparably small, larger differences across audio examples and algorithms could be observed. These may be attributed to, among other things, differences between the content and noise sources in the individual audio files (see Appendix~\ref{app:noise_type_estimation} for data-related details).

%We likewise performed three runs for each of the algorithms and chose the one with the highest ELBO for the comparison. 
%(see Appendix \ref{app:noise_type_estimation_chime} for details). 
%
%The consistently higher free energies observed for Gamma-MCA suggest the Gamma distribution as a better suited noise model (compared to Gaussian) for the here considered data.   
%The high differences between the ELBO values reported in Tab.\,\ref{table:chimeresults} may be explained by the differences between the two distributions and/or the unknown noise of the  audio examples (also bear in mind that the local optima effect is an important factor here; we here used the approach discussed in Sec.~\ref{bars_test_appendix} for the Gamma-MCA to improve the results).

%is a better noise model for the used amplitude spectrograms rather than the Gaussian. 

%In addition, ... ABDA results ...
%This is consistent with amplitude spectrograms that model signal energies in time-frequency bins.
%
%
%Again, we remark that the updates are directly given by the use Gaussian and Gamma-distribution and their corresponding sufficient statistics. Now additional derivations are required although the Gamma distributions (for instance) has not been  The here derived updates 
%

The experiments reveal the efficacy of the updates (\ref{update_W_real}) and (\ref{update_V_real}) 
and demonstrate applicability of the algorithms for noise type estimation at large scales. 
%that supply sufficiently flexible and precise algorithms for noise type estimation also at large
Model selection using other types of noise distributions can proceed along the same line but require a more elaborate treatment \citep[compare, e.g.,][]{valera2017automatic} which exceeds the purposes of this study.

\subsubsection{Denoising}
\label{Sec_denoising}
%\noindent{\bf Denoising.} 

Finally, we used the presented algorithms to denoise images corrupted by non-Gaussian noises.
This task can be considered more difficult compared to the removal of additive white Gaussian noise which numerous established denoising algorithms are optimized for  \citep[compare, e.g.,][]{DabovEtAl2007,GuEtAl2014,BurgerEtAl2012,ChaudhuryAndRoy2017,ChenAndPock2017,ZhangEtAl2017,TaiEtAl2017,ZhangEtAl2018}. We investigated two different noise types, namely Poisson and Exponential, and used standard test images for benchmarking (see Appendix~\ref{denoising} and Fig.~\ref{fig:original_images} for details on the images used). 
% As datasets, we use the standard benchmark images 'house', 'cameraman' and 'peppers' \cite{DabovEtAl2007,salmon2014poisson}. 
% For each image, we generated one version with added Poisson noise and one with
%added Exponential noise. As the next step, 
%
Concretely,  
we applied Poisson-MCA (P-MCA) to denoise images corrupted by Poisson noise and analogously Exponential-MCA (E-MCA) to denoise images corrupted by Exponential noise. 
The family of EF-MCA models allows for treating a much larger variety of noise distributions, of course, but focusing on P- and E-MCA may illustrate
the potential also for other noise types. For denoising images with Beta noise (using the Beta-MCA model), the reader may be referred to \citet[][]{MousaviEtAl2021}.
% which may be encountered in data. 

\begin{table}[!t]
\caption{Results of the noise type estimation experiment with natural noise sources (see Sec.~\ref{subsec:noise_type_estimation} and Appendix~\ref{app:noise_type_estimation} for details). Listed are the highest ELBO values among three different runs given each of the considered  audio examples.}
%chunk:
%Gamma = []
%Gaussian = []
%
%chunk25:
%Gamma = [250.777, 250.726, 250.205]
%Gaussian = [95.155, 95.615, 95.558]
%
%chunk7:
%Gamma = [203.624, 206.145, 204.016]
%Gaussian = [42.405, 45.387, 47.369]
\centering
\begin{tabular}{c c c}
\hline
 & Gaussian-MCA & Gamma-MCA \\ \hline
audio example 1 & 215.20 & \textbf{298.53}\\
audio example 2 & 95.61 & \textbf{250.77}\\
audio example 3 & 47.36 & \textbf{206.14} 
\end{tabular}
\label{table:chimeresults}
\end{table}

%Our denoising approach investigates the non-linear inverse problem  \citep[compare][]{tarantola2005inverse} of estimating the non-noisy image only from observing a noisy version of the clean image: P-MCA and E-MCA are directly applied to the noisy image itself without leveraging external, clean training data. 
%This task is also referred to as \textit{zero-shot} denoising  \citep[compare, e.g.,][]{ShocherEtAl2018,ImamuraEtAl2019}. Formally, given a set $Y_{\mathrm{corrupted}} = f(Y_{\mathrm{original}})$ of corrupted data points ($f$ denotes a noise operator, in this case Poisson or Exponential), we aim at estimating the uncorrupted data $Y_{\mathrm{original}}$ by $Y_{\mathrm{estimate}}$. Since $T(y) = y$ for the both cases of Poisson and Exponential distributions (i.e., $M(\Theta) = W$), we follow \citet[][]{DrefsEtAl2022} and use the following estimator\footnote{Estimators of other distributions would use the corresponding $M(\Theta)$ matrix  \citep[see][for denoising images corrupted by Beta noise]{MousaviEtAl2021}.} for the non-noisy image pixels:
In general, one can see denoising as a (non-)linear inverse problem  \citep[compare][]{tarantola2005inverse} where the task is to estimate the underlying non-noisy image using a specific method. Formally, given a set $Y_{\mathrm{corrupted}} = f(Y_{\mathrm{original}})$ of corrupted data points ($f$ denotes a noise operator, in this case Poisson or Exponential), we aim at estimating the uncorrupted data $Y_{\mathrm{original}}$ by $Y_{\mathrm{estimate}}$. 
A variety of different approaches have been investigated in the literature in order to either obtain the corresponding $Y_{\mathrm{estimate}}$ using a so-called \textit{denoiser} (like the approaches mentioned above), or to somehow improve the performance of a given denoiser (either for denoising or for other relevant tasks) using established methods 
like the work by, e.g., \citet[][]{kadkhodaie2021stochastic} that uses the prior embedded within a denoiser. 
%such as the ones disccused by, e.g., \citet[][]{venkatakrishnan2013plug} and \citet[][]{kadkhodaie2020solving} for using the prior embedded in a denoiser. 
%
%
We here employed the proposed EF-MCA data models to restore the non-noisy images by \textit{only} observing a noisy version of the clean images. That is, P-MCA and E-MCA models were directly applied to the noisy image itself (using overlapping patches) without leveraging external, clean training data. 
This task is also referred to as \textit{zero-shot} denoising  \citep[compare, e.g.,][]{ShocherEtAl2018,ImamuraEtAl2019}. 
In detail, we followed \citet[][]{DrefsEtAl2022} and, given that $T(y) = y$ for Poisson and Exponential distributions (i.e., $M(\Theta) = W$), used the following formula\footnote{Estimators of other distributions would use the corresponding $M(\Theta)$ matrix  \citep[see][for denoising images corrupted by Beta noise]{MousaviEtAl2021}.} to estimate the non-noisy image pixels:
\begin{equation}
\big(\vec{y}^{\,(n)}_{\text{estimate}}\big)_d = \mathbb{E}_{q^{(n)}}  [ \bar{W}_d(\sVec, \Theta) ],
\end{equation}
where $d$ corresponds to pixel $d$ of the image patch $n$. 
%
%The family of EF-MCA models allows for treating a much larger variety of noise distributions, of course, but focusing on P- and E-MCA may illustrate
%the potential also for other noise types which may be encountered in data. 

\begin{figure}[t!]
\centering
\includegraphics[scale=0.34]{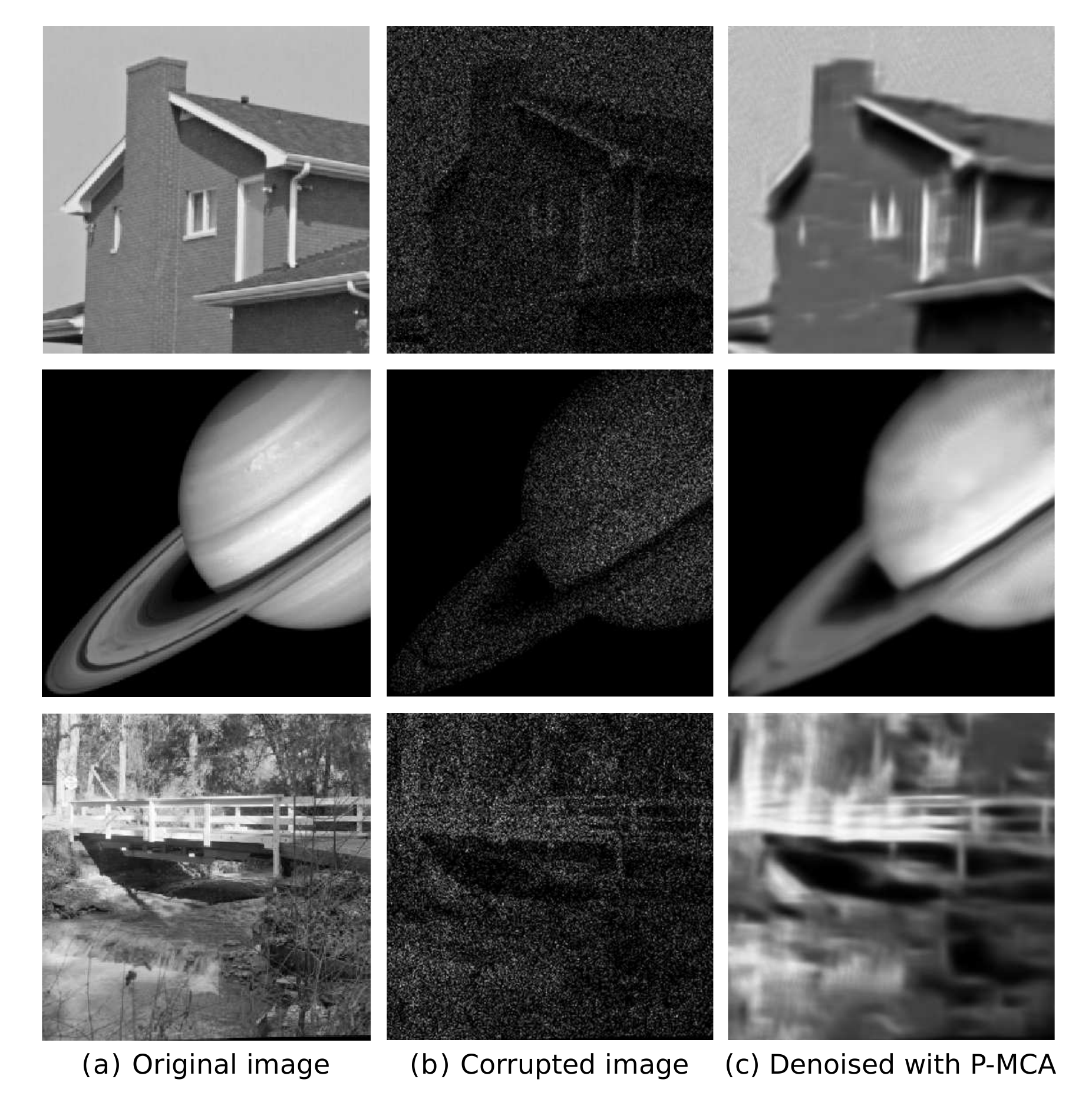}
\caption{Denoising of the House, Saturn and Bridge images when corrupted by Poisson noise (at peak value 2). The figure illustrates the original and noisy images together with the reconstructed images using P-MCA trained with $H = 100$ and $D = 400$ for 100 variational EM iterations. The corresponding PSNR values of the reconstructed images are 24.66 (House), 28.39 (Saturn) and 20.15 (Bridge); 
while the PSNR values of the noisy images are 4.94 (House), 7.11 (Saturn) and 6.21 (Bridge).
%are 23.51 , XXX () and XXX ().
% further presented in Tab.~\ref{table:poisson_denoising}. 
In addition, the denoised House image can be compared with Fig. 9 of \citet[][]{giryes2014sparsity} which depicts the reconstructed images using BM3D, SPDA and NLSPCA methods.}
%\caption{Denoising of the House, Camera and Peppers images when corrupted with Poisson noise (the peak values of the House and Peppers images are 2 and of the Camera is 4). The figure illustrates the original and noisy images together with the reconstructed images using P-MCA. The corresponding PSNR values are 23.51 (House), 23.42 (Camera) and 22.03 (Peppers).
%% further presented in Tab.~\ref{table:poisson_denoising}. 
%In addition, the denoised House image can be compared with Fig. 9 of \citep[][]{giryes2014sparsity} which depicts the reconstructed images using BM3D, SPDA and NLSPCA methods.}
\label{fig:Poisson_denoising}
\end{figure}

% \noindent{\bf Poisson noise.} 
\paragraph{Poisson noise.} 
After Gaussian denoising, Poisson noise removal benchmarks do presumably provide the most extensive opportunities for quantitative comparisons of different denoising algorithms.
%
%Non of the two noises, however, are additive like Gaussian and to generate a corrupted image (e.g. using Poisson noise) given a clean image pixel $x_{clean}$, it is customary to set the noisy image $x_{noisy}$ equals to 
%
%\begin{align}
%p(x_{noisy} | x_{clean}) = \begin{cases}
%\frac{x_{clean}^{x_{noisy}}}{x_{noisy} !} \exp(-x_{clean})\\
%\delta(x_{noisy})
%\end{cases}
%\end{align}
%
%For Poisson noise, importantly, 
%
%Many studies have been devoted to remove Poisson noise. 
One possibility for denoising an image corrupted by Poisson noise is to simply ignore the specific properties of Poisson noise and employ a method that is tailored to Gaussian noise. For instance, 
%the sparse 3D transform-domain collaborative filtering
BM3D
 \citep{DabovEtAl2007} can be applied directly. BM3D is a well-known denoising method which often outperforms SC or k-SVD \citep[][]{aharon2006k} algorithms on image denoising benchmarks. The original method is tailored to 2D-images corrupted by Gaussian noise, however.

A frequently followed alternative approach is converting the Poisson noise to Gaussian, and then applying an algorithm which assumes Gaussian observables. Such methods apply a non-linear variance-stabilizing transformation (VST) such as Anscombe \citep[][]{anscombe1948transformation} or Fisz \citep[][]{fisz1955limiting} that produces a signal in which the noise can approximately be treated as Gaussian with unit variance.
Approaches such as VST+BM3D use the Anscombe root transformation followed, in this case, by standard BM3D \citep[e.g.,][]{makitalo2010optimal}.
%; we also elaborate further in Appendix~\ref{denoising}). 
Such approaches, however, are specific to Poisson noise as similar transformations are in general not available for all exponential family distributions. But also in the case of Poisson noise, transformation to Gaussian noise can be problematic. In fact, it is known that transformation-based methods can yield poor results for low-intensity signals  \citep[][]{salmon2014poisson,makitalo2010optimal}.  
It should be noted that the Poisson noise is not additive and the peak value of the original image 
determines the strength of the noise (the lower the peak is, the stronger the noise will be).  
For low peak values (e.g. lower than 3), VSTs are less effective.
 As a result, many approaches have been suggested to improve such transformations or exploit alternative approaches 
%that yield better results 
in the case of low-intensity signals \citep[e.g.,][]{salmon2014poisson,makitalo2010optimal,rond2016poisson,azzari2016variance,giryes2014sparsity}.
One example is the P$^4$IP approach \citep[][]{rond2016poisson} that applies a general plug-and-play prior approach on Poisson inverse problems. The method still relies on Gaussian-based denoising algorithms, however. Other studies \citep[such as][]{makitalo2010optimal,azzari2016variance} attempt to improve the transformation techniques in order to increase the performance of the methods. 
A prominent work is an iterative VST framework, known as I+VST+BM3D, introduced  
by \cite{azzari2016variance} that, in a series of consecutive iterations, combines the noisy observation with a previously estimation of the image in order to improve the signal-to-noise ratio (SNR). 
The algorithm can then cope with extremely low SNR cases and produce state-of-the-art denoising results.

\begin{table}
\caption{Comparison of the PSNR values (in terms of dB) for the considered Poisson denoising benchmarks. 
%PSNR comparison for the considered Poisson denoising benchmarks. 
%PSNR values are shown in terms of dB; 
The algorithms are divided into two groups for peaks 0.2 and 0.5, and three groups for peaks 1 and 2  (values of DenoiseNet and DCEA-UC do not exist for peaks 0.2 and 0.5). NLSPCA, SPDA and P-MCA are LVMs that assume a Poisson noise while the others  
are task-specific and highly optimized algorithms that either assume a Gaussian denoiser (the BM3D-based methods and P$^4$IP) or a DNN-based approach that is trained supervised (DenoiseNet and DCEA-UC). 
%We further considered three different peak values and the bold number denotes the best PSNR value for each of the images for the given peak value. 
For each peak value and each image, the bold numbers denote the best PSNRs in comparison to the other models of the corresponding subgroup; and the underlined number denotes the best PSNR among all models. Listed values are averages over five noise realizations. See Secs.~\ref{Sec_denoising} and  \ref{denoising} for details.
%As illustrated, P-MCA represents a competitive performance even though we have not used any specific pre- or post-processing method. Also, the model outperforms NLSPCA and SPDA in many cases that are considered here. The presented values are average over five different noise realizations.
}
\label{table:poisson_denoising}
\centering
\begin{tabular}{c c c c c c c c}
\noalign{\global\arrayrulewidth=0.3mm}
\arrayrulecolor{black}\hline
Method & Peak & House & Camera & Peppers & Saturn & Bridge & Flag \\ 
\noalign{\global\arrayrulewidth=0.3mm}
\arrayrulecolor{black}\hline
%BM3D & 0.1 & 13.05 & 16.28 & 15.66 & 15.61 & 19.42 & 15.68 \\ 
%VST+BM3D &  & 12.38 & 15.69 & 15.44 & 15.15 & 18.40 & 15.60 \\
%I+VST+BM3D &   & \textbf{16.01} & \textbf{18.48} & \textbf{17.45} & \textbf{16.45} & \textbf{21.64} & \textbf{17.30} \\
%P$^4$IP &   & 13.30 & 18.30 & 16.88 & 16.28 & 21.55 & 16.45 \\
%\noalign{\global\arrayrulewidth=0.01mm}
%\arrayrulecolor{gray}\hline
%SPDA &  & 13.35 & 14.84 & 14.36 & 14.28 & 17.40 & 14.60 \\
%NLSPCA &  & 14.42 & 17.81 & 16.41 & 16.24 & 20.86 & 16.59 \\ 
%%P-MCA ($H = 10$) &  & 14.28 & 18.10 & 17.29 & 16.38 & 20.69 & 16.51 \\
%P-MCA ($H = 20$) &  & 14.61 & 18.08 & 17.26 & 16.23 & 21.20 & 16.56 \\
%P-MCA ($H = 30$) &  & 14.69 & 18.00 & 17.42 & 16.24 & 20.90 & 16.57 \\
%\noalign{\global\arrayrulewidth=0.3mm}
%\arrayrulecolor{black}\hline
BM3D & 0.2 & 18.37 & 17.35 & 17.10 & 22.02 & 17.09 & 14.28 \\ 
VST+BM3D &  & 17.79 & 16.90 & 16.96 & 21.38 & 17.12 & 13.53 \\
I+VST+BM3D &   & \textbf{19.68} & \textbf{18.40} & \underline{\textbf{17.54}} & \textbf{23.13} & \underline{\textbf{18.13}} & \underline{\textbf{17.49}} \\
P$^4$IP &   & 19.48 & 17.82 & 17.31 & 23.05 & 17.54 & 14.82 \\
\noalign{\global\arrayrulewidth=0.01mm}
\arrayrulecolor{gray}\hline
SPDA &  & 17.83 & 16.93 & 16.75 & 21.52 & 16.80 & 16.58 \\
NLSPCA &  & 18.91 & 17.79 & 17.45 & 22.90 & 17.46 & 16.48 \\ 
%P-MCA ($H = 10$) &   & 15.84 & 19.17 & 17.95 & 17.35 & 22.91 & 17.40 \\
P-MCA ($H = 20$) &   & \underline{\textbf{19.69}} & 18.19 & 17.35 & 23.00 & 17.45 & 16.75 \\
P-MCA ($H = 30$) &   & 19.36 & \underline{\textbf{18.48}} & \textbf{17.46} & \underline{\textbf{23.30}} & \textbf{17.51} & \textbf{17.09} \\
\noalign{\global\arrayrulewidth=0.3mm}
\arrayrulecolor{black}\hline
%\hline
BM3D & 0.5 & 20.27 & 18.83 & 18.49 & 23.86 & 18.24 & 15.87 \\ 
VST+BM3D &  & 19.61 & 18.46 & 18.41 & 23.75 & 18.26 & 15.58 \\
I+VST+BM3D &  & \textbf{21.54} & \underline{\textbf{19.79}} & \underline{\textbf{19.05}} & \textbf{25.78} & \underline{\textbf{19.08}} & \textbf{18.60} \\
P$^4$IP &  & 20.93 & 19.27 & 18.86 & 25.19 & 18.47 & 16.50 \\
\noalign{\global\arrayrulewidth=0.01mm}
\arrayrulecolor{gray}\hline
SPDA &  & 20.51 & 18.90 & 18.66 & 25.50 & 18.46 & \underline{\textbf{19.67}} \\
NLSPCA &  & 20.85 & 19.23 & 18.78 & 24.91 & 18.50 & 18.80 \\ 
%P-MCA ($H = 10$) &   & 17.34 & 21.25 & 19.12 & 18.80 & 24.45 & 18.35 \\
P-MCA ($H = 20$) &  & 21.58 & 19.49 & \textbf{18.96} & \underline{\textbf{25.79}} & \textbf{18.52} & 18.50 \\
P-MCA ($H = 30$) &  & \underline{\textbf{21.86}} & \textbf{19.67} & 18.94 & 25.48 & 18.47 & 19.06 \\
\noalign{\global\arrayrulewidth=0.3mm}
\arrayrulecolor{black}\hline
%\hline
DenoiseNet & 1 & 22.87 & \underline{\textbf{21.59}} & \underline{\textbf{21.43}} & 26.26 & 19.83 & 19.45 \\
DCEA-UC &  & \textbf{23.00} & 21.47 & 20.91 & - & \underline{\textbf{19.87}} & - \\
\noalign{\global\arrayrulewidth=0.01mm}
\arrayrulecolor{gray}\hline
BM3D &  & 22.35 & 20.37 & 19.89 & 25.89 & 19.22 & 18.31 \\ 
VST+BM3D &  & 21.64 & 20.19 & 19.71 & 25.82 & 19.43 & 18.46 \\
I+VST+BM3D &  & \textbf{23.04} & \textbf{21.07} & \textbf{20.44} & \textbf{27.27} & \textbf{19.86} & \textbf{19.74} \\
P$^4$IP &  & 22.67 & 20.54 & 20.07 & 27.05 & 19.31 & 19.07 \\
\noalign{\global\arrayrulewidth=0.01mm}
\arrayrulecolor{gray}\hline
SPDA &  & 22.73 & 20.23 & 19.99 & 27.02 & 19.20 & \underline{\textbf{22.54}} \\
NLSPCA &  & 22.09 & 20.32 & 19.62 & 26.89 & 18.94 & 20.26 \\ 
%P-MCA ($H = 10$) &   & 17.34 & 21.25 & 19.12 & 18.80 & 24.45 & 18.35 \\
P-MCA ($H = 100$) &  & \underline{\textbf{23.42}} & \textbf{20.54} & \textbf{20.35} & \underline{\textbf{27.46}} & \textbf{19.34} & 21.28 \\
%P-MCA ($H = 300$) &  & 22.12 & 22.87 & \textbf{20.56} & 20.09 & 26.85 & 19.11 \\
\noalign{\global\arrayrulewidth=0.3mm}
\arrayrulecolor{black}\hline
%\hline
DenoiseNet & 2 & \textbf{24.77} & \underline{\textbf{23.25}} & \underline{\textbf{23.19}} & 28.37 & 20.80 & \textbf{21.38} \\
DCEA-UC &  & 24.52 & 22.94 & 22.94 & - & \underline{\textbf{20.82}} & - \\
\noalign{\global\arrayrulewidth=0.01mm}
\arrayrulecolor{gray}\hline
BM3D &  & 24.18 & 22.13 & 21.97 & 27.42 & 20.31 & 20.81 \\ 
VST+BM3D &  & 23.79 & 21.97 & \textbf{22.02} & 27.95 & 20.49 & 20.79 \\
I+VST+BM3D &  & 24.62 & \textbf{22.25} & 21.93 & \textbf{28.85} & \textbf{20.69} & \textbf{21.18} \\
P$^4$IP &  & \textbf{24.65} & 21.87 & 21.33 & 28.93 & 20.16 & 21.04 \\
\noalign{\global\arrayrulewidth=0.01mm}
\arrayrulecolor{gray}\hline
SPDA &  & \underline{\textbf{25.09}} & \textbf{21.54} & 21.23 & \underline{\textbf{29.38}} & \textbf{20.15} & \underline{\textbf{24.92}} \\
NLSPCA &  & 23.86 & 20.76 & 20.52 & 28.22 & 19.47 & 20.86 \\ 
%P-MCA ($H = 10$) &   & 17.34 & 21.25 & 19.12 & 18.80 & 24.45 & 18.35 \\
P-MCA ($H = 100$) &  & 24.70 & 21.16 & \textbf{21.44} & 28.64 & 20.11 & 21.41 \\
%P-MCA ($H = 300$) &  & 22.46 & 24.45 & 21.34 & 21.40 & 28.68 & 19.98 \\
\noalign{\global\arrayrulewidth=0.3mm}
\arrayrulecolor{black}\hline
\end{tabular}
\end{table} 

More relevant for comparison to our approach are methods that explicitly assume a Poisson noise. As discussed in the previous experiment, a data model that exploits the properties of the Poisson distribution (for the case of Poisson denoising) may be more suitable rather than a Gaussian-based model.  
In this direction, one important study is the work by \citet[NLSPCA;][]{salmon2014poisson} that uses Poisson PCA in which the link between latents and observables (as discussed before) is considered to be a weighted linear sum that sets the Poisson's natural parameter. In another study, \citet[SPDA;][]{giryes2014sparsity} introduce a Poisson sparse coding model capable of learning dictionaries from a Poisson distributed dataset. Their method likewise considers a linear superposition of the latents. 
In contrast, the P-MCA model that is used here uses maximization to directly set the means of the Poisson observables. 
Figure~\ref{fig:Poisson_denoising} illustrates the reconstructed 'House', 'Saturn' and 'Bridge' images using the P-MCA model. The estimated `House' image can be further compared with Fig.~9 depicted by \citet[][]{giryes2014sparsity} which presents the results of BM3D, SPDA and NLSPCA.  
%
% here exploits a non-linear (maximum) superposition of the latents that sets the mean of observables.
%Moreover, this model is only one example of the general results presented here.
%Details are further provided in Appendix~\ref{denoising}.
% Details are provided in Appendix ....

In addition to the approaches presented above, deep learning-based techniques have also been applied to Poisson denoising in recent years  \citep[see, e.g.,][]{remez2017deep,kumwilaisak2020image,deguchy2019deep,jin2018poisson,tolooshams2020convolutional}. Importantly, \citet[DenoiseNet;][]{remez2017deep} explored deep CNNs to denoise low-light images and reported state-of-the-art results for Poisson denoising at different peak values \citep[also see][for a further development]{remez2018class}. % for the newest publication of this work). 
%Such a network, however, cannot appropriately model the true statistical structure of the noise model. \todo{From JD: I would rather delete the previous sentence.} 
Another DNN-based algorithm for Poisson denoising is DCEA presented by \citet{tolooshams2020convolutional} which, similar to DenoiseNet, considers
a supervised setting for training. As mentioned earlier, the contribution  considers distributions of the natural exponential family which includes the Poisson distribution. \citeauthor{tolooshams2020convolutional} 
investigated their approach in two different settings:
% Poisson denoising has been considered by the authors in two different settings: 
Constrained and unconstrained DCEA, referred to as DCEA-C and DCEA-UN, respectively. 
Also compare
% similar approaches
related work 
 by \citet[][]{kumwilaisak2020image} who use multi-directional long-short term memory (LSTM) networks along with the 
CNNs. % to further capture and learn the statistics of the residual noise components. Nonetheless, describing the details of all these studies is far from our goals here and we refer the reader to the corresponding papers for a detailed discussion. 
Nevertheless, there are substantial differences between, for instance, DenoiseNet or DCEA-UN, on the one hand, and P-MCA, on the other. P-MCA is trained directly on the noisy image and the approach does not require a-priori information about the peak value of the corrupted data; this contrasts with DenoiseNet or DCEA-UN which are both trained supervised. 
Table~\ref{table:poisson_denoising} presents a quantitative performance comparison of different approaches using the standard measure of peak-signal-to-noise-ratio (PSNR). 
% using the P-MCA model together with other methods.
The methods we chose for comparison represent, to our best knowledge, state-of-the-art results for Poisson denoising at different peak values. We here chose four peak values and reported results for P-MCA considering different latent dimensionalities $H$ and using models with $D = 20 \times 20$ observables. In general, we observed that lower values of $H$ produce better results for low peak values while larger $H$ values result in better performance for high peak values.
An explanation could be that the information content of noisy data is lower (if the amount of data points remains the same) such that it becomes difficult to estimate the large number of parameters as required for a large $H$.  Consequently, such a hyper-parameter can be further optimized for our learning algorithm (observe the different $H$ values  for P-MCA in Tab.~\ref{table:poisson_denoising}).

As it can be observed in Tab.~\ref{table:poisson_denoising}, P-MCA produces competitive results and, in some cases, outperforms the other models; but it can also be outperformed, e.g., by I+VST+BM3D, DenoiseNet or DCEA-UN in different settings. At a closer inspection, however, these approaches are using 
considerable fine-tuning specific to images. Furthermore, the major difference compared to DenoiseNet or DCEA-UN is their supervised settings where they use clean (non-noisy) data for training.
%Also the algorithm is based on a Gaussian assumption (potentially after preprocessing). 
P-MCA is, like SPDA and NLSPCA, not tailored to images. All these three approaches also
have in common that they are directly based on the assumption of a Poisson distribution for the observables. 
%
%While both NLSPCA and SPDA assume a linear superposition link to the natural parameters, P-MCA assumes a maximum superposition for the Poisson mean. 
%
Considering Tab.~\ref{table:poisson_denoising}, P-MCA shows, in comparison to SPDA and NLSPCA, improved performance in terms of PSNR values in many of the investigated settings (especially at very low peak values). % (Tab.~\ref{table:poisson_denoising}).
Such improvements for one example of the exponential family (P-MCA) may argue in favor of the general approach represented by the proposed LVMs
and their optimization based on Theorems~\ref{theorem1} and \ref{theorem2}. %(compare Appendix~\ref{appendixA1}). 
It should be mentioned that we did not rigorously optimize the performance of the P-MCA model (and the other EF-MCA algorithms applied here) using, e.g., extensive parameter tuning. In general, 
% however, one can further exploit task specific averaging procedures or additional methods to avoid local optima.
task performance could be improved, e.g., by exploiting task-specific patch-averaging procedures or additional methods to avoid local optima.

\begin{figure}[t!]
\centering
\includegraphics[scale=1.0]{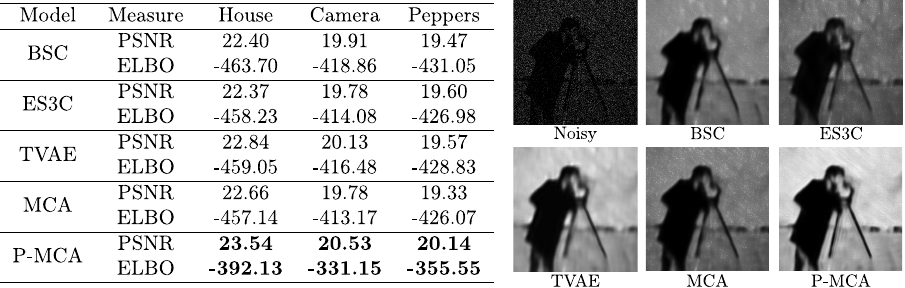}
\caption{Comparison of the PSNR and ELBO values obtained by the considered LVMs for Poisson denoising at peak 1. The results correspond to one run of each algorithm. Best values are highlighted.}
\label{table:PoissonResults}
\end{figure}

Besides the comparisons presented above, we also compared the performance of the P-MCA model with other similar Gaussian-based LVMs trained with EVO.  To this,  we considered a version of MCA studied in \citet[][]{SheikhEtAl2019}, binary sparse coding \citep[BSC;][]{HennigesEtAl2010},  truncated variational autoencoder \citep[TVAE;][]{DrefsEtAl2022a} and evolutionary spike-and-slab sparse coding (ES3C; \citet[][]{DrefsEtAl2022}; also compare \citet[][]{SheikhEtAl2014,goodfellow2012scaling}).  
%\citep[ES3C;][]{DrefsEtAl2022,SheikhEtAl2014,GoodfellowEtAl2011}.  
%The later is SC model that uses a spike-and-slab prior and all the others exploit binary latents. 
%
These are all linear or non-linear LVMs that use binary latents (except ES3C that uses a spike-and-slab prior) and 
we have trained all of them using EVO \citep[][]{DrefsEtAl2022} with identical hyper-parameters (the details are discussed in Appendix~\ref{denoising}).  
Figure~\ref{table:PoissonResults} depicts the performance comparison of these approaches for Poisson denoising at peak value 1 (also see Fig.~\ref{table:PoissonResults_appendix}).   
As observed,  P-MCA outperforms all the other models (that are tailored to the Gaussian noise) in both achieving higher ELBO values (although the models are not directly comparable due to the different number of parameters),  and also higher PSNRs for the reconstruction. 
We also repeated the experiment at the peak value of 4.  
In that case,  TVAE obtained the highest PSNRs followed by P-MCA being the second best model. 
%. This may be attributed to,  loosely speaking, the Poisson noise be similar to the Gaussian at higher peak values. 

%Finally, Fig.~\ref{fig:Poisson_denoising} illustrates the reconstructed 'House', 'Camera' and 'Peppers' images using the P-MCA model. The estimated `House' image can be further compared with Fig.~9 depicted by \citet[][]{giryes2014sparsity} which presents the results of BM3D, SPDA and NLSPCA methods. 
%Moreover, Figs.~\ref{fig:denoising_PP2} and \ref{fig:denoising_CP4} depict the denoising results of the `peppers' and `cameraman' images, respectively.
%Next, for another example, Fig.\ \ref{fig:denoising_cameraman} depicts the restored Cameraman image (peak = 5) using the P-MCA model in comparison to SSSC, BM3D, VST+BM3D and other MCA models.
%
%
% that differs from the generative model we considered here.
%We here used BM3D to either directly denoise the corrupted image or the transformation of the data after applying the Anscombe root transformation \cite{anscombe1948transformation}. The latter is denoted by VST+BM3D (see Appendix~\ref{denoising} for further details  about the algorithm and employed hyper-parameters). 
%Also the reconstruction performances are computed in terms of Peak Signal to Noise Ratio (PSNR). 

\begin{figure}[t!]
\centering
\includegraphics[scale=0.3]{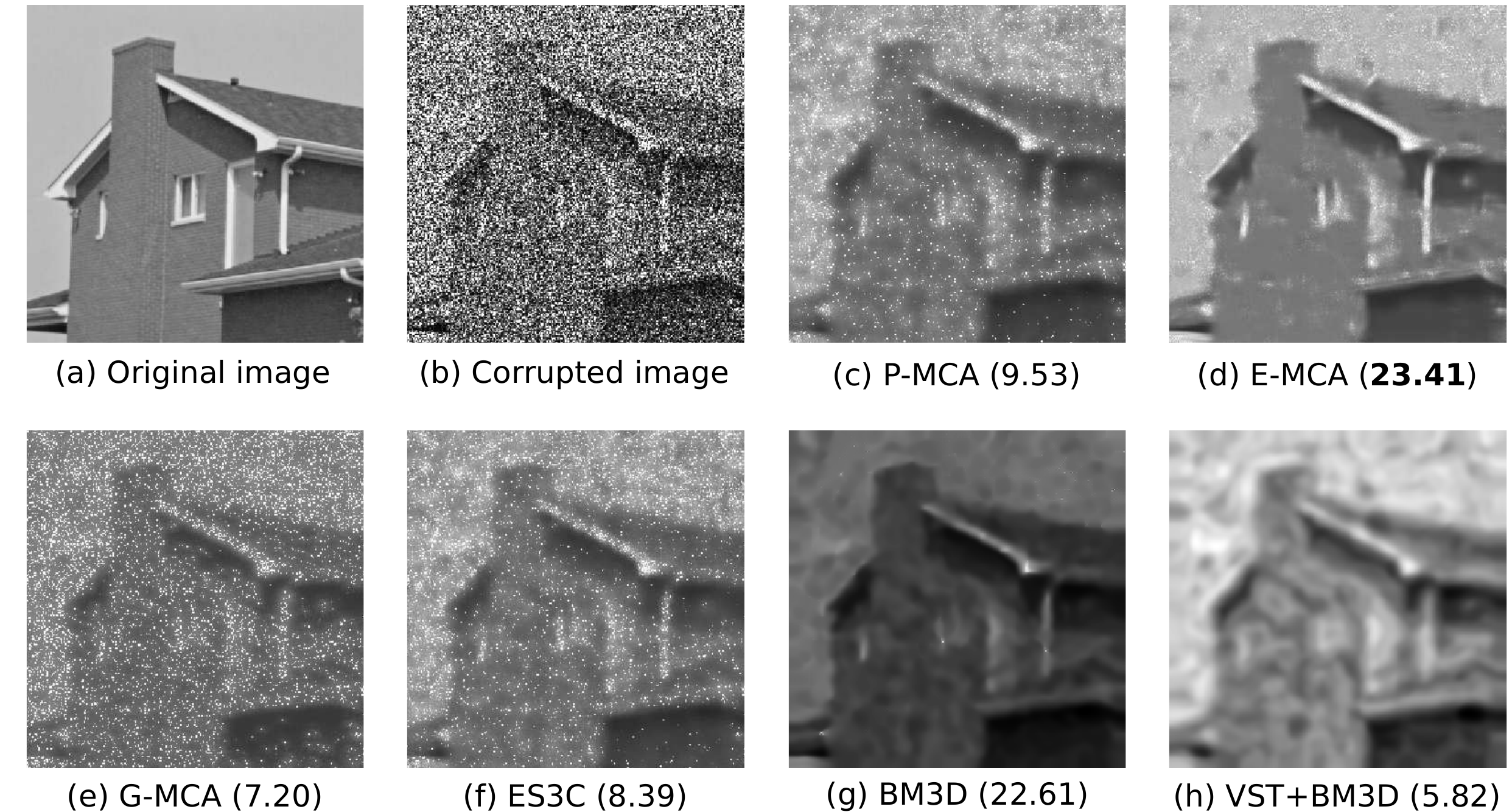}
\caption{Denoising of the House image when corrupted by Exponential noise (we did not rescale the image before adding noise). Depicted are (a) original, (b) corrupted and (c)-(h) reconstructed images and the corresponding PSNRs in terms of dB obtained by different models (see text for details). Here, E-MCA achieves the highest PSNR in comparison to the others which is consistent with its Exponential noise assumption. Note that P-MCA and E-MCA use one-parameter distributions as noise models and only use the information of one dictionary (related to the mean) while the others (G-MCA, ES3C and BM3D) use additional information corresponding to the variance parameter. 
%to model second-order statistics.
}
\label{fig:denoising_house}
\end{figure}

%\begin{SCfigure*}
%  \centering
%\includegraphics[scale=0.5]{house_denoising1.eps}
%\caption{\textbf{A} The original House image. \textbf{B} Corrupted images using Poisson noise (the top figure) and exponential noise (the bottom figure). Figures \textbf{C}-\textbf{E} respectively show the reconstructed images using Gaussian-MCA (C), Exponential-MCA (D) and Poisson-MCA (E) models. The top figures correspond to the denoising of House image corrupted with Poisson noise and the bottom to the exponential noise.\vspace{0ex}}
%\label{fig:denoising}
%\end{SCfigure*}
%
%\begin{figure}[htp]
%\centering
%\includegraphics[width=.3\textwidth]{house_original_clim_0_255.png}
%\includegraphics[width=.3\textwidth]{house_noisy_gaussian_noise_PSNR16_06_clim_m100_356.png}
%\includegraphics[width=.3\textwidth]{house_noisy_gamma_noise_PSNR16_08_clim_0_447.png}
%\caption{The house image with added Gaussian (middle) and Gamma (right) noises.}
%\label{fig:house}
%\end{figure}

%We use P-MCA here as we can compare results with other approaches
%applied to Poisson denoising. 
%
%
%
% \noindent{\bf Exponential noise.} 
\paragraph{Exponential noise.} 
Exponential noise is treated much less frequently such that a comparison similar to Tab.~\ref{table:poisson_denoising} is not possible. Instead, we here used the example of Exponential noise to highlight the importance of using the correct noise distribution. 
%
%For Exponential noise (using E-MCA), a similar comparison as for the Poisson noise is not possible because, to the best of our knowledge, no other
%LVM is tailored to the removal of Exponential noise. Consequently, E-MCA illustrates an example where the general
%results presented here give rise to an algorithm directly applicable to less commonly encountered noise types. 
%
% so E-MCA would represent the first latent variable model
%directly tailored to exponential noise removal, to our best knowledge.
%
%Likewise, our P-MCA model specific to the Poisson noise. For Exponential noise, nonetheless, there is no efficient algorithm, to our best knowledge, dedicated to such a comprehensive task. 
%
%
%Figure \ref{fig:denoising_house} illustrates result for denoising the House image corrupted by Exponential noise. 
That is, we exploited E-MCA as an instance of our LVMs
with Exponential noise together with P-MCA and G-MCA (i.e., models assuming Poisson and Gaussian noise, respectively) for denoising the House image corrupted by Exponential noise.  
Additionally, we further 
applied the BM3D, VST+BM3D and ES3C algorithms. 
%The latter is a SC approach with a flexible prior distribution which incorporates a Gaussian noise model. 
%
%As observed in Fig.~\ref{fig:denoising_house}, 
As illustrated in Fig.~\ref{fig:denoising_house}, 
E-MCA achieved the highest PSNR compared to all other approaches, which could be expected because E-MCA assumes the true image noise. Also, 
%P-MCA and G-MCA perform reasonably well in comparison to ES3C; and VST+BM3D is shown to perform the worst, 
BM3D performed reasonably well; however, in combination with the Anscombe transformation, the algorithm (VST+BM3D) achieved the lowest PSNR, 
which may be attributed to the Anscombe transformation being specific to Poisson noise. 
Visual inspection may even argue in favor of using the right noise model more than what PSNRs would suggest. In fact, Fig.\ \ref{fig:denoising_house}
shows larger differences between the reconstructed images by E-MCA and BM3D than what may be expected by considering the PSNRs of the two approaches.

%Consistent with the higher PSNRs, we observed that, e.g., P-MCA better recovers the structure of the original image when the noise is Poisson and vice versa for E-MCA. This, however expected, further highlights the importance of tasks such as noise type estimation and model selection. Furthermore, the results emphasize the utility and practical applicability of the presented approach for optimization of non-linear generative models. 
%
% where the the considered noise distribution of the model should correspond to the true structure of the data.
%
%the performance of any denoising algorithm depends on many
%aspects of the model such as the used approximate inference approach for the E-step, the averaging and inpainting algorithm which
%uses the trained data model, and methods to avoid local optima in training a dictionary.  
%Nevertheless, the examples here may only serve to sufficiently illustrate the usefulness of the derived, generally applicable update equations. 
%
%
%our centeral goal that the usage of Gaussian based models for data with non-Gaussian noises is not the optimal choice. Also note that the parameters of the considered models are different that further requires penalty terms for a better comparison of the models.
%
%The grey values of the sky, roof, stone walls, and of the white edges are recovered much more reliably than for Gaussian-MCA. Also detailed structures like the visible window are recovered
%by Exponential-MCA (but not by Gaussian-MCA).

\section{Discussion}
%
%Learning algorithms based on data representations 
Latent variable models (LVMs) have found very wide-spread use in Machine Learning, Statistics and Artificial Intelligence applications.
A central element of basically all LVM approaches is the procedure which optimizes an LVM's parameters given a set of data points.
%
%is the update equation the parameters that determine how the latents variables are coupled to observables (the dictionaries).
%
For non-Gaussian observables, optimization is typically much more challenging than for Gaussian observables, and especially so when we seek generic optimization methods for a broad class of observable distributions. 

Here we studied an example of LVMs with a very generally defined range of observable distributions.
By using binary latents in combination with maximization in order to link latents to the means of observables, we could address parameter optimization in general.
While it would be challenging to use gradient based approaches for the maximum, the maximum combination of latents has convenient analytical properties. In our study, these properties allowed for the derivation of the main result: A general set of concise parameter update
equations applicable to any (regular and minimal) distribution of the exponential family. For small scale problems, optimization can
be performed based on full posteriors. For larger scale problems, we applied a novel variational acceleration \citep[EVO;][]{DrefsEtAl2022},
which is likewise generically and directly applicable (see Sec.~\ref{sec_EVO}).
%
% Furthermore, we have demonstrated scalability, e.g., by using approximate posteriors for efficient variational optimization (see Sec.\,4.2).

%
%We are not aware of any linear or non-linear latent variable model for which similarly general update equations have been derived.
%
The most closely related approach, EF-SC \citep[][]{lee2009exponential}, used MAP-based optimization and distributions of the exponential
family with a weighted linear sum to define the natural parameters of the used observation noise. Such a link from latents to observables has
advantages in terms of MAP-based training (mono-modal posteriors are maintained). A disadvantage is that different observable distributions
essentially define different non-linear superposition models, i.e., the observable means depend differently on the weighted
summation of latents for different observation distributions. Also, how to realize 
a link from latents to two (or more) parameter distributions seems to be an open question for previous approaches. At least (and as previously stated),
neither EF-SC nor PCA-like approaches which use the same link \citep[][]{CollinsEtAl2002,MohamedEtAl2009,salmon2014poisson} report numerical results for 
exponential family distributions with more than one parameter.
%more than one-parameter distributions of the exponential family. 
Additionally, EF-SC would not be able to learn prior parameters because of the used MAP approaches (additional cross-validation would be required). 
%
%numerical results only for single-parameter distributions. To maintain trainability with MAP the latents were linked to the natural parameters linearly (i.e., the link differs between different noise distributions, and the linearity did not always set the mean of the distribution). 
%
In contrast, the family of LVMs considered here is generally defined with parameter update equations (Theorems~\ref{theorem1} and \ref{theorem2})
%; see Appendix~\ref{appendixA1} for Theorem~\ref{theorem2}) 
that are directly applicable to any choice of a (regular and minimal) exponential family distribution. The used exact inference (for small $H$) or variational approximation (for large $H)$, furthermore, allows for learning of prior parameters.
%
%maximum non-linearity always sets the mean of a given observable distribution. The used definition ensures that the mean follows a maximum superposition in case of one or %several parameters, and even if none of the sufficient statistics is the identity function. 
%
%In contrast, we here derived general update equations that update all parameters of exponential family distributions with one and two parameter distributions
%as numerical examples.   
%
For the approach investigated here, we consequently argue that it represents an exceptionally flexible and generally applicable LVM approach for unsupervised learning.
% which has been suggested so far.
It encompasses, in principle, any choice of an exponential family distribution for its observables. As such, it allows for using many more distributions of the family than have here or elsewhere been considered including Categorical, Dirichlet, normal-gamma etc.
Preliminary example algorithms based on one- and two-parameter distributions of the exponential family suggest that feasible and competitive learning algorithms are obtained when applying the results derived here.

%In a broader context, relations to the generalized linear model \citep[GLM;][]{NelderWedderburn1972} may be of interest. GLM research can be considered as its own %subfield, which is motivated by the large amounts of non-Gaussian data for which generalized linear regression methods are required. 
%
 The LVMs considered here share with the generalized linear model \citep[GLM;][]{NelderWedderburn1972} that both approaches are relatively generally defined.
% also for observable distributions of the exponential family with more than one parameter.
%Also, notably, \textcolor{blue}{both of these approaches} do provide generic parameter optimization procedures. 
%
For the here considered LVMs (EF-MCA), Theorems~\ref{theorem1} and \ref{theorem2} in conjunction with EVO represent a generic optimization procedure; for GLMs an example of a generic optimization is based on iterative reweighted least square optimization which results in generally applicable (iterative) parameter updates \citep[][]{McCullaghNelder1989}. The importance of the latter and similar approaches is reflected by a large variety of software packages for GLMs \citep[][]{RPackage}.
The class of EF-MCA models and the class of GLM models differ very significantly, however. Most salient is the focus of GLMs on regression compared to the focus on unsupervised learning for EF-MCA. However, also taking this difference aside, GLMs and EF-MCA strongly differ by how latents are linked to observables. GLMs are by definition (and by name) using weighted summation while EF-MCA uses weighted maximization. Furthermore, GLMs usually consider continuous latents while EF-MCA considers binary latents. Finally, also the treatment of multiple parameter distributions is different: GLMs can be defined with observable variances depending on the mean in a specific way, while the focus
remains regression through modeling of the mean dependence. For EF-MCA, multiple matrices are used to link latents and observables (see Fig.~\ref{fig_Network}) in the case when observable distributions with multiple parameters are used (compare Fig.~\ref{fig:image_patches20}). The use of multiple matrices is also a distinguishing feature compared to the other unsupervised learning approaches with more generally defined observable distributions \citep[][]{CollinsEtAl2002,MohamedEtAl2009,lee2009exponential}. 
%
%which are defined more generally for non-Gaussian observables.

Nevertheless, the central results here (i.e., the generic parameter update equations discussed in Theorems~\ref{theorem1} and \ref{theorem2}) are based on the assumption of binary latents and a maximum superposition, which 
may limit the application of our EF-MCA data models.  
%
%The question if summation followed by a point-wise non-linearity (cf.\ GLMs and similar EF-PCA or EF-SC) is more suitable than maximization does strongly depend on the used data, of course. If the true data generating process is more closely aligned with post-summation non-linearities to set the mean, then EF-PCA or EF-SC will represent more suitable models. If maximization to set the mean is more closely aligned with the true generating process, EF-MCA will be preferable.  
%
%
%
While for many types of data, summation is a plausible assumption (e.g. acoustic waveforms for acoustic data), it has been argued that, e.g., for images or spectrogram representations of sounds, a maximum superposition from latents to observables is more closely aligned with the true generating process \citep[see, e.g.,][for a discussion]{Roweis2003,LuckeSahani2008,BornscheinEtAl2013,FrolovEtAl2016,SheikhEtAl2019}. Likewise, discrete latents may represent more realistic modeling choices in some cases \citep[e.g.\ for diseases/symptoms relations,][]{SingliarEtAl2006,JerniteEtAl2013,MousaviEtAl2021} while for other data continuous latents are clearly more suitable.

%
%
%Such non-linear superpositions could go further in the direction of parameterized non-linearlities (cf., non-linear ICA models as discussed in Sec....), e.g., using piece-wise maximization 
%
%in the form of a general non-linear function (as studied, e.g., by non-linear ICA models) or in the form of other specific functions.   

%
In order to numerically investigate the derived optimization procedure, we used example applications which included
structure finding in image patches, automatic estimation of noise distributions \citep[also compare][]{vergari2019automatic,valera2017automatic}, and denoising of images subject to non-Gaussian noises. In the case of Poisson noise, where a number of alternative approaches are available for comparison, we observed competitive performance. In particular, denoising results improved on models
which used Poisson noise with linear superposition, namely the P-MCA algorithm improved on Non-Local Sparse Principal Component Analysis \citep[NLSPCA;][]{salmon2014poisson} and the Sparse Poisson Denoising Algorithm \citep[SPDA;][]{giryes2014sparsity}. Comparison to NLSPCA and SPDA may be particularly
interesting in the context of this work. Both NLSPCA and SPDA use the standard linear link to observables via the natural parameter of the Poisson distribution, whereas P-MCA sets the mean of the Poisson  using maximization. 
%Also, NLSPCA and SDPA use a linear superposition, while P-MCA uses the maximum. 
Besides, NLSPCA and SDPA exploit continuous
latents while P-MCA uses binary latents. 
Tab.~\ref{table:poisson_denoising} shows, in general, a better denoising performance of P-MCA compared to both NLSPCA and SDPA at low-SNR values (although SPDA also yields very competitive results in some cases).
%
%
%
%Such results also demonstrate competitiveness of binary latents which seem to reflect presence/absence of image structures better than the continuous
%latents of NLSPCA and SDPA (this seems more relevant than the ability of continuous latents to better capture different intensities).
%
%
%
%Comparison in Tab.~\ref{table:poisson_denoising} shows a better denoising performance of SPDA compared
%to NLSPCA as the method has applied many mechanisms to improve the performance (see \cite{giryes2014sparsity}). Notably, P-MCA in general improves on the performance of both SDPA and NLSPCA. 
Poisson noise is just one example of an exponential family distribution, and denoising is just one application. Still, we note that P-MCA is competitive for the task without using additional, task-specific mechanisms (compare NLSPCA and SDPA) and without extensive fine-tuning of hyper-parameters.
Further applications of our approach included observables following Bernoulli and Exponential distributions (both one-parameter distributions) and, importantly, observables following Gaussian, Gamma and Beta distributions (as two-parameter distributions). The Beta distribution has also been considered in an application of our results to medical data of hearing impairments \citep[][]{MousaviEtAl2021}. 

%
%In fact, the only actual LVM algorithms for observables following two parameter 
%
In analogy to the literature on standard SC, further application examples would include inpainting, compression, feature learning, or compressed sensing using in principle any regular exponential family as observable distribution.  
Future work may also include the use of more richly structured prior distributions. As the derived update equations
apply in general for any binary latents, the independent Bernoulli prior could also be replaced by more general distributions for binary variables \citep[e.g.][]{DaiEtAl2013}
including those parameterized by deep models. Binary latents for deep generative graphical models are very common such that, 
e.g., deep SBNs \citep[][]{SaulEtAl1996,GanEtAl2015} or deep restricted Boltzmann Machines \citep[][]{Larochelle2008} could be used as well as deep models with
deterministic non-linearities \citep[e.g.][]{JangEtAl2017}. Future research may also investigate how the current results can be extended to other forms of non-linear superpositions. The non-linearity \eqref{eq:mmca_combination_rule} or an occlusion non-linearity as discussed by \citet[][]{DaiLucke2014} do maintain crucial properties required for Theorems~\ref{theorem1} and \ref{theorem2}, for instance. And non-linear superposition models could go further in the direction of deep models, e.g., by using repeated maximization with several layers of stochastic binary latents. Finally, also hybrid approaches combining results of LVMs with binary and continuous latents (cf.\,Sec.\,\ref{Relation_to_Previous_Work}) could represent a promising future research direction.

%that combine results of this work and related work on models with continuous latents could be investigated 
%
%
%e.g., deep SBNs \citep[][]{SaulEtAl1996,GanEtAl2015}, non-linear ICA with binary latents CITE, or deep restricted Boltzmann Machines \citep[][]{Larochelle2008} could be used.

%OLD\\
%\ \\
%For the data model proposed here (Eqns. (\ref{modelEq1})-(\ref{modelEq3}) together with (\ref{eta}), (\ref{meanf_eq}) and (\ref{maxh})), 
%the here derived approach can be regarded as 
%
%the used maximum superposition can be seen as a generalization 
%of the Gaussian case but we will maintain that the used superposition sets
%the mean of a given observable distribution (and not its natural parameters). % (in contrast to setting the natural parameters). 
%Notably, for the specific choice of the Poisson distribution, approaches such as Poisson FA \citep[][]{ZhouEtAl2012} 
%also have chosen to couple the latents to the mean of the distribution rather than to its natural parameter. 
%For our non-linear superposition presented in (\ref{eta}), (\ref{meanf_eq}) and (\ref{maxh}), we also kept such a coupling to the distribution mean but here the coupling is defined for the general case.
%

% Acknowledgements should go at the end, before appendices and references

\acks{This work was funded by the Deutsche Forschungsgemeinschaft (DFG, German Research Foundation) within the excellence initiative (cluster of excellence H4a 2.0, EXC 2177/1 - project 390895286) and by the frame of 
the priority program SPP 2298 "Theoretical Foundations of Deep Learning" 
- project 464104047 (HM and JL). Furthermore, the contributions by FH 
were funded by the German Ministry of Research and Education (BMBF) in 
project 05M2020 (SPAplus, TP3), and the contributions by JD were funded 
within the DFG project 352015383 (HAPPAA, CRC 1330, B2). Large-scale 
numerical experiments were supported by the HPC Cluster of Oldenburg 
University (INST 184/225-1 FUGG) and by the HLRN network of HPC clusters 
(project nim00006).
}

%%%%%%%%%%%%%%%%%
%%%%% Old:
%\acks{We would like to acknowledge support by the German Research Foundation (DFG) in project 390895286 (cluster of excellence H4a 2.0, EXC 2177/1) and project 464104047
%of the priority program SPP 2298 (recipients HM and JL, LU 1196/9-1). Furthermore, part of this work was funded by the German Ministry of Research and Education (BMBF) in project 05M2020 (SPAplus, TP3, recipients HM and FH), and JD was supported within the DFG project 352015383 (HAPPAA, CRC 1330, B2). Large-scale numerical experiments were supported by the HPC Cluster of Oldenburg University (INST 184/225-1 FUGG) and by the HLRN network of HPC clusters (project nim00006). 
%}

%\bibliography{cnml-all}

% Manual newpage inserted to improve layout of sample file - not
% needed in general before appendices/bibliography.
%\newpage

\appendix

\section{Proof of the Main Theorems}
\label{app:theorem}
We here provide the full proofs of Theorems~\ref{theorem1} and \ref{theorem2} as well as further details.
%
%of our theoretical findings and present an elaborate proof for the Theorems~\ref{theorem1} and \ref{theorem2}. 
%We will then discuss ... and additional results that we will present hereafter. 
First consider the function $\mathcal{A}_{dh}(\vec{s}, \Theta)$ defined in \eqref{Adh}. We show that the function has a useful property which is exploited in our proofs. 
%optimization procedure further below. 
For special cases of EF-MCA data models, this property has been observed before \citep[][]{LuckeSahani2008,LuckeEggert2010}
but the following Lemma represents the required generalization for exponential family observables.
%
%a similar property as the one of Lemma~\ref{lemma1} has been used before \cite{LuckeSahani2008,LuckeEggert2010} but Lemma~\ref{lemma1} is a %generalization.
%
%\begin{lemma}
%Consider $\bar{W}_d(\vec{s}, \Theta)$ and $\bar{V}_d(\vec{s}, \Theta)$ that are defined in \eqref{maxh}. For any well-behaved function $g(.)$ and any arbitrary $\vec{s} \in \{0, 1\}^H$, we have:
%\begin{align}\label{g_W_approximation}
%\mathcal{A}_{dh}(\vec{s}, \Theta) g(\bar{W}_d(\vec{s}, \Theta)) = \mathcal{A}_{dh}(\vec{s}, \Theta) g(W_{dh})
%\end{align}
%and likewise\vspace{-1ex}
%\begin{align}\label{g_theta_approximation}
%\mathcal{A}_{dh}(\vec{s}, \Theta) g(\bar{V}_d(\vec{s}, \Theta)) = \mathcal{A}_{dh}(\vec{s}, \Theta) g(V_{dh}).
%\end{align}
%\end{lemma}
\begin{lem}\label{lemma1}
Consider $\bar{W}_d(\vec{s}, \Theta)$ and $\bar{V}_d(\vec{s}, \Theta)$ that are defined in \eqref{maxh}. Then, for any well-behaved function $g$ and any arbitrary $\vec{s} \in \{0, 1\}^H$, we have
\begin{align}\label{g_W_approximation}
\mathcal{A}_{dh}(\vec{s}, \Theta)\, g \big( \bar{W}_d(\vec{s}, \Theta), \bar{V}_d(\vec{s}, \Theta) \big) = \mathcal{A}_{dh}(\vec{s}, \Theta)\, g(W_{dh}, V_{dh}).
\end{align}
\end{lem}
\begin{proof}
Given a vector $\vec{s}$, for each pair $(d_o, h_o)$ either of the following applies:
\begin{align}
h_o = h(d_o, \vec{s}, \Theta), \quad \mathrm{or} \quad h_o \neq h(d_o, \vec{s}, \Theta), \nonumber
\end{align}
where $h(d_o, \vec{s}, \Theta) = \mathrm{argmax}_{h} \{M_{d_oh}(\Theta)\, s_h\}$. 
First, let $h_o = h(d_o, \vec{s}, \Theta)$. It follows from (\ref{maxh}) that
\begin{align}
\mathcal{A}_{d_oh_o}(\vec{s}, \Theta) g \big( \bar{W}_{d_o}(\vec{s}, \Theta), \bar{V}_{d_o}(\vec{s}, \Theta) \big) &= \mathcal{A}_{d_oh_o}(\vec{s}, \Theta) g(W_{d_oh(d_o, \vec{s}, \Theta)}, V_{d_oh(d_o, \vec{s}, \Theta)}) \cr 
&= \mathcal{A}_{d_oh_o}(\vec{s}, \Theta) g(W_{d_oh_o}, V_{d_oh_o}). \nonumber
\end{align}
On the other hand, it follows from $h_o \neq h(d_o, \vec{s}, \Theta)$ and (\ref{Adh}) that $\mathcal{A}_{d_oh_o}(\vec{s}, \Theta) = 0$ which trivially satisfies the claim in \eqref{g_W_approximation}.
\end{proof}

%\begin{lem}\label{lemma2}
%Consider $\bar{W}_d(\vec{s}, \Theta)$ and $\bar{V}_d(\vec{s}, \Theta)$ that are defined in \eqref{maxh}. Then, for any well-behaved function $g(\cdot)$ and any arbitrary $\vec{s} \in \{0, 1\}^H$, we have
%\begin{align}\label{g_W_approximation}
%\mathcal{A}_{dh}(\vec{s}, \Theta) g(\bar{W}_d(\vec{s}, \Theta)) = \mathcal{A}_{dh}(\vec{s}, \Theta) g(W_{dh})
%\end{align}
%and likewise \vspace{-1ex}
%\begin{align}\label{g_theta_approximation}
%\mathcal{A}_{dh}(\vec{s}, \Theta) g(\bar{V}_d(\vec{s}, \Theta)) = \mathcal{A}_{dh}(\vec{s}, \Theta) g(V_{dh}).
%\end{align}
%\end{lem}
%\begin{proof}
%Given a vector $\vec{s}$, for each pair $(d_o, h_o)$ either of the following applies:
%\begin{align}
%h_o = h(d_o, \vec{s}, \Theta), \quad \mathrm{or} \quad h_o \neq h(d_o, \vec{s}, \Theta).\nonumber
%\end{align}
%First, let $h_o = h(d_o, \vec{s}, \Theta)$. Then, it follows from (\ref{maxh}) that
%\begin{align}
%\mathcal{A}_{d_oh_o}(\vec{s}, \Theta) g(\bar{W}_{d_o}(\vec{s}, \Theta)) = \mathcal{A}_{d_oh_o}(\vec{s}, \Theta) g(W_{d_oh(d_o, \vec{s}, \Theta)}) = \mathcal{A}_{d_oh_o}(\vec{s}, \Theta) g(W_{d_oh_o}). \nonumber
%\end{align}
%On the other hand, it follows from $h_o \neq h(d_o, \vec{s}, \Theta)$ and (\ref{Adh}) that $\mathcal{A}_{d_oh_o}(\vec{s}, \Theta) = 0$ which trivially satisfies the claim in \eqref{g_W_approximation} . The proof of Eqn. \eqref{g_theta_approximation} is analogous. 
%\end{proof}

\noindent{}The lemma illustrates how multiplication by $\mathcal{A}_{dh}(\vec{s}, \Theta)$ simplifies the left-hand-side of the equation \eqref{g_W_approximation} to the right-hand-side expression in which the second factor is independent of $\vec{s}$. The function 
%
% the dependency of $\vec{s}$ and simplifies the left-hand-side of the equation  \eqref{g_W_approximation}, where 
%
$g$ can be any arbitrary function with arguments $\bar{W}_d(\vec{s}, \Theta)$ and $\bar{V}_d(\vec{s}, \Theta)$.  
Moreover, it should be noted that we here assumed the case of $L = 2$ and presented the lemma using $\bar{W}_d(\vec{s}, \Theta)$ and $\bar{V}_d(\vec{s}, \Theta)$. 
For the case of arbitrary $L$, the straightforward generalization of the lemma applies by considering the definition \eqref{maxhgen} and a function $g$ with argument(s) $\bar{W}_d^{(1)}(\vec{s}, \Theta), \ldots, \bar{W}_d^{(L)}(\vec{s}, \Theta)$. The general case will be used for the proof of Theorem~\ref{theorem2}.
%This property will be directly used in our proofs  
%
%
%\begin{proof}
%We first prove the relation stated in \eqref{g_W_approximation}. To this, note that for each pair $(d_o, h_o)$ either of the following applies:
%\begin{align}
%h_o = h(d_o, \vec{s}, \Theta) \quad \mbox{or} \quad h_o \neq h(d_o, \vec{s}, \Theta).\nonumber
%\end{align}
%Now let $h_o = h(d_o, \vec{s}, \Theta)$, then it follows from (\ref{maxh}) that
%\begin{align}
%\mathcal{A}_{d_oh_o}(\vec{s}, \Theta) g(\bar{W}_{d_o}(\vec{s}, \Theta)) = \mathcal{A}_{d_oh_o}(\vec{s}, \Theta) g(W_{d_oh(d_o, \vec{s}, \Theta)}) = \mathcal{A}_{d_oh_o}(\vec{s}, \Theta) g(W_{d_oh_o}). \nonumber
%\end{align}
%On the other hand, it follows from $h_o \neq h(d_o, \vec{s}, \Theta)$ and (\ref{Adh}) that $\mathcal{A}_{d_oh_o}(\vec{s}, \Theta) = 0$ which trivially satisfies the claim. The proof of \eqref{g_theta_approximation} is analog. \QED 
%\end{proof}
%
%Having defined the EF-MCA data models in Sec.\,\ref{sec:nonlinear_sc_models} and 

\subsection{Parameter Update Equations -- Two-Parameters Case}
\label{appendixA0}

We first prove Theorem~\ref{theorem1} 
%using the results of Lemma~\ref{lemma1} and Eqns.~\eqref{derivatives_WbarVbar1}-\eqref{derivatives_WbarVbar2}. 
%The proof then shows
and show that the derivatives of $\mathcal{F}(q, \Theta)$ w.r.t.\ $W_{dh}$ and $V_{dh}$ yields the corresponding equations for $W$ and $V$ in \eqref{Update_W} and \eqref{Update_V}.

\begin{proof} 
\hspace*{-0.7ex}\textbf{of Theorem~\ref{theorem1}}
Consider a single dictionary element $W_{dh}$ and, for the sake of brevity, let us abbreviate $ \bar{W}_d(\vec{s}, \Theta)$ and $\bar{V}_d(\vec{s}, \Theta)$ by $\bar{W}_d$ and $\bar{V}_d $, respectively. Then using the chain rule and Eqns. \eqref{eta} and \eqref{derivatives_WbarVbar1}-\eqref{derivatives_WbarVbar2}, we obtain:

\begin{align}\label{theorem_Pderivative1}
\frac{\partial}{\partial W_{dh}} &\log(p(y_d; \vec{\tilde{\eta}}_d(\vec{s}, \Theta))) = \frac{\partial}{\partial W_{dh}} \log(p(y_d; \vec{\Phi}(\bar{W}_d, \bar{V}_d)))\\
=& \sum_{l=1}^2 \Big( \frac{\partial}{\partial W_{dh}} \Phi_l(\bar{W}_d, \bar{V}_d) \Big) \Big( \frac{\partial}{\partial \eta_l} \log(p(y_d; \vec{\eta}\,)) \big|_{\vec{\eta} = \vec{\Phi}(\bar{W}_d,\bar{V}_d)} \Big) \cr
=& \sum_{l=1}^2 \Big( \frac{\partial}{\partial W_{dh}} \bar{W}_d \Big) \Big( \frac{\partial}{\partial w} \Phi_l(w, \bar{V}_d) \big|_{w = \bar{W}_d} \Big) \Big( \frac{\partial}{\partial \eta_l} \log(p(y_d; \vec{\eta}\,)) \big|_{\vec{\eta} = \vec{\Phi}(\bar{W}_d,\bar{V}_d)} \Big) \cr
&+ \sum_{l=1}^2 \Big( \underbrace{\frac{\partial}{\partial W_{dh}} \bar{V}_d}_{=\ 0} \Big) \Big( \frac{\partial}{\partial v} \Phi_l(\bar{W}_d, v) \big|_{v = \bar{V}_d} \Big) \Big( \frac{\partial}{\partial \eta_l} \log(p(y_d; \vec{\eta}\,)) \big|_{\vec{\eta} = \vec{\Phi}(\bar{W}_d,\bar{V}_d)} \Big) \cr
=& \sum_{l=1}^2 \Big( \mathcal{A}_{dh}(\vec{s}, \Theta) \Big) \Big( \frac{\partial}{\partial w} \Phi_l(w, \bar{V}_d) \big|_{w = \bar{W}_d} \Big) \Big( T_l(y_d) - \frac{\partial A(\vec{\eta}\,)}{\partial \eta_l} \Big) \Big |_{\vec{\eta} = \vec{\Phi}(\bar{W}_d, \bar{V}_d)}.\nonumber
\end{align}
%
%%where we let
%%\begin{align}
%%\Phi^{(1)}_l = \frac{\partial}{\partial w}\Phi_l(w,V_{dh})\Big|_{w=W_{dh}}, \quad \mathrm{and} \quad \Phi^{(2)}_l= \frac{\partial}{\partial w}\Phi_l(W_{dh},w)\Big|_{w=V_{dh}}.
%%\end{align}
%%and
%%\begin{align}
%%\Phi^{(2)}_l= \frac{\partial}{\partial w}\Phi_l(W_{dh},w)\Big|_{w=V_{dh}}.
%%\end{align}
%
Moreover, from \eqref{A_eta_expected_value} we know that for any regular distribution (i.e. with finite $A(\vec{\eta}\,)$) of the exponential family, $A(\vec{\eta}\,)$ satisfies:
% (see \cite{wainwright2008graphical} for more information):
%
\begin{align}
\frac{\partial A(\vec{\eta}\,)}{\partial \eta_1} =\, \mathbb{E}_{p(y;\, \vec{\eta}\,)}  [ T_1(y) ] \quad \mbox{and} \quad \frac{\partial A(\vec{\eta}\,)}{\partial \eta_2} =\, \mathbb{E}_{p(y;\, \vec{\eta}\,)}  [ T_2(y) ].
\end{align}
Thus, we can further simplify:
\begin{align}\label{theorem_Pderivative2}
\frac{\partial}{\partial W_{dh}} &\log(p(y_d; \vec{\tilde{\eta}}_d(\vec{s}, \Theta))) = \sum_{l=1}^2 \Big( \mathcal{A}_{dh}(\vec{s}, \Theta) \Big) \Big( \frac{\partial}{\partial w} \Phi_l(w, \bar{V}_d) \big|_{w = \bar{W}_d} \Big)\\
&\times \Big( T_l(y_d)\, - \mathbb{E}_{p(y;\, \vec{\eta}\,)}  [ T_l(y) ] \Big) \Big |_{\vec{\eta} = \vec{\Phi}(\bar{W}_d, \bar{V}_d)}\cr
=& \sum_{l=1}^2 \Big( \mathcal{A}_{dh}(\vec{s}, \Theta) \Big) \Big( \frac{\partial}{\partial w} \Phi_l(w, \bar{V}_d) \big|_{w = \bar{W}_d} \Big) \Big( T_l(y_d) - \mathbb{E}_{p(y;\, \vec{\Phi}(\bar{W}_d, \bar{V}_d) )}  [ T_l(y) ] \Big).\nonumber
\end{align}
Now, using Lemma~\ref{lemma1} we obtain:
\begin{align}\label{theorem_Pderivative3}
\frac{\partial}{\partial W_{dh}} \log(p(y_d; \vec{\tilde{\eta}}_d(\vec{s}, \Theta))) =& \sum_{l=1}^2 \Big( \mathcal{A}_{dh}(\vec{s}, \Theta) \Big) \Big( \frac{\partial}{\partial w} \Phi_l(w, V_{dh}) \big|_{w = W_{dh}} \Big) \cr
&\times \Big( T_l(y_d) - \mathbb{E}_{p(y;\, \vec{\Phi}(W_{dh}, V_{dh}))}  [ T_l(y) ] \Big).
\end{align}
Note that the above equation depends on parameter $\vec{s}$ of the hidden states only through the function $\mathcal{A}_{dh}(\vec{s}, \Theta)$. This is an important property of Lemma~\ref{lemma1} that alleviates the complexity of the aforementioned equation and enables us to extract a set of concise update equations for dictionaries $W$ and $V$. To see this, we derive (using Eqn. \eqref{EqnFreeEnergy}):
%%=& \sum_{n} \sum_{\vec{s}} q^{(n)}(\vec{s}) \frac{\partial}{\partial %W_{dh}} \log(p(y_{d}^{(n)}; \vec{\tilde{\eta}}_{d}(\vec{s}, \Theta)) \big) \cr
\begin{align}\label{theorem_Pderivative4}
\frac{\partial}{\partial W_{dh}}\mathcal{F}(q, \Theta) %=& \sum_{n} \sum_{\vec{s}} q^{(n)}(\vec{s}\,) \frac{\partial}{\partial W_{dh}} \Big( \sum_{d'} \log \big( p(y_{d'}^{(n)}; \vec{\tilde{\eta}}_{d'}(\vec{s}, \Theta)) \big) \Big)\\
=& \sum_{n} \sum_{\vec{s}} q^{(n)}(\vec{s}\,) \Big(\sum_{d'} \frac{\partial}{\partial W_{dh}} \log \big( p(y_{d'}^{(n)}; \vec{\tilde{\eta}}_{d'}(\vec{s}, \Theta)) \big) \Big)\cr
=& \sum_{n} \sum_{\vec{s}} q^{(n)}(\vec{s}\,) \Big( \sum_{d'} \sum_{l=1}^2 \Big( \mathcal{A}_{d'h}(\vec{s}, \Theta) \Big) \Big( \frac{\partial}{\partial w} \Phi_l(w, V_{d'h}) \big|_{w = W_{d'h}} \Big) \cr
&\times \Big( T_l(y_{d'}^{(n)}) - \mathbb{E}_{p(y;\, \vec{\Phi}(W_{d'h}, V_{d'h}))}   [ T_l(y) ] \Big) \delta_{dd'} \Big) \cr
=& \sum_{n} \sum_{\vec{s}} q^{(n)}(\vec{s}\,) \sum_{l=1}^2 \Big( \mathcal{A}_{dh}(\vec{s}, \Theta) \Big) \Big( \frac{\partial}{\partial w} \Phi_l(w, V_{dh}) \big|_{w = W_{dh}} \Big)\cr
&\times \Big( T_l(y_d^{(n)}) - \mathbb{E}_{p(y;\, \vec{\Phi}(W_{dh}, V_{dh}))}  [ T_l(y) ] \Big)\cr
=& \sum_{l=1}^2 \Big( \frac{\partial}{\partial w} \Phi_l(w, V_{dh}) \big|_{w = W_{dh}} \Big) \sum_{n} \mathbb{E}_{q^{(n)}}  [ \mathcal{A}_{dh}(\vec{s}, \Theta) ] \cr
&\times  \Big( T_l(y_d^{(n)}) - \mathbb{E}_{p(y;\, \vec{\Phi}(W_{dh}, V_{dh}))}  [ T_l(y) ] \Big),
\end{align}
where $\delta_{dd'}$ denotes the Kronecker delta, and using our mean value parametrization defined by $\vec{w} :=\, \mathbb{E}_{p(y;\, \vec{\Phi}(\vec{w}))}  [\vec{T}(y) ]$, we have:
\begin{align}
\frac{\partial}{\partial W_{dh}}\mathcal{F}(q, \Theta) =& \ \Big( \frac{\partial}{\partial w} \Phi_1(w, V_{dh}) \big|_{w = W_{dh}} \Big) \sum_{n} \mathbb{E}_{q^{(n)}}  [ \mathcal{A}_{dh}(\vec{s}, \Theta) ] \Big( T_1(y_d^{(n)}) - W_{dh} \Big)\\
&+ \Big( \frac{\partial}{\partial w} \Phi_2(w, V_{dh}) \big|_{w = W_{dh}} \Big) \sum_{n} \mathbb{E}_{q^{(n)}}  [ \mathcal{A}_{dh}(\vec{s}, \Theta) ] \Big( T_2(y_d^{(n)}) - V_{dh} \Big).\nonumber
\end{align}

\noindent{}Now, independently of the functions $\frac{\partial}{\partial w} \Phi_l(w, V_{dh}) \big|_{w = W_{dh}}$ for $l = 1, 2$, the derivative of the ELBO w.r.t. $W_{dh}$ is zero, i.e. $\frac{\partial \mathcal{F}}{\partial W_{dh}} = 0$, if it is the case that:  
\begin{align}
\sum_n \mathbb{E}_{q^{(n)}}  [ \mathcal{A}_{dh}(\vec{s}, \Theta) ]\,  \big(T_1(y_d^{(n)}) - W_{dh}\big) &= 0\ \ \ \mbox{and}\\
%\end{align}
%and  \vspace{-1ex}
%\begin{align}
\sum_n \mathbb{E}_{q^{(n)}}  [ \mathcal{A}_{dh}(\vec{s}, \Theta) ]\ \big(T_2(y_d^{(n)}) - V_{dh}\big)\ &= 0\,.
\end{align}
%
%and
%\begin{align}
%\sum_n \langle\mathcal{A}_{dh}\rangle_{q^{(n)}} \big(T_2(y_d^{(n)}) - V_{dh}\big) = 0
%\end{align}
Rearranging terms yields \eqref{Update_W} and \eqref{Update_V} and completes the proof. The proof proceeds along the same lines
for $\frac{\partial \mathcal{F}}{\partial V_{dh}}$ which results in the same set of equations. 
%Also note that one can obtain similar results by setting $\frac{\partial \mathcal{F}}{\partial V_{dh}} = 0$. 
\end{proof}

\subsection{Second Derivatives}
\label{Sec_second_derivative}
%
% ALTERNATIVE TEXT
Theorem~\ref{theorem1} provides equations \eqref{Update_W} and \eqref{Update_V} for the matrices $W$ and $V$. 
%
%, and if these equations are fulfilled then the free energy function \eqref{EqnFreeEnergy} vanishes.
% equations have been used in the proposed EM algorithm (Alg.~\ref{algorithm1}) in an iterative fixed-point procedure for optimization. % in order to optimize the model parameters. 
Fulfilling the equations implies that the derivatives of $\mathcal{F}(q, \Theta)$ w.r.t.\ $W$ and $V$ vanish. If we assume convergence of the corresponding fixed-point equations \eqref{update_W_real} and \eqref{update_V_real} that we use as M-steps, then $W$ and $V$ approach the values that solve \eqref{Update_W} and \eqref{Update_V} arbitrary closely. And in the numeric evaluations such convergence is observed. The M-step includes, for completeness, also the update for $\vec{\pi}$ in \eqref{Update_pi}.
If we further suppose for the M-step that $W$ and $V$ have fully converged (and that $\vec{\pi}$ has been updated) then all derivatives of $\mathcal{F}(q, \Theta)$ vanish, i.e., $\mathcal{F}(q, \Theta)$ has a fixed-point w.r.t.\ the model parameters $\Theta=(\vec{\pi},W,V)$. 
But vanishing derivatives can also correspond to (local) minima or saddle points and not necessarily maxima.
We can gather more information about the fixed-points by considering, e.g.,  the updates \eqref{update_W_real} for one element $W_{dh}$ of the matrix $W$. If we now divert from the fixed-point by changing $W_{dh}$, we can ask if the objective $\mathcal{F}(q, \Theta)$ then increases or decreases. We do so by evaluating the second derivative at an (arbitrary) fixed-point of $\mathcal{F}(q, \Theta)$.  We start with the first derivative:
\begin{align}
\frac{\partial}{\partial W_{dh}}&\mathcal{F}(q, \Theta) = \sum_{n} \sum_{\vec{s}} q^{(n)}(\vec{s}\,) \frac{\partial}{\partial W_{dh}} \Big( \sum_{d'} \log \big( p(y_{d'}^{(n)}; \vec{\tilde{\eta}}_{d'}(\vec{s}, \Theta)) \big) \Big)\\
%=& \sum_{n} \sum_{\vec{s}} q^{(n)}(\vec{s}\,) \Big(\sum_{d'} \frac{\partial}{\partial W_{dh}} \log \big( p(y_{d'}^{(n)}; \vec{\tilde{\eta}}_{d'}(\vec{s}, \Theta)) \big) \Big)\cr
%=& \sum_{n} \sum_{\vec{s}} q^{(n)}(\vec{s}) \Big( \sum_{d'} \sum_{l=1}^2 \Big( \mathcal{A}_{d'h}(\vec{s}, \Theta) \Big) \Big( \frac{\partial}{\partial w} \Phi_l(w, V_{d'h}) \big|_{w = W_{d'h}} \Big) \cr
%&\times \Big( T_l(y_{d'}^{(n)})\, - \mathbb{E}_{p(y;\, \vec{\Phi}(W_{d'h}, V_{d'h}))}   [ T_l(y) ] \Big) \delta_{dd'} \Big) \cr
=& \sum_{n} \sum_{\vec{s}} q^{(n)}(\vec{s}\,) \sum_{l=1}^2 \Big( \mathcal{A}_{dh}(\vec{s}, \Theta) \Big) \Big( \frac{\partial}{\partial w} \Phi_l(w, V_{dh}) \big|_{w = W_{dh}} \Big) 
 \Big( T_l(y_d^{(n)}) - \frac{\partial A(\vec{\eta}\,)}{\partial \eta_l} \Big) \Big |_{\vec{\eta} = \vec{\Phi}(W_{dh}, V_{dh})}\cr
 =& \  \underbrace{ \Big( \frac{\partial}{\partial w} \Phi_1(w, V_{dh}) \big|_{w = W_{dh}} \Big) }_{:=\ G_1} 
  \underbrace{ \sum_{n} \sum_{\vec{s}} q^{(n)}(\vec{s}\,)  \Big( \mathcal{A}_{dh}(\vec{s}, \Theta) \Big)  
 \Big( T_1(y_d^{(n)}) - \frac{\partial A(\vec{\eta}\,)}{\partial \eta_1} \Big) \Big |_{\vec{\eta} = \vec{\Phi}(W_{dh},  V_{dh})} }_{:=\ X_1} \cr
 &+  \underbrace{ \Big( \frac{\partial}{\partial w} \Phi_2(w, V_{dh}) \big|_{w = W_{dh}} \Big) }_{:=\ G_2} 
 \underbrace{  \sum_{n}  \sum_{\vec{s}} q^{(n)}(\vec{s}\,)  \Big( \mathcal{A}_{dh}(\vec{s}, \Theta) \Big) 
 \Big( T_2(y_d^{(n)}) - \frac{\partial A(\vec{\eta}\,)}{\partial \eta_2} \Big) \Big |_{\vec{\eta} = \vec{\Phi}(W_{dh}, V_{dh})} }_{:=\ X_2}.\nonumber
%=& \sum_{l=1}^2 \Big( \frac{\partial}{\partial w} \Phi_l(w, V_{dh}) \big|_{w = W_{dh}} \Big) \sum_{n} \mathbb{E}_{q^{(n)}}  [ \mathcal{A}_{dh}(\vec{s}, \Theta) ] \cr
%&\times  \Big( T_l(y_d^{(n)})\, - \mathbb{E}_{p(y;\, \vec{\Phi}(W_{dh}, V_{dh}))}  [ T_l(y) ] \Big)
\end{align}
%
%
%
%Now, let
%\begin{align}
%\mathcal{G}_1(W_{dh}, V_{dh}) := \frac{\partial}{\partial w} \Phi_1 (w, V_{dh}) \big|_{w = W_{dh}} = \frac{\partial}{\partial W_{dh}} \Phi_1 (W_{dh}, V_{dh}) \cr
%\mathcal{G}_2(W_{dh}, V_{dh}) := \frac{\partial}{\partial w} \Phi_2 (w, V_{dh}) \big|_{w = W_{dh}} = \frac{\partial}{\partial W_{dh}} \Phi_2 (W_{dh}, V_{dh})
%\end{align}
%
Then,  considering the definitions of the terms $X_1$, $X_2$, $G_1$ and $G_2$ above, 
the second derivative yields (at the fixed-point):
\begin{align}\label{SecondDerivative_eq1}
\frac{\partial^2}{(\partial W_{dh})^2}\mathcal{F}(q, \Theta) &= 
\frac{\partial G_1}{\partial W_{dh}} X_1 + G_1 \frac{\partial X_1}{\partial W_{dh}} + \frac{\partial G_2}{\partial W_{dh}} X_2 + G_2 \frac{\partial X_2}{\partial W_{dh}} \cr
&= G_1 \frac{\partial X_1}{\partial W_{dh}} + G_2 \frac{\partial X_2}{\partial W_{dh}}\ ,
%\frac{\partial}{\partial W_{dh}} \mathcal{G}_1(W_{dh}, V_{dh}) \sum_{n} \sum_{\vec{s}} q^{(n)}(\vec{s})  \Big( \mathcal{A}_{dh}(\vec{s}, \Theta) \Big)  
% \Big( T_1(y_d^{(n)}) - \frac{\partial A(\vec{\eta})}{\partial \eta_1} \Big) \Big |_{\vec{\eta} = \vec{\Phi}(W_{dh},  V_{dh})}\cr
% &+ \frac{\partial \mathcal{G}_2(W_{dh}, V_{dh})}{\partial W_{dh}}  \sum_{n} \sum_{\vec{s}} q^{(n)}(\vec{s})  \Big( \mathcal{A}_{dh}(\vec{s}, \Theta) \Big) 
% \Big( T_2(y_d^{(n)}) - \frac{\partial A(\vec{\eta})}{\partial \eta_2} \Big) \Big |_{\vec{\eta} = \vec{\Phi}(W_{dh}, V_{dh})} \cr
%&\sum_{n,\vec{s}} q^{(n)}(\vec{s}) \mathcal{A}_{dh}(\vec{s}, \Theta)  \frac{\partial}{\partial W_{dh}} \underbrace{ \Big\{ G(\bar{W}_d) \Big( T (y_d^{(n)}) - \frac{\partial A(\eta)}{\partial \eta} \Big) \Big |_{\eta = \Phi (\bar{W}_d)} \Big\} }_{:=\ X}. \nonumber
\end{align}
since $X_1 = X_2 = 0$ as $W$ and $V$ satisfy \eqref{Update_W} and \eqref{Update_V}.  
Now, using the chain rule we get:
\begin{align}\label{SecondDerivative_eq2}
\frac{\partial X_1}{\partial W_{dh}} &=  \Big( \frac{\partial X_1}{\partial \eta_1} \frac{\partial \eta_1}{\partial W_{dh}} \Big) 
\Big |_{\vec{\eta} = \vec{\Phi} (W_{dh}, V_{dh})}
+ \Big( \frac{\partial X_1}{\partial \eta_2} \frac{\partial \eta_2}{\partial W_{dh}} \Big) 
\Big |_{\vec{\eta} = \vec{\Phi} (W_{dh}, V_{dh})} \cr
&= - \Big( \sum_{n,\vec{s}} q^{(n)}(\vec{s}\,) \mathcal{A}_{dh}(\vec{s}, \Theta)  \frac{\partial^2 A(\vec{\eta}\,)}{\partial \eta_1 \partial \eta_1}  \big(  \frac{\partial}{\partial W_{dh}} \Phi_1(W_{dh}, V_{dh})  \big)  \Big)  
\Big |_{\vec{\eta} = \vec{\Phi} (W_{dh}, V_{dh})} \cr
& \ - \Big( \sum_{n,\vec{s}} q^{(n)}(\vec{s}\,) \mathcal{A}_{dh}(\vec{s}, \Theta) \frac{\partial^2 A(\vec{\eta}\,)}{\partial \eta_2 \partial \eta_1} \big(  \frac{\partial}{\partial W_{dh}} \Phi_2(W_{dh}, V_{dh}) \big)  \Big)
 \Big |_{\vec{\eta} = \vec{\Phi} (W_{dh}, V_{dh})} \cr
&= - \sum_{n,\vec{s}} q^{(n)}(\vec{s}\,) \mathcal{A}_{dh}(\vec{s}, \Theta)  \big( \frac{\partial^2 A(\vec{\eta}\,)}{\partial \eta_1 \partial \eta_1} G_1 + \frac{\partial^2 A(\vec{\eta}\,)}{\partial \eta_2 \partial \eta_1} G_2 \big) 
 \Big |_{\vec{\eta} = \vec{\Phi} (W_{dh}, V_{dh})}. 
%&=
%- \sum_{n,\vec{s}} q^{(n)}(\vec{s}\,) \mathcal{A}_{dh}(\vec{s}, \Theta)  \big( \mathrm{cov}(T_1(y), T_1(y) ) G_1 + \mathrm{cov}(T_2(y), T_1(y) ) G_2 \big) 
\end{align}
%where the last line is obtained from \eqref{A_eta_expected_value}; also 
Note that we here treat $\mathcal{A}_{dh}(\vec{s}, \Theta)$ as a constant since the second derivative of $\bar{W}_d$ w.r.t.  $W_{dh}$ is equal to zero (recall Eqns.~\eqref{derivatives_WbarVbar1}-\eqref{derivatives_WbarVbar2}). 
Similarly we can show:

\begin{align}\label{SecondDerivative_eq3}
\frac{\partial X_2}{\partial W_{dh}} &= 
- \sum_{n,\vec{s}} q^{(n)}(\vec{s}\,) \mathcal{A}_{dh}(\vec{s}, \Theta)  \big( \frac{\partial^2 A(\vec{\eta}\,)}{\partial \eta_1 \partial \eta_2} G_1 + \frac{\partial^2 A(\vec{\eta}\,)}{\partial \eta_2 \partial \eta_2} G_2 \big) 
 \Big |_{\vec{\eta} = \vec{\Phi} (W_{dh}, V_{dh})}.
% \Big( \frac{\partial X_2}{\partial \eta_1} \frac{\partial \eta_1}{\partial W_{dh}} \Big) 
%\Big |_{\vec{\eta} = \vec{\Phi} (W_{dh}, V_{dh})}
%+ \Big( \frac{\partial X_2}{\partial \eta_2} \frac{\partial \eta_2}{\partial W_{dh}} \Big) 
%\Big |_{\vec{\eta} = \vec{\Phi} (W_{dh}, V_{dh})} \cr
%&= 
%- \sum_{n,\vec{s}} q^{(n)}(\vec{s}) \mathcal{A}_{dh}(\vec{s}, \Theta)  \big( \mathrm{cov}(T_1(y), T_2(y) ) G_1 + \mathrm{cov}(T_2(y), T_2(y) ) G_2 \big).
\end{align}
Now, inserting \eqref{SecondDerivative_eq2} and \eqref{SecondDerivative_eq3} into \eqref{SecondDerivative_eq1}, we get:
\begin{align*}
\frac{\partial^2}{(\partial W_{dh})^2}&\mathcal{F}(q, \Theta) = 
G_1 \frac{\partial X_1}{\partial W_{dh}} + G_2 \frac{\partial X_2}{\partial W_{dh}} \cr
&=
G_1 \Big(   - \sum_{n,\vec{s}} q^{(n)}(\vec{s}\,) \mathcal{A}_{dh}(\vec{s}, \Theta)  \big( \frac{\partial^2 A(\vec{\eta}\,)}{\partial \eta_1 \partial \eta_1} G_1 + \frac{\partial^2 A(\vec{\eta}\,)}{\partial \eta_2 \partial \eta_1} G_2 \big) 
 \Big |_{\vec{\eta} = \vec{\Phi} (W_{dh}, V_{dh})}
\Big) \\
& \ + G_2 \Big( - \sum_{n,\vec{s}} q^{(n)}(\vec{s}\,) \mathcal{A}_{dh}(\vec{s}, \Theta)  \big( \frac{\partial^2 A(\vec{\eta}\,)}{\partial \eta_1 \partial \eta_2} G_1 + \frac{\partial^2 A(\vec{\eta}\,)}{\partial \eta_2 \partial \eta_2} G_2 \big) 
 \Big |_{\vec{\eta} = \vec{\Phi} (W_{dh}, V_{dh})}
 \Big)  \\
&= 
- \sum_{n,\vec{s}} q^{(n)}(\vec{s}\,) \mathcal{A}_{dh}(\vec{s}, \Theta) 
\mathcal{G}^T (W_{dh}, V_{dh}) \big( \mathbf{H}_{A} \big)   
\mathcal{G} (W_{dh}, V_{dh}),
\end{align*}
where $\mathbf{H}_{A}$ denotes the Hessian matrix of $A$ (based on \eqref{A_eta_expected_value}, it also corresponds to the covariance matrix between $T_1(y)$ and $T_2(y)$),
and $\mathcal{G} := \left(\begin{array}{c} \hspace{-0.5ex}G_1\hspace{-1.5ex}\phantom{i}\\ \hspace{-0.5ex}G_2\hspace{-1.5ex}\phantom{i}\end{array}\right)$.
%and $\mathcal{G} (W_{dh}, V_{dh}) := \left(\begin{array}{c} \hspace{-0.5ex}G_1\hspace{-1.5ex}\phantom{i}\\ \hspace{-0.5ex}G_2\hspace{-1.5ex}\phantom{i}\end{array}\right)$.
%
%[G_1 \ G_2]^T$.
%Subsequently,  the equation above can be rewritten as:
%\begin{align}
%\frac{\partial^2}{(\partial W_{dh})^2}\mathcal{F}(q, \Theta) =& - \sum_{n,\vec{s}} q^{(n)}(\vec{s}\,) \mathcal{A}_{dh}(\vec{s}, \Theta) 
%\mathcal{G}^T (W_{dh}, V_{dh}) \big( \mathbf{H}_{A} \big)   
%\mathcal{G} (W_{dh}, V_{dh}).
%\end{align}
The term $\mathcal{G}^T (W_{dh}, V_{dh}) \big( \mathbf{H}_{A} \big)   
\mathcal{G} (W_{dh}, V_{dh})$ constructs a quadratic form, and the Hessian of the log-partition $A(\vec{\eta}\,)$ is known to be positive semi-definite; and also positive definite for minimal representations \citep[e.g.,][]{wainwright2008graphical,BickelDoksum2015}. As $q^{(n)}(\vec{s}\,)$ and $\mathcal{A}_{dh}(\vec{s}, \Theta)$ are non-negative, 
we therefore obtain:
%
%since the covariance matrices are in general positive semi-definite, we directly obtain (bear in mind that $q^{(n)}(\vec{s})$ and $\mathcal{A}_{dh}(\vec{s}, \Theta)$ are always non-negative): 
\begin{align}\label{SecondDeriMax}
\frac{\partial^2}{(\partial W_{dh})^2}\mathcal{F}(q, \Theta) \leq 0 \ .
\end{align}
%Bear in mind that $q^{(n)}(\vec{s})$ and $\mathcal{A}_{dh}(\vec{s}, \Theta)$ are always non-negative. 
%
The second derivative can, furthermore, be considered strictly smaller than zero under mild conditions. For instance, we
can demand the condition that there exist indices $n'$ and $\vec{s} \,'$ such that $q^{(n')}(\vec{s} \,') \mathcal{A}_{dh}(\vec{s} \,', \Theta) \neq 0$,
then it follows that $\mathbb{E}_{q^{(n)}}  [ \mathcal{A}_{dh}(\vec{s}, \Theta) ] \neq 0$. The condition can only be violated artificially with
potentially few data points with specific properties, and the condition is satisfied for the types of data that are usually of interest (including
all data considered in this work). %We also remark that data violating the condition was not satisfied, than updates for $W$ (and for $V$) were possible.
If the condition is combined with properties of $\vec{\Phi}$ ensuring
non-zero $\mathcal{G} (W_{dh}, V_{dh})$ and with minimality (ensuring positive definite $\mathbf{H}_{A}$), then
strictly smaller second derivatives are obtained. Consequently, the diversion of $W_{dh}$ from the fixed-point result in a
decrease of the objective, and (assuming convergence) updates \eqref{update_W_real} will increase $\mathcal{F}(q, \Theta)$ until a (local)
maximum is reached.

%Consequently, the iteration of fixed-point equations \eqref{update_W_real}
%will (assuming convergence) approach a (local) maximum for $W_{dh}$ under mild conditions.
%
Nonetheless, it is challenging to infer a more general behavior.
First of all, the fixed-point updates \eqref{update_W_real} could in theory show intricate behavior such as convergence to an attractor
for difference equations. Moreover, in practice all model parameter $\Theta$ are updated which would require the analysis
of all second derivatives including all mixed second derivatives for all parameters (amounting to an exhaustive classification of all
fixed-points).  Further investigations would also require the combined optimization using E- as well as M-steps. Still,
Eqn. (\ref{SecondDeriMax}) provides some information about the individual updates \eqref{update_W_real}, and it applies for an arbitrary
$W_{dh}$, of course, and for an arbitrary fixed-point. We also do note that the very same result is obtained for individual updates
of $V_{dh}$ using \eqref{update_V_real}, i.e., the derivations will be analogous, with the same result of second derivatives being smaller than zero.

\subsection{Parameter Update Equations -- General Case}
\label{appendixA1}
We derived Theorem~\ref{theorem1} for the case of $L=2$, i.e., for distributions of the exponential family with sufficient statistics of length two.
This choice was for notational convenience only, and the proof of Theorem~\ref{theorem1} suggests a straightforward generalization for arbitrary $L$.  
%, it can be easily inferred that it applies for any $L$.  
We already discussed the generalization of our formulations in Sec.~\ref{generalEF_MCA} and stated Theorem~\ref{theorem2} for the general case. 
Here we provide the proof of Theorem~\ref{theorem2}.
%
%We now follow a similar approach and prove Theorem~\ref{theorem2} in the following. 
%We then formally
%discussed the generalization of our formulations in Sec.~\ref{generalEF_MCA} and stated   Theorem~\ref{theorem2} for the general case. Now, we further prove the theorem and discuss its details. \\
%
First recall the details presented in Sec.~\ref{generalEF_MCA}. Then, similar to the $L=2$ case, observe that 
the derivatives of $\mathcal{F}(q, \Theta)$ contain derivatives of $\bar{W}_d^{(l)}(\vec{s}, \Theta)$ for $l = 1, \ldots, L$ w.r.t.\ the dictionary elements $W_{dh}^{(l)}$ for each $d$ and $h$.
%$d = 1, \ldots, D$ and $h = 1, \ldots, H$.  
For these derivatives the generalization of 
 Eqns.~\eqref{derivatives_WbarVbar1}-\eqref{derivatives_WbarVbar2} becomes:   %\vspace{-1mm}
\begin{align}\label{pre_theorem4_1}
\frac{\partial}{\partial W_{dh}^{(l)}} \bar{W}_d^{(l')}(\vec{s}, \Theta) = 
\left\{
\begin{aligned}
%\begin{cases}
&\mathcal{A}_{dh}(\vec{s}, \Theta) \quad \mbox{if} \ \ l = l'  \\
&0 \qquad \qquad \ \ \mbox{otherwise}
%\end{cases} 
\end{aligned}
\right\}
&= \mathcal{A}_{dh}(\vec{s}, \Theta) \delta_{ll'}\,,
\end{align}
where $\mathcal{A}_{dh}(\vec{s}, \Theta)$ is given by \eqref{Adh}. %Similar to the $L = 2$ case,
Lemma~\ref{lemma1} can be also generalized by again considering the cases $h = h(d, \vec{s}, \Theta)$ and $h \neq h(d, \vec{s}, \Theta)$ separately, which then yields: 
\begin{align}\label{Lemma1_GeneralCase}
\mathcal{A}_{dh}(\vec{s}, \Theta)\, g \big( \bar{W}^{(1)}_d(\vec{s}, \Theta),  \ldots, \bar{W}^{(L)}_d(\vec{s}, \Theta) \big) = \mathcal{A}_{dh}(\vec{s}, \Theta)\, g(W^{(1)}_{dh}, \ldots, W^{(L)}_{dh}).
%
%\frac{\partial}{\partial W_{dh}^{(l)}} \bar{W}_d^{(l')}(\vec{s}, \Theta) = 
% \mathcal{A}_{dh}(\vec{s}, \Theta) \delta_{ll'}.
\end{align}
\begin{proof}
\hspace*{-0.7ex}\textbf{of Theorem~\ref{theorem2}}
Consider a single parameter $W^{(l)}_{dh}$ for an arbitrary $l = 1, \ldots, L$ and abbreviate $\bar{W}^{(l)}_d(\vec{s}, \Theta)$ by $\bar{W}^{(l)}_d$. Then, using the chain rule and Eqn. \eqref{etagen}, we get: % \vspace{-1mm}
\begin{align}\label{theorem_Pderivative1}
\frac{\partial}{\partial W^{(l)}_{dh}}& \log\big( p(y_d; \vec{\tilde{\eta}}_d(\vec{s}, \Theta)) \big) = \frac{\partial}{\partial W^{(l)}_{dh}} \log\big( p(y_d; \vec{\Phi}(\bar{W}^{(1)}_d, \ldots, \bar{W}^{(L)}_d)) \big)\\
=& \sum_{l'=1}^L \Big( \frac{\partial}{\partial W^{(l)}_{dh}} \Phi_{l'}(\bar{W}^{(1)}_d, \ldots, \bar{W}^{(L)}_d) \Big) \Big( \frac{\partial}{\partial \eta_{l'}} \log(p(y_d; \vec{\eta}\,)) \big|_{\vec{\eta} = \vec{\Phi}(\bar{W}^{(1)}_d, \ldots, \bar{W}^{(L)}_d)} \Big) \cr
=& \sum_{l'=1}^L \Big\{ \sum_{l''=1}^L \Big( \frac{\partial}{\partial W^{(l)}_{dh}} \bar{W}^{(l'')}_d \Big) \Big( \frac{\partial}{\partial w} \Phi_{l'}(\bar{W}^{(1)}_d, \ldots, w, \ldots, \bar{W}^{(L)}_d) \big|_{w = \bar{W}^{(l'')}_d} \Big) \Big\} \cr
& \times \Big( \frac{\partial}{\partial \eta_{l'}} \log(p(y_d; \vec{\eta}\,)) \big|_{\vec{\eta} = \vec{\Phi}(\bar{W}^{(1)}_d, \ldots, \bar{W}^{(L)}_d)} \Big). \nonumber
%=& \sum_{l'=1}^L \Big\{ \sum_{l''=1}^L \Big( \mathcal{A}_{dh}(\vec{s}, \Theta) \delta_{ll''} \Big) \Big( \frac{\partial}{\partial w} \Phi_{l'}(\bar{W}^{(1)}_d, \ldots, w, \ldots, \bar{W}^{(L)}_d) \big|_{w = \bar{W}^{(l'')}_d} \Big) \Big\} \cr
%& \times \Big( \frac{\partial}{\partial \eta_{l'}} \log(p(y_d; \vec{\eta})) \big|_{\vec{\eta} = \vec{\Phi}(\bar{W}^{(1)}_d, \ldots, \bar{W}^{(L)}_d)} \Big)\cr 
%=& \sum_{l'=1}^L \Big( \mathcal{A}_{dh}(\vec{s}, \Theta) \Big) \Big( \frac{\partial}{\partial w} \Phi_{l'}(\bar{W}^{(1)}_d, \ldots, w, \ldots, \bar{W}^{(L)}_d) \big|_{w = \bar{W}^{(l)}_d} \Big) \Big( T_{l'}(y_d) - \frac{\partial A(\vec{\eta}\,)}{\partial \eta_{l'}} \Big) \Big |_{\vec{\eta} = \vec{\Phi}(\bar{W}^{(1)}_d, \ldots, \bar{W}^{(L)}_d)}.\nonumber
%&\times \Big( T_{l'}(y_d) - \frac{\partial A(\vec{\eta}\,)}{\partial \eta_{l'}} \Big) \Big |_{\vec{\eta} = \vec{\Phi}(\bar{W}^{(1)}_d, \ldots, \bar{W}^{(L)}_d)}.\nonumber
\end{align}  
%\vspace{-.3mm}
Now, using \eqref{pre_theorem4_1} and also \eqref{A_eta_expected_value} that states for any regular distribution of the exponential family $\frac{\partial A(\vec{\eta}\,)}{\partial \eta_{l'}} =\, \mathbb{E}_{p(y;\, \vec{\eta}\,)}  [ T_{l'}(y) ]$, we obtain:
\begin{align*}
\frac{\partial}{\partial W^{(l)}_{dh}} \log(p(y_d; \vec{\tilde{\eta}}_d(\vec{s}, \Theta))) &= \sum_{l'=1}^L \Big( \mathcal{A}_{dh}(\vec{s}, \Theta) \Big) \Big( \frac{\partial}{\partial w} \Phi_{l'}(\bar{W}^{(1)}_d, \ldots, w, \ldots, \bar{W}^{(L)}_d) \big|_{w = \bar{W}^{(l)}_d} \Big) \\
& \quad \times \Big( T_{l'}(y_d) - \frac{\partial A(\vec{\eta}\,)}{\partial \eta_{l'}} \Big) \Big |_{\vec{\eta} = \vec{\Phi}(\bar{W}^{(1)}_d, \ldots, \bar{W}^{(L)}_d)} \\
&= \sum_{l'=1}^L \Big( \mathcal{A}_{dh}(\vec{s}, \Theta) \Big) \Big( \frac{\partial}{\partial w} \Phi_{l'}(\bar{W}^{(1)}_d, \ldots, w, \ldots, \bar{W}^{(L)}_d) \big|_{w = \bar{W}^{(l)}_d} \Big)\cr
& \quad \times \Big( T_{l'}(y_d)\, - \mathbb{E}_{p(y;\, \vec{\eta}\,)}  [ T_{l'}(y) ] \Big) \Big |_{\vec{\eta} = \vec{\Phi}(\bar{W}^{(1)}_d, \ldots, \bar{W}^{(L)}_d)}\cr
&= \sum_{l'=1}^L \Big( \mathcal{A}_{dh}(\vec{s}, \Theta) \Big) \Big( \frac{\partial}{\partial w} \Phi_{l'}(\bar{W}^{(1)}_d, \ldots, w, \ldots, \bar{W}^{(L)}_d) \big|_{w = \bar{W}^{(l)}_d} \Big)\cr
&\quad \times \Big( T_{l'}(y_d)\, - \mathbb{E}_{p(y;\, \vec{\Phi}(\bar{W}^{(1)}_d, \ldots, \bar{W}^{(L)}_d))}  [ T_{l'}(y) ] \Big).
\end{align*}
Further, using \eqref{Lemma1_GeneralCase} simplifies the above equation as follows:
%\vspace{-1mm}
\begin{align*}
\frac{\partial}{\partial W^{(l)}_{dh}} \log(p(y_d; \vec{\tilde{\eta}}_d(\vec{s}, \Theta))) &= \sum_{l'=1}^L \Big( \mathcal{A}_{dh}(\vec{s}, \Theta) \Big) \Big( \frac{\partial}{\partial w} \Phi_{l'}(W^{(1)}_{dh}, \ldots, w, \ldots, W^{(L)}_{dh}) \big|_{w = W^{(l)}_{dh}} \Big)\cr
&\quad \times \Big( T_{l'}(y_d)\, - \mathbb{E}_{p(y;\, \vec{\Phi}(W^{(1)}_{dh}, \ldots, W^{(L)}_{dh}))}  [ T_{l'}(y) ] \Big).
\end{align*}
Note that the equation now depends on the hidden state $\vec{s}$
% parameter $\vec{s}$ of the hidden states
only through the function $\mathcal{A}_{dh}(\vec{s}, \Theta)$. Then, using Eqn. \eqref{EqnFreeEnergy}, it can be stated that:
\begin{align}\label{theorem_Pderivative4}
\frac{\partial}{\partial W^{(l)}_{dh}}&\mathcal{F}(q, \Theta) %= \sum_{n} \sum_{\vec{s}} q^{(n)}(\vec{s}\,) \frac{\partial}{\partial W^{(l)}_{dh}} \Big( \sum_{d'} \log\big( p( y_{d'}^{(n)}; \vec{\tilde{\eta}}_{d'}(\vec{s}, \Theta)) \big) \Big)\\
 = \sum_{n} \sum_{\vec{s}} q^{(n)}(\vec{s}\,) \Big( \sum_{d'} \frac{\partial}{\partial W^{(l)}_{dh}} \log\big( p( y_{d'}^{(n)}; \vec{\tilde{\eta}}_{d'}(\vec{s}, \Theta)) \big) \Big)\cr
%&= \sum_{n} \sum_{\vec{s}} q^{(n)}(\vec{s}\,) \Big( \sum_{d'} \sum_{l'=1}^L \Big( \mathcal{A}_{d'h}(\vec{s}, \Theta) \Big) \Big( \frac{\partial}{\partial w} \Phi_{l'}(W^{(1)}_{d'h}, \ldots, w, \ldots, W^{(L)}_{d'h}) \big|_{w = W^{(l)}_{d'h}} \Big) \cr
%&\quad \times \Big( T_{l'}(y_{d'}^{(n)})\, - \mathbb{E}_{p(y;\, \vec{\Phi}(W^{(1)}_{d'h}, \ldots, W^{(L)}_{d'h}))}  [ T_{l'}(y) ] \Big) \delta_{dd'} \Big) \cr
&= \sum_{n} \sum_{\vec{s}} q^{(n)}(\vec{s}\,) \sum_{l'=1}^L \Big( \mathcal{A}_{dh}(\vec{s}, \Theta) \Big) \Big( \frac{\partial}{\partial w} \Phi_{l'}(W^{(1)}_{dh}, \ldots, w, \ldots, W^{(L)}_{dh}) \big|_{w = W^{(l)}_{dh}} \Big)\cr
&\quad \times \Big( T_{l'}(y_d^{(n)})\, - \mathbb{E}_{p(y;\, \vec{\Phi}(W^{(1)}_{dh}, \ldots, W^{(L)}_{dh}))}   [ T_{l'}(y) ] \Big)\cr
&= \sum_{l'=1}^L \Big( \frac{\partial}{\partial w} \Phi_{l'}(W^{(1)}_{dh}, \ldots, w, \ldots, W^{(L)}_{dh}) \big|_{w = W^{(l)}_{dh}} \Big)  \cr
&\quad \phantom{xxxix} \times \sum_{n} \mathbb{E}_{q^{(n)}}   [ \mathcal{A}_{dh}(\vec{s}, \Theta) ] \Big( T_{l'}(y_d^{(n)}) - W^{(l')}_{dh} \Big), \nonumber
%&= \Big( \frac{\partial}{\partial w} \Phi_l(\bar{W}^{(1)}_d, \ldots, w, \ldots, \bar{W}^{(L)}_d) \big|_{w = W_{dh}} \Big) \sum_{n} \mathbb{E}  [\mathcal{A}_{dh}\rangle_{q^{(n)}} \Big( T_1(y_d^{(n)}) - W_{dh} \Big)\cr
%&\ + \Big( \frac{\partial}{\partial w} \Phi_l(\bar{W}^{(1)}_d, \ldots, w, \ldots, \bar{W}^{(L)}_d) \big|_{w = W_{dh}} \Big) \sum_{n} \mathbb{E}  [\mathcal{A}_{dh}\rangle_{q^{(n)}} \Big( T_2(y_d^{(n)}) - V_{dh} \Big)\nonumber
%%= \sum_{n, \vec{s}} q^{(n)}(\vec{s}) \frac{\partial}{\partial W_{dh}} \big\{\sum_{d^{\prime}} \log(p(y_{d^{\prime}}^{(n)}; \vec{\tilde{\eta}}_d(\vec{s}, \Theta)))\big\}\nonumber
\end{align}
%
%where $\delta_{dd^{\prime}}$ is the \textit{Kronecker-delta}. Throughout the paper and for notational convenience, we omit the dependencies on $\vec{s}$ and $\Theta$.
where for the last step we exploited the mean value parametrization defined by $\vec{w} :=\, \mathbb{E}_{p(y;\, \vec{\Phi}(\vec{w}) )}  [\vec{T}(y) ]$. 
%in \eqref{mean_value_parameters} that is:
%\begin{align}
%\vec{w} :=\, \langle\vec{T}(y)\rangle_{p(y;\, \vec{\Phi}(\vec{w}) )}.
%\end{align}
Therefore, independently of the functions $\frac{\partial}{\partial w} \Phi_{l'}(W^{(1)}_{dh}, \ldots, w, \ldots, W^{(L)}_{dh}) \big|_{w = W^{(l)}_{dh}}$ for each $l'$, we have $\frac{\partial \mathcal{F}}{\partial W^{(l)}_{dh}} = 0$ if for all $l = 1, \ldots, L$ we have:
\begin{align}
\sum_n \mathbb{E}_{q^{(n)}}  [ \mathcal{A}_{dh}(\vec{s}, \Theta) ] \Big(T_l(y_d^{(n)}) - W^{(l)}_{dh}\Big) = 0,
\end{align}
%and
%\begin{align}
%\sum_n \langle\mathcal{A}_{dh}\rangle_{q^{(n)}} \big(T_2(y_d^{(n)}) - V_{dh}\big) = 0
%\end{align}
which yields Eqn.\,\eqref{Update_W_general} and completes the proof.
%Also note that one can obtain similar results by setting $\frac{\partial \mathcal{F}}{\partial V_{dh}} = 0$. 
\end{proof}

\noindent{}The general case does, of course, also include the $L=1$ case and consequently Bernoulli, Exponential or the Poisson distribution. Moreover, similar to the $L = 2$ case, we can gather further information about points where $W_{dh}^{(l)}$ satisfies Eqn.\,\eqref{Update_W_general}. 
Considering the second derivatives again as in Sec.~\ref{Sec_second_derivative}, we can   likewise show that fixed-point solutions for a single $W_{dh}^{(l)}$ and for fixed $d$ and $h$ correspond to (local) maxima if the Hessian matrix $\mathbf{H}_{\mathcal{F}}$ is positive definite. 

\section{Additional Details on Parameter Update Equations}
\label{app:models}

\subsection{Parametrization of the Gaussian-MCA}
\label{appendixA2}
For Gaussian-MCA (G-MCA) in Example~\ref{example2}, the generative model was shown to be given by:
\begin{align}
p(\vec{s}\,| \Theta) &= \prod_{h=1}^H \big( \pi_h^{s_h} (1 - \pi_h)^{1 - s_h} \big), \\
p(\vec{y}\,|\,\vec{s}, \Theta) &= \prod_{d=1}^D \mathcal{N}(y_d;\bar{W}_d(\vec{s}, \Theta),\bar{V}_d(\vec{s}, \Theta)-\bar{W}^2_d(\vec{s}, \Theta)),
%p(y_d;\vec{\tilde{\eta}}_d(\vec{s}, \Theta)) &= \prod_d\mathcal{N}(y_d;\bar{W}_d(\vec{s}, \Theta),\bar{V}_d(\vec{s}, \Theta)-\bar{W}^2_d(\vec{s}, \Theta))
%
\end{align}
where we can use the update equations of Theorem~\ref{theorem1} for $W$ and $V$. To obtain some intuition, we can also define the function:
%
%\begin{small}
\begin{align}
\bar{\sigma}^2_d(\vec{s}, \Theta) &= \bar{V}_d(\vec{s}, \Theta)-\bar{W}^2_d(\vec{s},\Theta), \quad \forall d. \nonumber
\end{align}
%\end{small}
%
Because of the definition of $\bar{W}_d(\vec{s}, \Theta)$ and $\bar{V}_d(\vec{s}, \Theta)$ in (\ref{maxh}) we get:
\begin{align}
\bar{\sigma}^2_d(\vec{s}, \Theta) &= \sigma^2_{d h(d,\vec{s},\Theta)}\ \mbox{\ where\ }\ h(d,\vec{s},\Theta) = \mathrm{argmax}_{h} \{W_{dh} s_h\}\ \mbox{\ and\ }\ \sigma^2_{dh}=V_{dh}-W^2_{dh} ,  \nonumber
\end{align}
%\begin{align}
%%
%\bar{\sigma}^2_d(\vec{s}, \Theta) &= \sigma^2_{d h(d,\vec{s},\Theta)}\nonumber
%%
%\end{align}
%where
%\begin{align}
%%
%h(d,\vec{s},\Theta) = \mathrm{argmax}_{h} \{W_{dh} s_h\}\ \ \mbox{\ and\ }\ \ \sigma^2_{dh}=V_{dh}-W^2_{dh}.\nonumber
%%
%\end{align}
%
such that $\sigma^2$ is a matrix with $D \times H$ entries. The considered generative model then becomes:
\begin{align}
p(\vec{s}\,| \Theta) &= \prod_{h=1}^H \big( \pi_h^{s_h} (1 - \pi_h)^{1 - s_h} \big), \\
p(\vec{y}\,|\,\vec{s}, \Theta) &= \prod_{d=1}^D \mathcal{N}(y_d;\bar{W}_d(\vec{s},\Theta), \bar{\sigma}^2_d(\vec{s}, \Theta)).
%p(y_d;\vec{\tilde{\eta}}_d(\vec{s}, \Theta)) &= \prod_d\mathcal{N}(y_d;\bar{W}_d(\vec{s}, \Theta),\bar{V}_d(\vec{s}, \Theta)-\bar{W}^2_d(\vec{s}, \Theta))
%
\end{align}

\noindent{}The latents thus change the mean via the matrix $W$ and the variance via the matrices $V$ and $W$.
We can now use the update rules for $V_{dh}$ and $W_{dh}$ and compute the elements $\sigma^2_{dh}$ as a result. Alternatively, we can also combine the update rules for $V_{dh}$ and $W_{dh}$
to directly obtain an update rule for $\sigma^2_{dh}$:

%\begin{small}
\begin{align}
(\sigma^2&_{dh})^{\mathrm{new}} = V^{\mathrm{new}}_{dh}-(W^{\mathrm{new}}_{dh})^2 = V^{\mathrm{new}}_{dh}-2\,W^{\mathrm{new}}_{dh}\,W^{\mathrm{new}}_{dh}  +(W^{\mathrm{new}}_{dh})^2\cr
			       &= \frac{\sum_{n = 1}^N \mathbb{E}_{q^{(n)}}  [ \mathcal{A}_{dh}(\vec{s},\Theta) ] \, (y_d^{(n)})^2}{\sum_{n = 1}^N \mathbb{E}_{q^{(n)}}  [ \mathcal{A}_{dh}(\vec{s}, \Theta) ] }
                                 - 2 \frac{\sum_{n = 1}^N \mathbb{E}_{q^{(n)}}  [ \mathcal{A}_{dh}(\vec{s},\Theta) ] \, y_d^{(n)}}{\sum_{n = 1}^N \mathbb{E}_{q^{(n)}}  [ \mathcal{A}_{dh}(\vec{s}, \Theta) ] } W^{\mathrm{new}}_{dh}
                                 +(W^{\mathrm{new}}_{dh})^2\cr
			       &= \frac{\sum_{n = 1}^N \mathbb{E}_{q^{(n)}}  [ \mathcal{A}_{dh}(\vec{s},\Theta) ] \, \big( (y_d^{(n)})^2 - 2y_d^{(n)}W^{\mathrm{new}}_{dh} + (W^{\mathrm{new}}_{dh})^2 \big)}{\sum_{n = 1}^N \mathbb{E}_{q^{(n)}}  [ \mathcal{A}_{dh}(\vec{s}, \Theta) ] }\cr
			       &= \frac{\sum_{n = 1}^N \mathbb{E}_{q^{(n)}}  [ \mathcal{A}_{dh}(\vec{s},\Theta) ] \, \big( y_d^{(n)} - W^{\mathrm{new}}_{dh} \big)^2}{\sum_{n = 1}^N \mathbb{E}_{q^{(n)}}  [ \mathcal{A}_{dh}(\vec{s}, \Theta) ] }.
%
%\bar{\sigma}^2_d(\vec{s}, \Theta) &= \sigma^2_{d\,h(d,\vec{s},\Theta)}\ \ \mbox{\ where\ }\ \ h(d,\vec{s},\Theta) = \mathrm{argmax}_{h} \{W_{dh}\, s_h\}\ \ \mbox{\ and\ }\ \ \sigma^2_{dh}=V_{dh}-W^2_{dh}.\nonumber
%
\end{align}
%\end{small}

\noindent{}This form of update is more familiar as it evaluates the square deviation from the mean given by $W^{\mathrm{new}}_{dh}$. Also note that we first have to compute $W^{\mathrm{new}}_{dh}$ before
updating $\sigma^2_{dh}$, which is in analogy, e.g., to standard Gaussian mixtures for variance (or covariance) updates. For the G-MCA application of Sec.~\ref{sec_natural_image_patches}, we showed  $W_{dh}$ and $\sigma_{dh}$ (standard deviations) for whitened natural image patches.

\subsection{Parametrization of the Gamma-MCA}
\label{appendixA3}
Let Gamma be the noise distribution in \eqref{modelEq1}-\eqref{modelEq3}; i.e., consider the following  generative model: 
\begin{align} 
p(\vec{s}\,| \Theta) &= \prod_{h=1}^H \big( \pi_h^{s_h} (1 - \pi_h)^{1 - s_h} \big), \label{eq:pygs_gamma_prior} \\
p(\vec{y}\,|\,\vec{s}, \Theta) &= \prod_{d=1}^D \mbox{Gamma}(y_d; \vec{\tilde{\eta}}_d (\vec{s}, \Theta)), \quad \mbox{for} \quad y_d\in (0, \infty). \label{eq:pygs_gamma1}
%p(y_d;\vec{\tilde{\eta}}_d(\vec{s}, \Theta)) &= \prod_d\mathcal{N}(y_d;\bar{W}_d(\vec{s}, \Theta),\bar{V}_d(\vec{s}, \Theta)-\bar{W}^2_d(\vec{s}, \Theta))
%
\end{align}
%
%Consider the Gamma distribution to be the noise of observables in the model \eqref{modelEq1}-\eqref{modelEq3}, i.e. let: 
%\begin{align}
%\label{eq:pygs_gamma1}
%p(\vec{y} | \vec{s}, \Theta) = \prod_{d=1}^D \mathrm{Gamma}(y_d; \vec{\tilde{\eta}}_d (\vec{s}, \Theta)) \quad \mathrm{for} \quad y_d\in (0, \infty), \forall d.
%\end{align}
%
Given the shape and rate parameters $\alpha, \beta >0$ of the Gamma distribution, the natural parameters and sufficient statistics are given by: 
\begin{align*}
\vec{\eta} = (- \beta, \alpha - 1)^T, \quad \vec{T}(y) = (y, \log(y))^T.
\end{align*}
Now, based on the mean value parametrization in  \eqref{mean_value_parameters} and also \eqref{phi_function}, we get:
%\begin{align*}
%\bar{W}(\vec{s}, \Theta) = \mathbb{E}_{p(y; \, \vec{\Phi}(w, v))}  [ y ] \,  \mbox{\ and\ }\   \bar{V}(\vec{s}, \Theta) = \mathbb{E}_{p(y;\, \vec{\Phi}(w, v))}  [ \log(y) ], 
%\end{align*}
\begin{align*}
w_1 = \mathbb{E}_{p(y; \, \vec{\Phi}(\vec{w}))}  [ y ] \,  \mbox{\ and\ }\   w_2= \mathbb{E}_{p(y;\, \vec{\Phi}(\vec{w}))}  [ \log(y) ], 
\end{align*}
where $\mathbb{E}_{p(y;\, \vec{\Phi}(\vec{w}))}  [ y ]$ represents the mean of observables and is equal to $- \frac{\eta_2 + 1}{\eta_1}$. Also we know that
\begin{align*}
\mathbb{E}_{p(y;\, \vec{\Phi}(\vec{w}))}  [ \log(y) ] = \psi(\eta_2 + 1) - \log(- \eta_1), 
\end{align*}
%\begin{align}
%<\log(y)>_{p(y;\vec{\tilde{\eta}})} &= \int_0^{\infty} \log(y) p(y; \vec{\tilde{\eta}}) dy\\
%&= \int_0^{\infty} \log(y) f(y; \alpha, \beta) dy\\
%&= \int_0^{\infty} \log(y) \frac{\beta^{\alpha} y^{\alpha - 1} e^{-\beta y}}{\Gamma(\alpha)} dy\cr
%&= \frac{\beta^{\alpha}}{\Gamma(\alpha)} \frac{\partial}{\partial \alpha} \Big(\int_0^{\infty} y^{\alpha - 1} e^{-\beta y} dy\Big)\cr
%&= \frac{\beta^{\alpha}}{\Gamma(\alpha)} \frac{\partial}{\partial \alpha} \Big(\frac{\Gamma(\alpha)}{\beta^{\alpha}} \int_0^{\infty} \frac{\beta^{\alpha} y^{\alpha - 1} e^{-\beta x}}{\Gamma(\alpha)} dy\Big)\cr
%&= \frac{\beta^{\alpha}}{\Gamma(\alpha)} \frac{\partial}{\partial \alpha} \Big(\frac{\Gamma(\alpha)}{\beta^{\alpha}}\Big)\cr
%&= \frac{\partial}{\partial \alpha} \log(\frac{\Gamma(\alpha)}{\beta^{\alpha}})\cr
%&= \psi(\alpha) - \log(\beta).\nonumber
%\end{align}
with the Digamma function $\psi(\cdot)$ defined by:
\begin{align}\label{digamma_func}
\psi(x) &= \frac{d}{dx} \log(\Gamma(x)) = \log(x) - \frac{1}{2x} - \frac{1}{12x^2} + \ldots.
%&= \log(x) - \frac{1}{2x} - \sum_{n=1}^{\infty}\frac{B_{2n}}{2nx^{2n}}\nonumber
\end{align}
%where $B_{2n} = (1, \frac{-1}{2}, \frac{1}{6}, \frac{-1}{30}, \ldots)$ are the Bernoulli numbers. 
Thus, the relation between the mean value parameters and the natural parameters can be expressed by: 
\begin{align*}
\vec{w} = \begin{pmatrix}
w_1\\
w_2
\end{pmatrix} = \begin{pmatrix}
- \frac{\eta_2 + 1}{\eta_1}\\
\psi(\eta_2 + 1) - \log(- \eta_1)
\end{pmatrix}.
\end{align*}
In order to compute the function $\vec{\Phi}$, we further substitute $\eta_1 = - \frac{\eta_2 + 1}{w_1}$ (the first equation above) into $w_2 = \psi(\eta_2 + 1) - \log(- \eta_1)$ (the second equation above) that yields:
\begin{align}\label{etaApprox}
w_2 &\approx \log(\eta_2 + 1) - \frac{1}{2(\eta_2 + 1)} - \log(\frac{\eta_2 + 1}{ w_1}) = - \frac{1}{2(\eta_2 + 1)} + \log(w_1),
\end{align}
where we approximated the Digamma function with its first two terms. Then, after some simplifications, it can be easily seen that the above system of equations results in:
\begin{align}\label{eta1_eta2Gamma}
\eta_1 \approx \frac{-1}{2 w_1 (\log(w_1) - w_2)} \quad \mathrm{and} \quad \eta_2 \approx \frac{1}{2(\log(w_1) - w_2)} - 1.
\end{align}
Therefore, using the natural parameters \eqref{eta1_eta2Gamma} and the definition of the link function in \eqref{eta}, we get:
\begin{align}\label{etaGamma}
\vec{\tilde{\eta}}_d&(\vec{s},\Theta) = \vec{\Phi}\big(\bar{W}_d(\vec{s}, \Theta), \bar{V}_d(\vec{s}, \Theta)\big) \approx \left( 
\begin{array}{c} \frac{-1}{2 \bar{W}_d(\vec{s}, \Theta) \big( \log(\bar{W}_d(\vec{s}, \Theta)) - \bar{V}_d(\vec{s}, \Theta)\big)} \\
\frac{1}{2 \big( \log(\bar{W}_d(\vec{s}, \Theta)) - \bar{V}_d(\vec{s}, \Theta)\big)} - 1
\end{array} \right). \nonumber
\end{align}
The function $\vec{\Phi}$ defines the link between natural and mean value parameters which will be used in the EM algorithm. Observe that in each EM iteration,  we require to compute $\vec{\tilde{\eta}}_d(\vec{s},\Theta)$ (e.g., to compute the posteriors) given the mean value parameters $\bar{W}_d(\vec{s}, \Theta)$ and $\bar{V}_d(\vec{s}, \Theta)$. These mean values can be further used to estimate the variance of the Gamma distribution such as $\bar{\sigma}_d^2(\vec{s}, \Theta)$. Alternatively, we can use the variance formula of the Gamma distribution (given by $\sigma^2 = \frac{\eta_2 + 1}{\eta_1^2}$) and let:
%That is, 
%we update parameters $W$ and $V$ and using these parameters and also the index $h(d, \vec{s}, \Theta)$ 
%Finally, for the Gamma distribution we have $\sigma^2 = \frac{\eta_1 + 1}{\eta_2^2}$ that results in:
\begin{align}
\bar{\sigma}_d^2(\vec{s}, \Theta) \approx 2 \bar{W}^2_d(\vec{s}, \Theta) \big( \log(\bar{W}_d(\vec{s}, \Theta)) - \bar{V}_d(\vec{s}, \Theta) \big), \quad \forall d,
\end{align}
where (similar to the Gaussian case)
\begin{align}
\bar{\sigma}^2_d(\vec{s}, \Theta) &= \sigma^2_{d h(d,\vec{s},\Theta)}\ \ \mbox{\ and\ }\ \ h(d,\vec{s},\Theta) = \mathrm{argmax}_{h} \{W_{dh} s_h\}.
\end{align}
Likewise, the variance elements can be directly updated in each M-step by:
%\begin{small}
\begin{align}\label{sigmaM_gamma}
(\sigma^2_{dh})^{\mathrm{new}} &\approx 2 (W^{\mathrm{new}}_{dh})^2 \Big( \log(W^{\mathrm{new}}_{dh}) - V^{\mathrm{new}}_{dh} \Big) \cr
&= 2 (W^{\mathrm{new}}_{dh})^2 \Big( \frac{\sum_{n = 1}^N \mathbb{E}_{q^{(n)}}  [ \mathcal{A}_{dh}(\vec{s},\Theta) ] \, \log(W^{\mathrm{new}}_{dh})}{\sum_{n = 1}^N \mathbb{E}_{q^{(n)}} [ \mathcal{A}_{dh}(\vec{s}, \Theta) ] } - \frac{\sum_{n = 1}^N \mathbb{E}_{q^{(n)}}  [ \mathcal{A}_{dh}(\vec{s},\Theta) ] \, \log(y_d^{(n)})}{\sum_{n = 1}^N \mathbb{E}_{q^{(n)}}  [ \mathcal{A}_{dh}(\vec{s}, \Theta) ] } \Big) \cr
&= 2 \frac{\sum_{n = 1}^N \mathbb{E}_{q^{(n)}}  [ \mathcal{A}_{dh}(\vec{s},\Theta) ] \, (W^{\mathrm{new}}_{dh})^2 \big( \log(W^{\mathrm{new}}_{dh}) - \log(y_d^{(n)}) \big) }{\sum_{n = 1}^N \mathbb{E}_{q^{(n)}}  [ \mathcal{A}_{dh}(\vec{s}, \Theta) ] }.
\end{align}
%\end{small}

%Note that for the above equations, we assumed $\sigma^2 \in \mathbb{R}^{D \times H}$. For our experiments above, however, we were concerned with the variance of the Gamma distribution as a scalar. That is, the generative model we demand to be as follows:
%\begin{align}
%%
%p(\vec{s}\,|\, \Theta) &= \prod_{h=1}^H \pi_h^{s_h} (1 - \pi_h)^{1 - s_h} \\
%p(\vec{y}\,|\,\vec{s}, \Theta) &= \prod_{d=1}^D \mathrm{Gamma}(y_d;\bar{W}_d(\vec{s},\Theta), \sigma^2)
%\end{align}
%
%Therefore, in order to obtain $\sigma^2 \in \mathbb{R}$, one can either use equations \eqref{sigmaM_gamma} and then average over $H$ and $D$ entries of the matrix or straightly set the derivative of the free energy function w.r.t. $\sigma^2$ to zero and obtain a new update equation. That is, one can set $\frac{\partial \mathcal{F}}{\partial \sigma^2} = 0$ and then approximate the Digamma function which yields:
%\begin{align}
%\sigma^2 \approx \frac{2}{D N} \sum_{n = 1}^N <\bar{W}^2_d(\vec{s}, \Theta) \big( \log(\bar{W}_d(\vec{s}, \Theta)) - \log(y_d^{(n)}) \big)>_{q^{(n)}(\vec{s})}.
%\end{align}

\noindent{}The above equation has been used for updating the variance of the Gamma-MCA model in our experiments in Sec. \ref{subsec:noise_type_estimation}. Despite the approximation of the Digamma function, we observed that the aforementioned equation provides, in practice, a good estimation of the variance parameter. Other approximations of the Digamma function can be further used to improve the performance of the method. For instance, we also used the first three and also four terms of the Digamma function in \eqref{digamma_func} for our experiments and consistently observed a better approximation of the variance.

\subsection{The Evidence Lower Bound (ELBO)}
\label{appendix_ELBO}
For completeness, we here show the ELBO / free energy \eqref{EqnFreeEnergy} can be derived as a lower bound of the log-likelihood function.  For this, we use the view as presented, e.g., by \citet[][]{NealHinton1998,Bishop2006}
% and many more
and introduce variational distributions $q^{(n)}(\vec{s}\,)$ defined over the latent variables
(as discussed before, these distributions can be full posteriors $p(\sVec\mid\yVecN,\Theta)$, or denote approximations in the case of intractable posteriors).
% (that are defined over the latent space).
Then, for an EF-MCA data model  \eqref{modelEq1}-\eqref{modelEq3} we have:
\begin{align}\label{appendix_ELBOeq}
\mathcal{L}(\Theta) &= \sum_n \log \big(\sum_{\vec{s}} q^{(n)}(\vec{s}\,) \frac{p(\vec{y}^{\,(n)}, \vec{s}\,| \Theta)}{q^{(n)}(\vec{s}\,)}\big) %\cr
\geq \sum_n \sum_{\vec{s}} q^{(n)}(\vec{s}\,) \log \big( \frac{p(\vec{y}^{\,(n)}, \vec{s}\,| \Theta)}{q^{(n)}(\vec{s}\,)}\big)\cr
%&= \sum_n \sum_{\vec{s}} q^{(n)}(\vec{s}) \Big( \log \big( p(\vec{y}^{\,(n)}, \vec{s}\,| \Theta)\big) - \log\big( q^{(n)}(\vec{s}) \big) \Big)\cr
&= \sum_n \sum_{\vec{s}} q^{(n)}(\vec{s}\,) \log \big( p(\vec{y}^{\,(n)}, \vec{s}\, | \Theta)\big) - \sum_n \sum_{\vec{s}} q^{(n)}(\vec{s}\,) \log\big( q^{(n)}(\vec{s}\,) \big) = \mathcal{F}(q, \Theta),
\end{align}
where we used Jensen's inequality in the first line \citep[see, e.g.,][for the details]{Bishop2006}.
% \citep{jensen1906fonctions}
%is applied, using that $\log(\cdot)$ is concave and that $q^{(n)}(\vec{s}\,) > 0$ for each hidden state $\vec{s}$ and $\sum_{\vec{s}} q^{(n)}(\vec{s}\,) = 1$.
% (note that we here assume $q^{(n)}(\vec{s}) > 0$ for each hidden state $\vec{s}$ and that $\sum_{\vec{s}} q^{(n)}(\vec{s}) = 1$).
%
%
%take the assumption that $q^{(n)}(\vec{s})$ is in fact a probability density function defined over the latent parameter space, i.e., 
%for each hidden state $\vec{s}$ we have $q^{(n)}(\vec{s}) \geq 0$ and $\sum_{\vec{s}} q^{(n)}(\vec{s}) = 1$. 
%Now, using these assumptions and the fact that the $\log$ function is concave, we apply the 
%
%
The last equation above then represents the free energy function $\mathcal{F}(q, \Theta)$ (or the ELBO) given by \eqref{EqnFreeEnergy}.

\section{Additional Details on Numerical Experiments}
\label{appendixB}
%\subsection{Experimental Settings}

\subsection{The Bars Test}
\label{bars_test_appendix}
For the bars tests presented in Sec.~\ref{sec:ArtificialData}, we set $\pi_h^{\mathrm{gen}} = 0.2$ for $h = 1, \ldots, H$ (i.e., two active bars on average per data point) and generated $N = 1000$ i.i.d. data points according to each of the considered EF-MCA models. The ground-truth parameter $W$ was set to a value of $10$ for the bars and $1$ for the non-bar pixels in the case of E-MCA, and $0.99$ for the bars and $0.01$ for the non-bars in the case of B-MCA. Next, we trained E- and B-MCA models on their corresponding datasets and updated their parameters using  \eqref{update_W_real} and \eqref{Update_pi} for $50$ full EM iterations.  
%For both algorithms,  
We initialized $W$ by 
computing the mean of the data and adding a small amount of Gaussian noise; 
%concatenating $H$ randomly chosen data points;
also the priors $\pi_h$ were initialized at $0.3$ for each $h$. 
In each EM iteration,  the fixed-point M-step equation was iterated only once, as we observed comparable results compared to running it multiple times for both E- and B-MCA models.  Also for P-MCA, we used $H = 6$, $D = 9$ and set other parameters to be similar to the E-MCA model.
%In each EM iteration, the M-step fixed-point \eqref{update_W_real} was (for simplicity) applied just once. In this context, one can apply the equation several times to assess the convergence rate of the algorithms. We here observed similar behaviour of the E- and B-MCA algorithms when running the fixed-point equations multiple times compared to running it only once. 
\ \\

\noindent{}\textbf{Relation to Previous MCA Approaches.} 
In previous studies \citep[e.g.,][]{LuckeSahani2008,LuckeEtAl2009,LuckeEggert2010,BornscheinEtAl2013}, the maximum function has been commonly approximated using a smooth function for the sake of numerical stabilities.  That is, for the case of Poisson or Gaussian distributions (where $M(\Theta) = W$), a smooth function $\bar{W}_d^{\rho}$ is used to approximate the function $\bar{W}_d$ as $\rho$ approaches infinity: %\vspace{-1mm}
\begin{align*}
\bar{W}_d^{\rho}(\vec{s}, \Theta) := \Big( \sum_{h=1}^H (W_{dh} s_h )^{\rho}  \Big)^{\frac{1}{\rho}} \quad \mbox{where} \quad 
\lim_{\rho\rightarrow\infty} \bar{W}_d^{\rho}(\vec{s}, \Theta) = \bar{W}_d (\vec{s}, \Theta).
\end{align*}
Parameter $\rho$ does in fact control the level of non-linearity and it can be varied during the learning  \citep[as it has been done by][]{LuckeEtAl2009}, or it can be set to a large fixed value \citep[similar to the procedure considered by][who used, e.g.,  $\rho = 21$]{BornscheinEtAl2013}.  
Consequently, $\mathcal{A}_{dh} (\vec{s}, \Theta)$ can be rewritten by \citep[see, e.g.,][for the details]{LuckeSahani2008}:
\begin{align*}
\mathcal{A}_{dh} (\vec{s}, \Theta) := \lim_{\rho\rightarrow\infty} \Big( \frac{\partial}{\partial W_{dh}} \bar{W}_d^{\rho}(\vec{s}, \Theta) \Big) = \lim_{\rho\rightarrow\infty} 
\frac{s_h (W_{dh})^{\rho}}{\sum_{h} s_h (W_{dh} )^{\rho} }\ .
\end{align*}
We here observed that such an approximation is beneficial to avoid numerical instabilities which can be used, e.g., along other approaches for avoiding local optima.
%and we have also made use of (for the results presented in Fig.~\ref{fig_100runs}) . 
\ \\

%\subsubsection{Avoiding Local Optima}
%\label{AvoidingLocalOptima}
%\textbf{Avoiding Local Optima For The Gamma-MCA Model,}
\noindent{}\textbf{Avoiding Local Optima.} 
In our bars test experiments, we executed the algorithms multiple times while holding the hyper-parameters fixed and using different realizations of the initial model parameters in each run. We observed that the best solutions recovered all bars with high accuracy. 
Nonetheless, 
%In some cases, however, 
as mentioned before, slight overfitting effects occurred in some cases that could be diminished by increasing the number of data points. We also frequently observed that the algorithms could not recover the ground-truth parameters. This can be seen as the effect of local optima which is different for each of the EF-MCA models. Local optima effects showed, for instance, to be more severe for Gamma-MCA compared to others.
While annealing methods have frequently been applied as a measure against local optima effects \citep[e.g.,][]{UedaNakano1998,LuckeSahani2008}, we here investigated an alternative approach in order to prevent the algorithm from converging to sub-optimal solutions. In detail, we first employed the Gaussian-MCA model as it is shown to be more robust against local optima and performed 50 EM iterations. We then used the learned $\Theta$ parameters (of the Gaussian-MCA) to initialize the parameters of the desired EF-MCA model. We observed that, e.g. for Gamma-MCA, such a method helps alleviate the effect of local optima.

\begin{figure}[t!]
  \centering
  \includegraphics[scale=0.45]{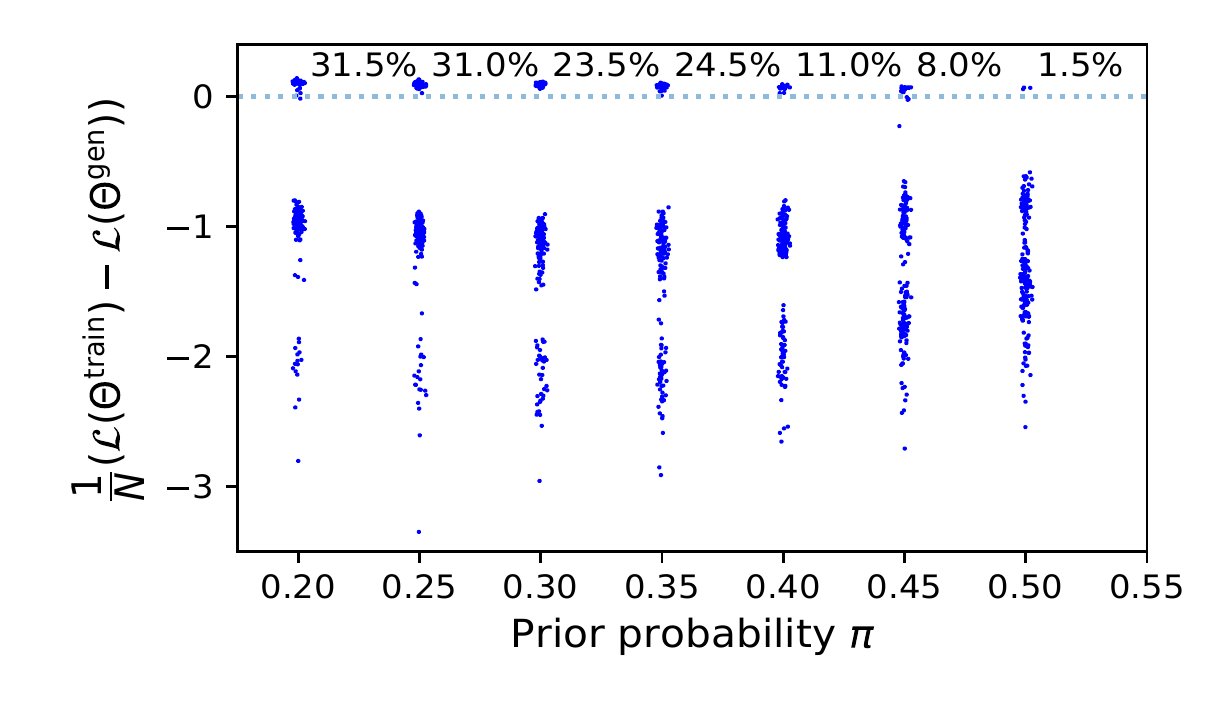}
  \caption{Differences of log-likelihoods between trained B-MCA models and ground-truth values for bars test datasets.
  Average active number of bars ($\pi H$) is set by different prior probabilities $0.2 \leq \pi \leq 0.5$ in step size $0.05$.
  Individual runs are shown with small offsets along the $\pi$-axis for a better visualization.   Percentages refer to the fraction of runs that converged to ground-truth values (in such cases, the learned log-likelihoods are usually higher than the ground-truth due to overfitting).}
  \label{fig:Reliability}
\end{figure}
%\subsubsection{Reliability For Binary Data}
%
\ \\
\noindent{}\textbf{Reliability For Binary Data.} 
In addition to the local optima effect, the inherent complexity of the model can also affect its performance. That is, for instance, increasing the sparseness (i.e., $\sum_h \pi_h$ which in this case is equal to $\pi H$ as we assumed $\pi_h = \pi$ for $h = 1, \ldots, H$) will decrease the reliability of the model.
In fact, one measure that has been of interest is how robust an algorithm is w.r.t. the average number of active bars, and how often it reaches the global vs.\ any local optimum. The probability of recovering all bars has been commonly termed \textit{reliability} of the algorithm \citep[][]{Spratling2006}. In the following, we investigate the reliability of the Bernoulli-MCA (B-MCA) model for different
levels of sparseness. For our purposes, we notably did not optimize learning to improve reliability \citep[e.g., by introducing annealing procedures;][]{LuckeEggert2010} but used the canonical form of
B-MCA with exact E-steps (i.e., full posteriors) and updated $W$ and $\vec{\pi}$ using Eqns. \eqref{update_W_real} and \eqref{Update_pi}.
We generated artificial datasets according to the B-MCA model similar to the procedure presented above with $W_{dh} \in \{0.01, 0.99\}$ and varied the value of $\pi$ that determines the average number of active bars per data point.
For each value of $\pi$, we generated $200$ different bars datasets each with $N = 1000$ data points and then fitted a B-MCA model.
We computed $50$ full EM iterations with the same initialization as the experiments above and measured the reliability in terms of the percentage of all trained B-MCA models that achieved a higher log-likelihood value than the ground-truth.  
It should be noted that we commonly observed the overfitting effect as we used a finite sample size here and thus, the best runs were observed to have log-likelihood values that are slightly higher than the ground-truth log-likelihood.  Results are presented in Fig. \ref{fig:Reliability}.

As it can be observed, the best runs reach higher log-likelihood values than the ground-truth. These runs do recover all bar patterns and the prior parameters.
Even for values $\pi=0.5$, i.e., for five out of ten bars per data point on average, one to two out of 100 runs do extract all bars. Thus, the best runs in terms of higher log-likelihood values can be automatically determined without
knowledge of the ground-truth such that the proposed algorithm (B-MCA here) yields a reliable approach for extracting all bars for binary data  \citep[compare][]{Spratling2006,FrolovEtAl2016}. 

\begin{figure}
\centering
\includegraphics[scale=0.6]{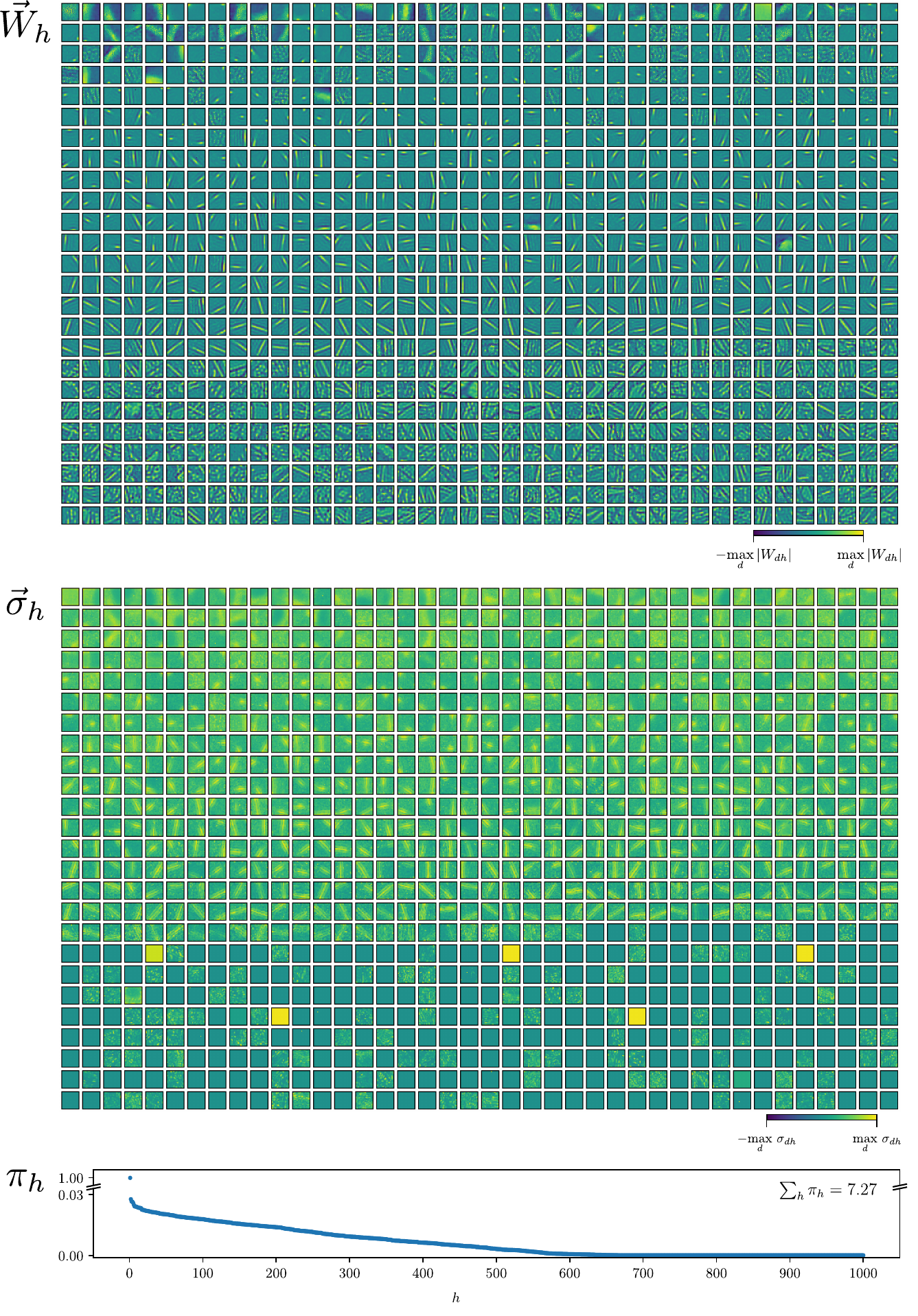}
\caption{Complete dictionaries with $H=1,000$ component means and component variances learned from natural image patches using Gaussian-MCA (for improved visibility, we illustrate standard deviations instead of variances;
compare Sec.~\ref{sec_natural_image_patches}). The generative fields are ordered according to their activations, starting with the fields corresponding to the most active hidden units.}
\label{fig:image_patches1000}
\end{figure}

\subsection{Feature Extraction -- Natural Image Patches} %
\label{Natural_images_appendix}
To initialize the model parameters $\Theta=\{\vec{\pi}, W, \sigma^2 \}$ %where $\Sigma_{dh}=\sigma_{dh}^2$ 
of the Gaussian-MCA model described in Sec.~\ref{sec_natural_image_patches}, we followed the procedure used in \citet[][]{mousavi2020ddDictionary}: The $\pi_h^{\init}$ were randomly uniformly drawn from the interval $[0.1,0.5]$ for $h=1,\dots,H$. 
The generative fields (GFs) for component variances were set equal to the average variance per pixel, i.e., $(\sigma_{dh}^2)^{\init}=\sigma_{\mathcal{Y}}^2\ \forall d,h$ with $\sigma_{\mathcal{Y}}^2 = \frac{1}{DN} \sum_{d=1}^{D} \sum_{n=1}^{N} (y_d^{(n)} - \bar{y}_d)^2$ and $\bar{y}_d=\frac{1}{N}\sum_{n=1}^{N} y_d^{(n)}$.
The initial values for the GFs for component means were obtained by clustering the dataset using 
%the var-GMM-S algorithm \citep{HirschbergerEtAl2022,ForsterLucke2018Sublinear}, i.e. publicly available source code together with default hyper-parameters from example code \citep{vcGMMCodeBitbucket}, 
a variational GMM algorithm (we used the var-GMM-S algorithm implemented in \citep{vcGMMCodeBitbucket} together with the default hyper-parameters; also see \citet{HirschbergerEtAl2022,ForsterLucke2018Sublinear}),  
and then setting $\vec{W}_h^{\init}$ equal to the learned $H$ cluster centers. 
In order to ensure positive values for $\bar{\sigma}$ (see, e.g.,  Fig.\,\ref{fig:image_patches20}), we used the background model discussed in \citet[][]{MousaviEtAl2021} and enforced one hidden unit $s_1={\rm const}=1$ during learning. 
% Specifically for this experiment, we followed \citep[][]{MousaviEtAl2021} and used a background for our dictionaries $W$ and $V$. Such a background is implemented by setting the element $s_0$ of our hidden states to be always equal to 1. Intuitively, this method then enables the model to learn a common background that appears in all different image patches. In Fig.~\ref{fig:image_patches1000}, the top left generative fields (for both mean and variance) denote the background, and the corresponding prior parameter is also the point close to 1.
%
% In addition, we here used a more sophisticated initialization for our algorithm. 
% For this, we first trained a XXXXX model on the given set of data points and used the learned XXXXX parameters as the initialization for our Gaussian-MCA model afterwards. We then trained the model for number of XXXXX variational EM iterations (given the settings discussed in Sec.~\ref{sec_natural_image_patches}) on XXXX CPU cores. 
% Subsequently, performing each iteration took approximately XXXX minutes. 
%
We performed 500 EM iterations, initially updating only $W$ and $\vec{\pi}$, while $\sigma^2$ remained fixed at its initial value. After approximately half of the iterations, i.e., when we could no longer observe significant changes of the lower bound, we fixed $W$ and $\vec{\pi}$ and only updated $\sigma^2$ for the remaining EM steps (we found this procedure to help avoiding oscillations of the lower bound which we observed when updating all parameters jointly from the beginning).
Figure~\ref{fig:image_patches1000} depicts the $H=1000$ learned GFs as well as the learned priors
% (in whose consequence $\sigma_{d h(d, \vec{s}, \Theta)}>0 \forall d$ for positive $\sigma_{dh}$). 
(the GFs for the background unit correspond to the first depicted dictionary element, the respective prior equals one).

Using compute resources of the HPC cluster CARL of the University of Oldenburg (which is equipped with Intel Xeon E5-2650 v4 12C, E5-2667 v4 8C and E7-8891 v4 10c CPUs), we observed an average runtime of approximately 10 minutes per EM step for our implementation when being executed distributedly on 556 CPU cores.
% min=529.01 max=933.43 mean=565.64

\begin{figure}[t!]
\centering
\includegraphics[width=\textwidth]{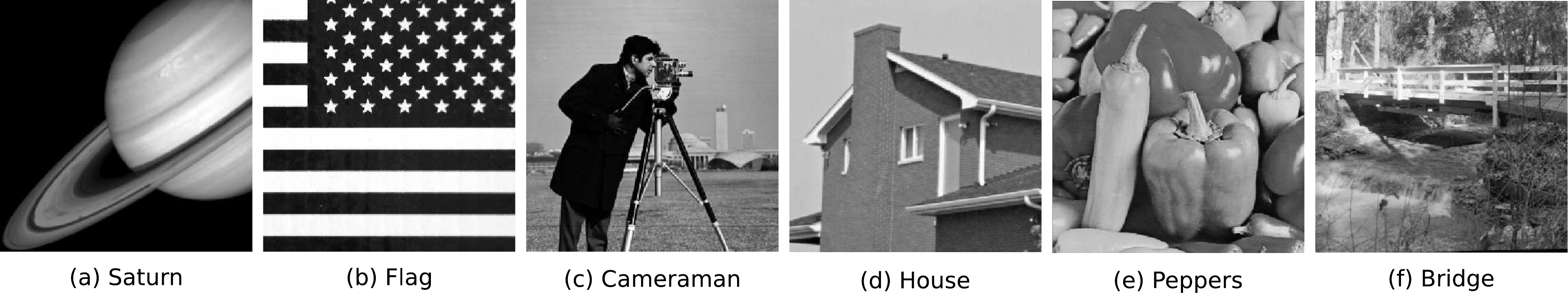}
\caption{Original images used for denoising benchmarks.}
\label{fig:original_images}
\end{figure}

\subsection{Noise Type Estimation}
\label{app:noise_type_estimation}
%\subsubsection{Noise Type Estimation - Related work}
%\label{app:noise_type_estimation_related_work}
The problem of noise type estimation has been addressed in a number of contributions. \cite{TeymurazyanEtAl2013}, for instance, studied the problem of distinguishing the type of noise in Positron Emission Tomography (PET) data \citep[][]{TeymurazyanEtAl2013}. In PET, radioactive fluids are injected into humans or animals in order to obtain images for diagnostics or scientific studies. As a result, knowing the type of noise is crucial for different image processing routines \citep[][]{TeymurazyanEtAl2013,MouEtAl2018}. 
Other related work focused on machine learning automation  \citep[e.g.,][]{valera2017automatic,vergari2019automatic}. In \citet[][]{valera2017automatic}, a Bayesian approach is presented to infer the type of data in categories such as categorical, ordinal, count, real-valued, positive real-valued or interval \citep[also see][]{MolinaEtAl2018}. 
%The algorithm does not directly suggest a likelihood model for the data, however. That is, for instance, it is still necessary to find out the best likelihood function that fits positive real-valued data; e.g., Gamma, Inverse Gamma, Inverse Gaussian or an Exponential distribution can be used. \cite{vergari2019automatic} further attempt to improve these results and aim to also infer the parametric likelihood model of the data. The approach is then shown to be more robust against missing, corrupted or anomalous data. 
%
As discussed in Sec.~\ref{Relation_to_Previous_Work}, such approaches are different from our work here as they exploit (among other differences) a mixture of likelihood functions.  
We, on the other hand,  trained each of the EF-MCA data models here and directly used the obtained ELBOs to infer the noise type of a given dataset. 

For the experiments in Sec.~\ref{subsec:noise_type_estimation}, we used $D = 144$ and $H = 100$ (for images) or $H = 512$ (for acoustic data), and trained the EF-MCA data models for 100 variational EM iterations. 
The three examples in Tab.~\ref{table:chimeresults} that we selected from the CHiME dataset \citep[][]{FosterEtAl2015} were: CR-lounge-200110-1711.s0-chunk46.48kHz.wav, CR-lounge-200110-1601.s0-chunk25.48kHz.wav and CR-lounge-200110-1601.s0-chunk7.48kHz.wav. 
% We carefully listened to different examples from the dataset and have chosen the ones that  seemed to be noisier than the others. 
Amplitude spectrograms were computed as follows: Time domain signals were resampled to $22.050$\, Hz and cut into $2$ seconds long mono segments. We then computed the STFT using a $2048$-point FFT, $512$ samples frame shift and Hann windowing. This resulted in spectrograms with $1025$ frequency channels and $87$ time steps. The amplitude spectrograms were then cut into patches of size $D=12\times12$ resulting in $N=77,064$ data points. After learning, we observed very small values of components $\sigma_{dh}$ (between $0.1$ and $0.2$) 
for both Gaussian- and Gamma-MCA models which resulted in very high, positive free energies in Tab.~\ref{table:chimeresults}.

\subsection{Denoising}
\label{denoising}
We used images with gray scales in the interval $[0, 255]$ (see the original images in Fig.~\ref{fig:original_images}) and, 
rescaled the pixel amplitudes by dividing by the maximum amplitude and then multiplying by the desired peak value.  
Similar to \citet[][]{DrefsEtAl2022},  we further cut the images into smaller patches by moving a sliding window over the noisy versions.  
We then employed the EF-MCA algorithms with $D = 20 \times 20$ \citep[similar to][]{giryes2014sparsity} and different $H$ values (for Poisson noise), and $D = 12 \times 12$ and $H = 512$ (for Exponential noise). Each algorithm was trained for $100$ variational EM iterations without any annealing or further pre- or post-processing. 
For both E- and P-MCA, we manually set the minimum value of $W$ to $10^{-2}$ after each M-step in order to avoid zero values. 

Tab.~\ref{table:poisson_denoising} lists the PSNR values for VST+BM3D, I+VST+BM3D and P$^4$IP from \citet[][]{azzari2016variance}, BM3D, NLSPCA and SPDA from \citet[][]{giryes2014sparsity}, DenoiseNet from \citet[][]{remez2017deep} and DCEA-UC from \citet[][]{tolooshams2020convolutional}. 
Also, 
Fig.~\ref{table:PoissonResults_appendix} presents the reconstructed images using different generative models exploited for Poisson denoising.

\begin{figure}[t!]
\centering
\includegraphics[scale=1.15]{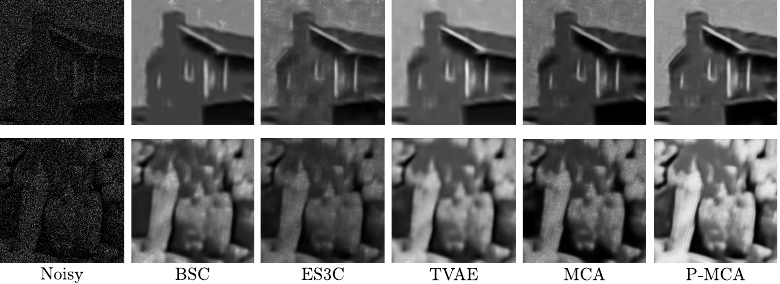}
\caption{Reconstructed images obtained by different LVMs for Poisson denoising at peak value 1. The corresponding PSNR and ELBO values are presented in Fig.~\ref{table:PoissonResults}.}
\label{table:PoissonResults_appendix}
\end{figure}

\bibliography{22-0359}

\begin{thebibliography}{134}
\providecommand{\natexlab}[1]{#1}
\providecommand{\url}[1]{\texttt{#1}}
\expandafter\ifx\csname urlstyle\endcsname\relax
  \providecommand{\doi}[1]{doi: #1}\else
  \providecommand{\doi}{doi: \begingroup \urlstyle{rm}\Url}\fi

\bibitem[Aharon et~al.(2006)Aharon, Elad, and Bruckstein]{aharon2006k}
M.~Aharon, M.~Elad, and A.~Bruckstein.
\newblock {K-SVD}: An algorithm for designing overcomplete dictionaries for
  sparse representation.
\newblock \emph{IEEE Transactions on Signal Processing}, 54\penalty0
  (11):\penalty0 4311--4322, 2006.

\bibitem[Anscombe(1948)]{anscombe1948transformation}
F.~J. Anscombe.
\newblock The transformation of {P}oisson, {B}inomial and {N}egative-{B}inomial
  data.
\newblock \emph{Biometrika}, 35\penalty0 (3/4):\penalty0 246--254, 1948.

\bibitem[Azzari and Foi(2016)]{azzari2016variance}
L.~Azzari and A.~Foi.
\newblock Variance stabilization for noisy+estimate combination in iterative
  {P}oisson denoising.
\newblock \emph{IEEE Signal Processing Letters}, 23\penalty0 (8):\penalty0
  1086--1090, 2016.

\bibitem[Baby and Bourlard(2021)]{BabyBourlard2021}
D.~Baby and H.~Bourlard.
\newblock Speech dereverberation using variational autoencoders.
\newblock In \emph{IEEE International Conference on Acoustics, Speech and
  Signal Processing}, pages 5784--5788, 2021.

\bibitem[Bartholomew(1987)]{BartholomewKnott1987}
D.~J. Bartholomew.
\newblock \emph{Latent Variable Models and Factors Analysis}.
\newblock Oxford University Press, Inc., 1987.

\bibitem[Basbug and Engelhardt(2016)]{basbug2016hierarchical}
M.~Basbug and B.~Engelhardt.
\newblock Hierarchical compound {P}oisson factorization.
\newblock In \emph{International Conference on Machine Learning}, pages
  1795--1803. PMLR, 2016.

\bibitem[Berkes et~al.(2009)Berkes, White, and Fiser]{BerkesEtAl2009}
P.~Berkes, B.~White, and J.~Fiser.
\newblock No evidence for active sparsification in the visual cortex.
\newblock In \emph{Advances in Neural Information Processing Systems},
  volume~22, pages 108--116, 2009.

\bibitem[Berliner et~al.(2022)Berliner, Rotman, Adi, Reichart, and
  Hazan]{BerlinerEtAl2021}
A.~Berliner, G.~Rotman, Y.~Adi, R.~Reichart, and T.~Hazan.
\newblock Learning discrete structured variational auto-encoder using natural
  evolution strategies.
\newblock In \emph{International Conference on Learning Representations}, 2022.

\bibitem[Bickel and Doksum(2015)]{BickelDoksum2015}
P.~J. Bickel and K.~A. Doksum.
\newblock \emph{Mathematical Statistics: Basic Ideas and Selected Topics}.
\newblock Chapman and Hall/CRC, 2015.

\bibitem[Bishop(2006)]{Bishop2006}
C.~M. Bishop.
\newblock \emph{Pattern Recognition and Machine Learning}.
\newblock Springer, 2006.

\bibitem[Blei et~al.(2017)Blei, Kucukelbir, and McAuliffe]{BleiEtAl2017}
D.~M. Blei, A.~Kucukelbir, and J.~D. McAuliffe.
\newblock Variational inference: A review for statisticians.
\newblock \emph{Journal of the American Statistical Association}, 112\penalty0
  (518):\penalty0 859--877, 2017.

\bibitem[Bond-Taylor et~al.(2022)Bond-Taylor, Leach, Long, and
  Willcocks]{BondTaylerEtAl2022}
S.~Bond-Taylor, A.~Leach, Y.~Long, and C.~G. Willcocks.
\newblock Deep generative modelling: A comparative review of {VAE}s, {GAN}s,
  normalizing flows, energy-based and autoregressive models.
\newblock \emph{IEEE Transactions on Pattern Analysis and Machine
  Intelligence}, 44:\penalty0 7327--7347, 2022.

\bibitem[Bornschein et~al.(2013)Bornschein, Henniges, and
  L\"ucke]{BornscheinEtAl2013}
J.~Bornschein, M.~Henniges, and J.~L\"ucke.
\newblock Are {V}1 simple cells optimized for visual occlusions? {A}
  comparative study.
\newblock \emph{PLOS Computational Biology}, 9\penalty0 (6):\penalty0 e1003062,
  2013.

\bibitem[Burger et~al.(2012)Burger, Schuler, and Harmeling]{BurgerEtAl2012}
H.~C. Burger, C.~J. Schuler, and S.~Harmeling.
\newblock Image denoising: Can plain neural networks compete with {BM3D}?
\newblock In \emph{Proceedings of the IEEE Conference on Computer Vision and
  Pattern Recognition}, pages 2392--2399, 2012.

\bibitem[Chaudhury and Roy(2017)]{ChaudhuryAndRoy2017}
S.~Chaudhury and H.~Roy.
\newblock Can fully convolutional networks perform well for general image
  restoration problems?
\newblock In \emph{Fifteenth IAPR International Conference on Machine Vision
  Applications}, pages 254--257. IEEE, 2017.

\bibitem[Chen and Pock(2017)]{ChenAndPock2017}
Y.~Chen and T.~Pock.
\newblock Trainable nonlinear reaction diffusion: A flexible framework for fast
  and effective image restoration.
\newblock \emph{IEEE Transactions on Pattern Analysis and Machine
  Intelligence}, 39\penalty0 (6):\penalty0 1256--1272, 2017.

\bibitem[Chiquet et~al.(2018)Chiquet, Mariadassou, and Robin]{ChiquetEtAl2018}
J.~Chiquet, M.~Mariadassou, and S.~Robin.
\newblock Variational inference for probabilistic {P}oisson {PCA}.
\newblock \emph{The Annals of Applied Statistics}, 12\penalty0 (4):\penalty0
  2674--2698, 2018.

\bibitem[Collins et~al.(2002)Collins, Dasgupta, and Schapire]{CollinsEtAl2002}
M.~Collins, S.~Dasgupta, and R.~E. Schapire.
\newblock A generalization of principal components analysis to the exponential
  family.
\newblock In \emph{Advances in Neural Information Processing Systems}, pages
  617--624, 2002.

\bibitem[Comon(1994)]{Comon1994}
P.~Comon.
\newblock Independent component analysis, a new concept?
\newblock \emph{Signal Processing}, 36\penalty0 (3):\penalty0 287--314, 1994.

\bibitem[Cox(2006)]{Cox2006}
D.~R. Cox.
\newblock \emph{Principles of Statistical Inference}.
\newblock Cambridge University Press, 2006.

\bibitem[Dabov et~al.(2007)Dabov, Foi, Katkovnik, and
  Egiazarian]{DabovEtAl2007}
K.~Dabov, A.~Foi, V.~Katkovnik, and K.~Egiazarian.
\newblock Image denoising by sparse 3-d transform-domain collaborative
  filtering.
\newblock \emph{IEEE Transactions on Image Processing}, 16\penalty0
  (8):\penalty0 2080--2095, 2007.

\bibitem[Dai et~al.(2013)Dai, Ding, and Wahba]{DaiEtAl2013}
B.~Dai, S.~Ding, and G.~Wahba.
\newblock Multivariate {B}ernoulli distribution.
\newblock \emph{Bernoulli}, 19\penalty0 (4):\penalty0 1465--1483, 2013.

\bibitem[Dai and L{\"u}cke(2014)]{DaiLucke2014}
Z.~Dai and J.~L{\"u}cke.
\newblock Autonomous document cleaning -- a generative approach to reconstruct
  strongly corrupted scanned texts.
\newblock \emph{IEEE Transactions on Pattern Analysis and Machine
  Intelligence}, 36\penalty0 (10):\penalty0 1950--1962, 2014.

\bibitem[DeGuchy et~al.(2019)DeGuchy, Santiago, Banuelos, and
  Marcia]{deguchy2019deep}
O.~DeGuchy, F.~Santiago, M.~Banuelos, and R.~F. Marcia.
\newblock Deep neural networks for low-resolution photon-limited imaging.
\newblock In \emph{IEEE International Conference on Acoustics, Speech and
  Signal Processing}, pages 3247--3251. IEEE, 2019.

\bibitem[Dempster et~al.(1977)Dempster, Laird, and Rubin]{DempsterEtAl1977}
A.~P. Dempster, N.~M. Laird, and D.~B. Rubin.
\newblock Maximum likelihood from incomplete data via the {EM} algorithm.
\newblock \emph{Journal of the Royal Statistical Society: Series B
  (Methodological)}, 39:\penalty0 1--38, 1977.

\bibitem[Drefs et~al.(2022{\natexlab{a}})Drefs, Guiraud, and
  L{\"u}cke]{DrefsEtAl2022}
J.~Drefs, E.~Guiraud, and J.~L{\"u}cke.
\newblock Evolutionary variational optimization of generative models.
\newblock \emph{Journal of Machine Learning Research}, 23\penalty0
  (21):\penalty0 1--51, 2022{\natexlab{a}}.

\bibitem[Drefs et~al.(2022{\natexlab{b}})Drefs, Guiraud, Panagiotou, and
  L{\"u}cke]{DrefsEtAl2022a}
J.~Drefs, E.~Guiraud, F.~Panagiotou, and J.~L{\"u}cke.
\newblock Direct evolutionary optimization of variational autoencoders with
  binary latents.
\newblock In \emph{European Conference on Machine Learning and Principles and
  Practice of Knowledge Discovery in Databases}, pages 357--372. Springer,
  2022{\natexlab{b}}.

\bibitem[Efron(1975)]{Efron1975}
B.~Efron.
\newblock Defining the curvature of a statistical problem (with applications to
  second order efficiency).
\newblock \emph{The Annals of Statistics}, pages 1189--1242, 1975.

\bibitem[Efron(2023)]{Efron2023}
B.~Efron.
\newblock \emph{Exponential Families in Theory and Practice}.
\newblock Cambridge University Press, 2023.

\bibitem[Everitt(1984)]{Everitt1984}
B.~Everitt.
\newblock \emph{An Introduction to Latent Variable Models}.
\newblock Chapman and Hall, 1984.

\bibitem[Exarchakis and L{\"u}cke(2017)]{ExarchakisLucke2017}
G.~Exarchakis and J.~L{\"u}cke.
\newblock Discrete sparse coding.
\newblock \emph{Neural Computation}, 29:\penalty0 2979--3013, 2017.

\bibitem[Exarchakis et~al.(2022)Exarchakis, Oubari, and
  Lenz]{ExarchakisEtAl2022}
G.~Exarchakis, O.~Oubari, and G.~Lenz.
\newblock A sampling-based approach for efficient clustering in large datasets.
\newblock In \emph{Proceedings of the IEEE/CVF Conference on Computer Vision
  and Pattern Recognition}, pages 12403--12412, 2022.

\bibitem[Ferrari and Cribari-Neto(2004)]{FerrariCribariNeto2004}
S.~Ferrari and F.~Cribari-Neto.
\newblock Beta regression for modelling rates and proportions.
\newblock \emph{Journal of Applied Statistics}, 31\penalty0 (7):\penalty0
  799--815, 2004.

\bibitem[Fischer and Igel(2014)]{FischerIgel2014}
A.~Fischer and C.~Igel.
\newblock Training restricted {B}oltzmann machines: An introduction.
\newblock \emph{Pattern Recognition}, 47\penalty0 (1):\penalty0 25--39, 2014.

\bibitem[Fisz(1955)]{fisz1955limiting}
M.~Fisz.
\newblock The limiting distribution of a function of two independent random
  variables and its statistical application.
\newblock In \emph{Colloquium Mathematicum}, volume~3, pages 138--146, 1955.

\bibitem[F\"oldi\'ak(1990)]{Foeldiak1990}
P.~F\"oldi\'ak.
\newblock Forming sparse representations by local anti-{H}ebbian learning.
\newblock \emph{Biological Cybernetics}, 64:\penalty0 165--170, 1990.

\bibitem[Forster and L{\"u}cke(2018)]{ForsterLucke2018Sublinear}
D.~Forster and J.~L{\"u}cke.
\newblock Can clustering scale sublinearly with its clusters? {A} variational
  {EM} acceleration of {GMM}s and k-means.
\newblock In \emph{International Conference on Artificial Intelligence and
  Statistics}, pages 124--132. PMLR, 2018.

\bibitem[Foster et~al.(2015)Foster, Sigtia, Krstulovic, Barker, and
  Plumbley]{FosterEtAl2015}
P.~Foster, S.~Sigtia, S.~Krstulovic, J.~Barker, and M.~D. Plumbley.
\newblock Chime-home: A dataset for sound source recognition in a domestic
  environment.
\newblock In \emph{Proceedings of the 11th Workshop on Applications of Signal
  Processing to Audio and Acoustics}, pages 1--5, 2015.

\bibitem[Frolov et~al.(2014)Frolov, H{\'u}sek, and Polyakov]{FrolovEtAl2014}
A.~A. Frolov, D.~H{\'u}sek, and P.~Y. Polyakov.
\newblock Two expectation-maximization algorithms for boolean factor analysis.
\newblock \emph{Neurocomputing}, 130\penalty0 (0):\penalty0 83 -- 97, 2014.

\bibitem[Frolov et~al.(2015)Frolov, H{\'u}sek, and Polyakov]{FrolovEtAl2016}
A.~A. Frolov, D.~H{\'u}sek, and P.~Y. Polyakov.
\newblock Comparison of seven methods for boolean factor analysis and their
  evaluation by information gain.
\newblock \emph{IEEE Transactions on Neural Networks and Learning Systems},
  27\penalty0 (3):\penalty0 538--550, 2015.

\bibitem[Gan et~al.(2015)Gan, Henao, Carlson, and Carin]{GanEtAl2015}
Z.~Gan, R.~Henao, D.~Carlson, and L.~Carin.
\newblock Learning deep sigmoid belief networks with data augmentation.
\newblock In \emph{International Conference on Artificial Intelligence and
  Statistics}, pages 268--276. PMLR, 2015.

\bibitem[Giryes and Elad(2014)]{giryes2014sparsity}
R.~Giryes and M.~Elad.
\newblock Sparsity-based {P}oisson denoising with dictionary learning.
\newblock \emph{IEEE Transactions on Image Processing}, 23\penalty0
  (12):\penalty0 5057--5069, 2014.

\bibitem[Goodfellow et~al.(2012)Goodfellow, Courville, and
  Bengio]{goodfellow2012scaling}
I.~J. Goodfellow, A.~Courville, and Y.~Bengio.
\newblock Scaling up spike-and-slab models for unsupervised feature learning.
\newblock \emph{IEEE Transactions on Pattern Analysis and Machine
  Intelligence}, 35\penalty0 (8):\penalty0 1902--1914, 2012.

\bibitem[Grønbech et~al.(2020)Grønbech, Vording, Timshel, Sønderby, Pers,
  and Winther]{GronbechEtAl2020}
C.~H. Grønbech, M.~F. Vording, P.~N. Timshel, C.~K. Sønderby, T.~H. Pers, and
  O.~Winther.
\newblock {scVAE: variational auto-encoders for single-cell gene expression
  data}.
\newblock \emph{Bioinformatics}, 36\penalty0 (16):\penalty0 4415--4422, 2020.

\bibitem[Gu et~al.(2014)Gu, Zhang, Zuo, and Feng]{GuEtAl2014}
S.~Gu, L.~Zhang, W.~Zuo, and X.~Feng.
\newblock Weighted nuclear norm minimization with application to image
  denoising.
\newblock In \emph{Proceedings of the IEEE Conference on Computer Vision and
  Pattern Recognition}, pages 2862--2869, 2014.

\bibitem[Haft et~al.(2004)Haft, Hofman, and Tresp]{HaftEtAl2004}
M.~Haft, R.~Hofman, and V.~Tresp.
\newblock Generative binary codes.
\newblock \emph{Formal Pattern Analysis \& Applications}, 6\penalty0
  (4):\penalty0 269--284, 2004.

\bibitem[H{\"a}lv{\"a} et~al.(2021)H{\"a}lv{\"a}, Le~Corff, Leh{\'e}ricy, So,
  Zhu, Gassiat, and Hyv{\"a}rinen]{HalvaEtAl2021}
H.~H{\"a}lv{\"a}, S.~Le~Corff, L.~Leh{\'e}ricy, J.~So, Y.~Zhu, E.~Gassiat, and
  A.~Hyv{\"a}rinen.
\newblock Disentangling identifiable features from noisy data with structured
  nonlinear {ICA}.
\newblock In \emph{Advances in Neural Information Processing Systems},
  volume~34, pages 1624--1633, 2021.

\bibitem[Henniges et~al.(2010)Henniges, Puertas, Bornschein, Eggert, and
  L{\"u}cke]{HennigesEtAl2010}
M.~Henniges, G.~Puertas, J.~Bornschein, J.~Eggert, and J.~L{\"u}cke.
\newblock Binary sparse coding.
\newblock In \emph{International Conference on Latent Variable Analysis and
  Signal Separation}, pages 450--457. Springer, 2010.

\bibitem[Henniges et~al.(2014)Henniges, Turner, Sahani, Eggert, and
  L{{\"u}}cke]{HennigesEtAl2014}
M.~Henniges, R.~E. Turner, M.~Sahani, J.~Eggert, and J.~L{{\"u}}cke.
\newblock Efficient occlusive components analysis.
\newblock \emph{Journal of Machine Learning Research}, 15:\penalty0 2689--2722,
  2014.

\bibitem[Hinton et~al.(2006)Hinton, Osindero, and Teh]{HintonEtAl2006}
G.~Hinton, S.~Osindero, and Y.~Teh.
\newblock {A fast learning algorithm for deep belief nets}.
\newblock \emph{Neural Computation}, 18:\penalty0 1527--1554, 2006.

\bibitem[Hirschberger(2019)]{vcGMMCodeBitbucket}
F.~Hirschberger.
\newblock {vc-GMM. {\em Bitbucket repository}}.
\newblock \url{https://bitbucket.org/fhirschberger/clustering}, 2019.
\newblock Accessed: 2019-09-17.

\bibitem[Hirschberger et~al.(2022)Hirschberger, Forster, and
  L\"{u}cke]{HirschbergerEtAl2022}
F.~Hirschberger, D.~Forster, and J.~L\"{u}cke.
\newblock A variational {EM} acceleration for efficient clustering at very
  large scales.
\newblock \emph{IEEE Transactions on Pattern Analysis and Machine
  Intelligence}, 44\penalty0 (12):\penalty0 9787--9801, 2022.

\bibitem[Hoffman(2012)]{Hoffman2012}
M.~D. Hoffman.
\newblock Poisson-uniform nonnegative matrix factorization.
\newblock In \emph{IEEE International Conference on Acoustics, Speech and
  Signal Processing}, pages 5361--5364. IEEE, 2012.

\bibitem[Hyttinen et~al.(2022)Hyttinen, Pacela, and
  Hyv{\"a}rinen]{HyttinenEtAl2022}
A.~Hyttinen, V.~B. Pacela, and A.~Hyv{\"a}rinen.
\newblock Binary independent component analysis: A non-stationarity-based
  approach.
\newblock In \emph{Uncertainty in Artificial Intelligence}, pages 874--884.
  PMLR, 2022.

\bibitem[Hyv\"{a}rinen and Morioka(2016)]{HyvarinenMorioka2016}
A.~Hyv\"{a}rinen and H.~Morioka.
\newblock Unsupervised feature extraction by time-contrastive learning and
  nonlinear {ICA}.
\newblock In \emph{Advances in Neural Information Processing Systems},
  volume~29, 2016.

\bibitem[Hyv\"{a}rinen and Morioka(2017)]{HyvarinenMorioka2017}
A.~Hyv\"{a}rinen and H.~Morioka.
\newblock Nonlinear {ICA} of temporally dependent stationary sources.
\newblock In \emph{International Conference on Artificial Intelligence and
  Statistics}, pages 460--469. PMLR, 2017.

\bibitem[Hyv{\"a}rinen and Oja(2000)]{HyvarinenOja2000}
A.~Hyv{\"a}rinen and E.~Oja.
\newblock Independent component analysis: algorithms and applications.
\newblock \emph{Neural Networks}, 13:\penalty0 411--430, 2000.

\bibitem[Hyv\"{a}rinen and Pajunen(1999)]{HyvarinenPajunen1999}
A.~Hyv\"{a}rinen and P.~Pajunen.
\newblock Nonlinear independent component analysis: Existence and uniqueness
  results.
\newblock \emph{Neural Networks}, 12\penalty0 (3):\penalty0 429--439, 1999.

\bibitem[Hyv\"{a}rinen et~al.(2019)Hyv\"{a}rinen, Sasaki, and
  Turner]{HyvarinenEtAl2019}
A.~Hyv\"{a}rinen, H.~Sasaki, and R.~Turner.
\newblock Nonlinear {ICA} using auxiliary variables and generalized contrastive
  learning.
\newblock In \emph{International Conference on Artificial Intelligence and
  Statistics}, pages 859--868, 2019.

\bibitem[Imamura et~al.(2019)Imamura, Itasaka, and Okuda]{ImamuraEtAl2019}
R.~Imamura, T.~Itasaka, and M.~Okuda.
\newblock Zero-shot hyperspectral image denoising with separable image prior.
\newblock In \emph{Proceedings of the IEEE International Conference on Computer
  Vision Workshops}, pages 1416--1420, 2019.

\bibitem[Jang et~al.(2017)Jang, Gu, and Poole]{JangEtAl2017}
E.~Jang, S.~Gu, and B.~Poole.
\newblock Categorical reparameterization with {G}umbel-softmax.
\newblock In \emph{International Conference on Learning Representations}, 2017.

\bibitem[Jernite et~al.(2013)Jernite, Halpern, and Sontag]{JerniteEtAl2013}
Y.~Jernite, Y.~Halpern, and D.~Sontag.
\newblock Discovering hidden variables in noisy-{OR} networks using quartet
  tests.
\newblock In \emph{Advances in Neural Information Processing Systems},
  volume~26, pages 2355--2363, 2013.

\bibitem[Jin et~al.(2018)Jin, Miyashita, Tama, Yang, and Jonic]{jin2018poisson}
Q.~Jin, O.~Miyashita, F.~Tama, J.~Yang, and S.~Jonic.
\newblock Poisson image denoising by piecewise principal component analysis and
  its application in single-particle x-ray diffraction imaging.
\newblock \emph{IET Image Processing}, 12\penalty0 (12):\penalty0 2264--2274,
  2018.

\bibitem[Jordan et~al.(1999)Jordan, Ghahramani, Jaakkola, and
  Saul]{JordanEtAl1999}
M.~I. Jordan, Z.~Ghahramani, T.~S. Jaakkola, and L.~K. Saul.
\newblock An introduction to variational methods for graphical models.
\newblock \emph{Machine Learning}, 37:\penalty0 183--233, 1999.

\bibitem[Kadkhodaie and Simoncelli(2021)]{kadkhodaie2021stochastic}
Z.~Kadkhodaie and E.~Simoncelli.
\newblock Stochastic solutions for linear inverse problems using the prior
  implicit in a denoiser.
\newblock In \emph{Advances in Neural Information Processing Systems},
  volume~34, pages 13242--13254, 2021.

\bibitem[Khan et~al.(2010)Khan, Bouchard, Murphy, and
  Marlin]{khan2010variational}
M.~E. Khan, G.~Bouchard, K.~P. Murphy, and B.~M. Marlin.
\newblock Variational bounds for mixed-data factor analysis.
\newblock In \emph{Advances in Neural Information Processing Systems}, pages
  1108--1116, 2010.

\bibitem[Khemakhem et~al.(2020)Khemakhem, Kingma, Monti, and
  Hyv\"{a}rinen]{KhemakhemEtAl2020}
I.~Khemakhem, D.~Kingma, R.~Monti, and A.~Hyv\"{a}rinen.
\newblock Variational autoencoders and nonlinear {ICA}: A unifying framework.
\newblock In \emph{International Conference on Artificial Intelligence and
  Statistics}, pages 2207--2217. PMLR, 2020.

\bibitem[Kingma and Welling(2013)]{KingmaWelling2014}
D.~P. Kingma and M.~Welling.
\newblock Auto-encoding variational {B}ayes.
\newblock \emph{arXiv preprint arXiv:1312.6114}, 2013.

\bibitem[Kosmidis(2019)]{RPackage}
I.~Kosmidis.
\newblock {CRAN} packages for generalized linear models and with related
  methods, 2019.
\newblock URL
  \url{https://cran.r-project.org/web/packages/cranly/vignettes/glms.html}.

\bibitem[Kumwilaisak et~al.(2020)Kumwilaisak, Piriyatharawet, Lasang, and
  Thatphithakkul]{kumwilaisak2020image}
W.~Kumwilaisak, T.~Piriyatharawet, P.~Lasang, and N.~Thatphithakkul.
\newblock Image denoising with deep convolutional neural and multi-directional
  long short-term memory networks under {P}oisson noise environments.
\newblock \emph{IEEE Access}, 8:\penalty0 86998--87010, 2020.

\bibitem[Larochelle and Bengio(2008)]{Larochelle2008}
H.~Larochelle and Y.~Bengio.
\newblock Classification using discriminative restricted {B}oltzmann machines.
\newblock In \emph{International Conference on Machine Learning}, pages
  536--543, 2008.

\bibitem[LeCun et~al.(2006)LeCun, Chopra, Hadsell, Ranzato, and
  Huang]{LeCunEtAl2006}
Y.~LeCun, S.~Chopra, R.~Hadsell, M.~Ranzato, and F.~Huang.
\newblock A tutorial on energy-based learning.
\newblock \emph{Predicting Structured Data}, 1\penalty0 (0), 2006.

\bibitem[Lee et~al.(2009)Lee, Raina, Teichman, and Ng]{lee2009exponential}
H.~Lee, R.~Raina, A.~Teichman, and A.~Y. Ng.
\newblock Exponential family sparse coding with application to self-taught
  learning.
\newblock In \emph{International Joint Conference on Artificial Intelligence},
  volume~9, pages 1113--1119, 2009.

\bibitem[Li and Tao(2010)]{LiTao2010}
J.~Li and D.~Tao.
\newblock Simple exponential family {PCA}.
\newblock In \emph{International Conference on Artificial Intelligence and
  Statistics}, pages 453--460. PMLR, 2010.

\bibitem[Lu et~al.(2016)Lu, Huang, and Qian]{lu2016sparse}
M.~Lu, J.~Z. Huang, and X.~Qian.
\newblock Sparse exponential family principal component analysis.
\newblock \emph{Pattern Recognition}, 60:\penalty0 681--691, 2016.

\bibitem[L{\"u}cke(2019)]{Lucke2019}
J.~L{\"u}cke.
\newblock Truncated variational expectation maximization.
\newblock \emph{arXiv preprint, arXiv:1610.03113}, 1610, 2019.

\bibitem[L\"ucke and Eggert(2010)]{LuckeEggert2010}
J.~L\"ucke and J.~Eggert.
\newblock Expectation truncation and the benefits of preselection in training
  generative models.
\newblock \emph{Journal of Machine Learning Research}, 11:\penalty0 2855--900,
  2010.

\bibitem[L{\"u}cke and Sahani(2008)]{LuckeSahani2008}
J.~L{\"u}cke and M.~Sahani.
\newblock Maximal causes for non-linear component extraction.
\newblock \emph{Journal of Machine Learning Research}, 9:\penalty0 1227--67,
  2008.

\bibitem[L{\"u}cke et~al.(2009)L{\"u}cke, Turner, Sahani, and
  Henniges]{LuckeEtAl2009}
J.~L{\"u}cke, R.~Turner, M.~Sahani, and M.~Henniges.
\newblock Occlusive components analysis.
\newblock In \emph{Advances in Neural Information Processing Systems},
  volume~22, pages 1069--77, 2009.

\bibitem[MacKay(2003)]{MacKay2003}
D.~J.~C. MacKay.
\newblock \emph{Information Theory, Inference, and Learning Algorithms}.
\newblock Cambridge University Press, 2003.

\bibitem[Maddison et~al.(2017)Maddison, Mnih, and Teh]{MaddisonEtAl2017}
C.~J. Maddison, A.~Mnih, and Y.~W. Teh.
\newblock The concrete distribution: A continuous relaxation of discrete random
  variables.
\newblock In \emph{International Conference on Learning Representations}, 2017.

\bibitem[Makitalo and Foi(2010)]{makitalo2010optimal}
M.~Makitalo and A.~Foi.
\newblock Optimal inversion of the anscombe transformation in low-count
  {P}oisson image denoising.
\newblock \emph{IEEE Transactions on Image Processing}, 20\penalty0
  (1):\penalty0 99--109, 2010.

\bibitem[Mcauliffe and Blei(2008)]{BleiMcAuliffe2008}
J.~D. Mcauliffe and D.~M. Blei.
\newblock Supervised topic models.
\newblock In \emph{Advances in Neural Information Processing Systems}, pages
  121--128, 2008.

\bibitem[McCullagh and Nelder(1989)]{McCullaghNelder1989}
P.~McCullagh and J.~A. Nelder.
\newblock \emph{Generalized Linear Models}.
\newblock Routledge, 1989.

\bibitem[Mohamed et~al.(2009)Mohamed, Ghahramani, and Heller]{MohamedEtAl2009}
S.~Mohamed, Z.~Ghahramani, and K.~A. Heller.
\newblock Bayesian exponential family {PCA}.
\newblock In \emph{Advances in Neural Information Processing Systems}, pages
  1089--1096, 2009.

\bibitem[Mohamed et~al.(2012)Mohamed, Heller, and
  Ghahramani]{mohamed2011bayesian}
S.~Mohamed, K.~A. Heller, and Z.~Ghahramani.
\newblock Bayesian and {L}1 approaches for sparse unsupervised learning.
\newblock In \emph{International Conference on Machine Learning}, 2012.

\bibitem[Molina et~al.(2018)Molina, Vergari, Di~Mauro, Natarajan, Esposito, and
  Kersting]{MolinaEtAl2018}
A.~Molina, A.~Vergari, N.~Di~Mauro, S.~Natarajan, F.~Esposito, and K.~Kersting.
\newblock Mixed sum-product networks: A deep architecture for hybrid domains.
\newblock In \emph{Proceedings of the AAAI Conference on Artificial
  Intelligence}, volume~32, 2018.

\bibitem[Mou et~al.(2018)Mou, Huang, and O’Sullivan]{MouEtAl2018}
T.~Mou, J.~Huang, and F.~O’Sullivan.
\newblock The {G}amma characteristic of reconstructed {PET} images:
  Implications for {ROI} analysis.
\newblock \emph{IEEE Transactions on Medical Imaging}, 37\penalty0
  (5):\penalty0 1092--1102, 2018.

\bibitem[Mousavi et~al.(2020)Mousavi, Drefs, and
  L\"{u}cke]{mousavi2020ddDictionary}
H.~Mousavi, J.~Drefs, and J.~L\"{u}cke.
\newblock A double-dictionary approach learns component means and variances for
  {V}1 encoding.
\newblock In \emph{International Conference on Machine Learning, Optimization,
  and Data Science}, pages 240--244. Springer, 2020.

\bibitem[Mousavi et~al.(2021)Mousavi, Buhl, Guiraud, Drefs, and
  L{\"u}cke]{MousaviEtAl2021}
H.~Mousavi, M.~Buhl, E.~Guiraud, J.~Drefs, and J.~L{\"u}cke.
\newblock Inference and learning in a latent variable model for {B}eta
  distributed interval data.
\newblock \emph{Entropy}, 23\penalty0 (5):\penalty0 552, 2021.

\bibitem[Neal and Hinton(1998)]{NealHinton1998}
R.~M. Neal and G.~E. Hinton.
\newblock A view of the {EM} algorithm that justifies incremental, sparse, and
  other variants.
\newblock In \emph{Learning in Graphical Models}, pages 355--368. Springer,
  1998.

\bibitem[Nelder and Wedderburn(1972)]{NelderWedderburn1972}
J.~A. Nelder and R.~W. Wedderburn.
\newblock Generalized linear models.
\newblock \emph{Journal of the Royal Statistical Society: Series A (General)},
  135\penalty0 (3):\penalty0 370--384, 1972.

\bibitem[Olshausen and Field(1996)]{OlshausenField1996}
B.~A. Olshausen and D.~J. Field.
\newblock Emergence of simple-cell receptive field properties by learning a
  sparse code for natural images.
\newblock \emph{Nature}, 381:\penalty0 607--9, 1996.

\bibitem[Opper and Saad(2000)]{Jaakkola2001}
M.~Opper and D.~Saad.
\newblock Tutorial on variational approximation methods.
\newblock In \emph{Advanced mean field methods: theory and practice.}, pages
  129--159. MIT Press, 2000.

\bibitem[Puertas et~al.(2010)Puertas, Bornschein, and
  L{\"u}cke]{PuertasEtAl2010}
J.~Puertas, J.~Bornschein, and J.~L{\"u}cke.
\newblock The maximal causes of natural scenes are edge filters.
\newblock In \emph{Advances in Neural Information Processing Systems},
  volume~23, pages 1939--1947, 2010.

\bibitem[Remez et~al.(2017)Remez, Litany, Giryes, and Bronstein]{remez2017deep}
T.~Remez, O.~Litany, R.~Giryes, and A.~M. Bronstein.
\newblock Deep convolutional denoising of low-light images.
\newblock \emph{arXiv preprint arXiv:1701.01687}, 2017.

\bibitem[Remez et~al.(2018)Remez, Litany, Giryes, and
  Bronstein]{remez2018class}
T.~Remez, O.~Litany, R.~Giryes, and A.~M. Bronstein.
\newblock Class-aware fully convolutional {G}aussian and {P}oisson denoising.
\newblock \emph{IEEE Transactions on Image Processing}, 27\penalty0
  (11):\penalty0 5707--5722, 2018.

\bibitem[Rezende et~al.(2014)Rezende, Mohamed, and Wierstra]{RezendeEtAl2014}
D.~J. Rezende, S.~Mohamed, and D.~Wierstra.
\newblock Stochastic backpropagation and approximate inference in deep
  generative models.
\newblock In \emph{International Conference on Machine Learning}, pages
  1278--1286. PMLR, 2014.

\bibitem[Rolfe(2017)]{Rolfe2017}
J.~T. Rolfe.
\newblock Discrete variational autoencoders.
\newblock In \emph{International Conference on Learning Representations}, 2017.

\bibitem[Rond et~al.(2016)Rond, Giryes, and Elad]{rond2016poisson}
A.~Rond, R.~Giryes, and M.~Elad.
\newblock Poisson inverse problems by the plug-and-play scheme.
\newblock \emph{Journal of Visual Communication and Image Representation},
  41:\penalty0 96--108, 2016.

\bibitem[Roweis(1998)]{Roweis1998}
S.~Roweis.
\newblock {EM} algorithms for {PCA} and {SPCA}.
\newblock In \emph{Advances in Neural Information Processing Systems}, pages
  626--632, 1998.

\bibitem[Roweis(2003)]{Roweis2003}
S.~T. Roweis.
\newblock Factorial models and refiltering for speech separation and denoising.
\newblock In \emph{Eighth European Conference on Speech Communication and
  Technology}, volume~7, pages 1009--1012, 2003.

\bibitem[Sajama and Orlitsky(2004)]{SajamaOrlitsky2004}
S.~Sajama and A.~Orlitsky.
\newblock Semi-parametric exponential family {PCA}.
\newblock In \emph{Advances in Neural Information Processing Systems},
  volume~17, 2004.

\bibitem[Salmon et~al.(2014)Salmon, Harmany, Deledalle, and
  Willett]{salmon2014poisson}
J.~Salmon, Z.~Harmany, C.-A. Deledalle, and R.~Willett.
\newblock Poisson noise reduction with non-local {PCA}.
\newblock \emph{Journal of Mathematical Imaging and Vision}, 48\penalty0
  (2):\penalty0 279--294, 2014.

\bibitem[Saul et~al.(1996)Saul, Jaakkola, and Jordan]{SaulEtAl1996}
L.~K. Saul, T.~Jaakkola, and M.~I. Jordan.
\newblock Mean field theory for sigmoid belief networks.
\newblock \emph{Journal of Artificial Intelligence Research}, 4\penalty0
  (1):\penalty0 61--76, 1996.

\bibitem[Sheikh and L{\"u}cke(2016)]{SheikhLucke2016}
A.-S. Sheikh and J.~L{\"u}cke.
\newblock Select-and-sample for spike-and-slab sparse coding.
\newblock In \emph{Advances in Neural Information Processing Systems},
  volume~29, pages 3927--3935, 2016.

\bibitem[Sheikh et~al.(2014)Sheikh, Shelton, and L{\"u}cke]{SheikhEtAl2014}
A.-S. Sheikh, J.~A. Shelton, and J.~L{\"u}cke.
\newblock A truncated {EM} approach for spike-and-slab sparse coding.
\newblock \emph{Journal of Machine Learning Research}, 15:\penalty0 2653--2687,
  2014.

\bibitem[Sheikh et~al.(2019)Sheikh, Harper, Drefs, Singer, Dai, Turner, and
  L{\"u}cke]{SheikhEtAl2019}
A.-S. Sheikh, N.~S. Harper, J.~Drefs, Y.~Singer, Z.~Dai, R.~E. Turner, and
  J.~L{\"u}cke.
\newblock {STRF}s in primary auditory cortex emerge from masking-based
  statistics of natural sounds.
\newblock \emph{PLOS Computational Biology}, 15\penalty0 (1):\penalty0
  e1006595, 2019.

\bibitem[Shelton et~al.(2012)Shelton, Sterne, Bornschein, Sheikh, and
  L\"ucke]{SheltonEtAl2012}
J.~A. Shelton, P.~Sterne, J.~Bornschein, A.-S. Sheikh, and J.~L\"ucke.
\newblock Why {MCA}? {Nonlinear} sparse coding with spike-and-slab prior for
  neurally plausible image encoding.
\newblock In \emph{Advances in Neural Information Processing Systems},
  volume~25, pages 2285--2293, 2012.

\bibitem[Shelton et~al.(2017)Shelton, Gasthaus, Dai, L{\"u}cke, and
  Gretton]{SheltonEtAl2017}
J.~A. Shelton, J.~Gasthaus, Z.~Dai, J.~L{\"u}cke, and A.~Gretton.
\newblock {GP}-select: Accelerating {EM} using adaptive subspace preselection.
\newblock \emph{Neural Computation}, 29\penalty0 (8):\penalty0 2177--2202,
  2017.

\bibitem[Shocher et~al.(2018)Shocher, Cohen, and Irani]{ShocherEtAl2018}
A.~Shocher, N.~Cohen, and M.~Irani.
\newblock “{Z}ero-shot” super-resolution using deep internal learning.
\newblock In \emph{Proceedings of the IEEE Conference on Computer Vision and
  Pattern Recognition}, pages 3118--3126, 2018.

\bibitem[{\v{S}}ingliar and Hauskrecht(2006)]{SingliarEtAl2006}
T.~{\v{S}}ingliar and M.~Hauskrecht.
\newblock Noisy-{OR} component analysis and its application to link analysis.
\newblock \emph{Journal of Machine Learning Research}, 7:\penalty0 2189--2213,
  2006.

\bibitem[Spratling(1999)]{Spratling1999}
M.~W. Spratling.
\newblock Pre-synaptic lateral inhibition provides a better architecture for
  self-organising neural networks.
\newblock \emph{Network: Computation in Neural Systems}, 10:\penalty0 285--301,
  1999.

\bibitem[Spratling(2006)]{Spratling2006}
M.~W. Spratling.
\newblock Learning image components for object recognition.
\newblock \emph{Journal of Machine Learning Research}, 7\penalty0 (5):\penalty0
  793--815, 2006.

\bibitem[Sprekeler et~al.(2014)Sprekeler, Zito, and Wiskott]{SprekelerEtAl2014}
H.~Sprekeler, T.~Zito, and L.~Wiskott.
\newblock An extension of slow feature analysis for nonlinear blind source
  separation.
\newblock \emph{Journal of Machine Learning Research}, 15\penalty0
  (1):\penalty0 921--947, 2014.

\bibitem[Tai et~al.(2017)Tai, Yang, Liu, and Xu]{TaiEtAl2017}
Y.~Tai, J.~Yang, X.~Liu, and C.~Xu.
\newblock Mem{N}et: A persistent memory network for image restoration.
\newblock In \emph{Proceedings of the IEEE International Conference on Computer
  Vision}, pages 4539--4547, 2017.

\bibitem[Taleb and Jutten(1999)]{TalebJutten1999}
A.~Taleb and C.~Jutten.
\newblock Source separation in post-nonlinear mixtures.
\newblock \emph{IEEE Transactions on Signal Processing}, 47\penalty0
  (10):\penalty0 2807--2820, 1999.

\bibitem[Tarantola(2005)]{tarantola2005inverse}
A.~Tarantola.
\newblock \emph{Inverse Problem Theory and Methods for Model Parameter
  Estimation}.
\newblock SIAM, 2005.

\bibitem[Teymurazyan et~al.(2013)Teymurazyan, Riauka, Jans, and
  Robinson]{TeymurazyanEtAl2013}
A.~Teymurazyan, T.~Riauka, H.-S. Jans, and D.~Robinson.
\newblock Properties of noise in positron emission tomography images
  reconstructed with filtered-backprojection and row-action maximum likelihood
  algorithm.
\newblock \emph{Journal of Digital Imaging}, 26\penalty0 (3):\penalty0
  447--456, 2013.

\bibitem[Tibshirani(1996)]{Tibshirani1996}
R.~Tibshirani.
\newblock Regression shrinkage and selection via the lasso.
\newblock \emph{Journal of the Royal Statistical Society: Series B
  (Methodological)}, 58\penalty0 (1):\penalty0 267--288, 1996.

\bibitem[Tipping and Bishop(1999)]{TippingBishop1999}
M.~E. Tipping and C.~M. Bishop.
\newblock Probabilistic principal component analysis.
\newblock \emph{Journal of the Royal Statistical Society: Series B (Statistical
  Methodology)}, 61\penalty0 (3):\penalty0 611--622, 1999.

\bibitem[Titsias and L{\'a}zaro-Gredilla(2011)]{TitsiasGredilla2011}
M.~K. Titsias and M.~L{\'a}zaro-Gredilla.
\newblock Spike and slab variational inference for multi-task and multiple
  kernel learning.
\newblock In \emph{Advances in Neural Information Processing Systems},
  volume~24, pages 2339--2347, 2011.

\bibitem[Tolooshams et~al.(2020)Tolooshams, Song, Temereanca, and
  Ba]{tolooshams2020convolutional}
B.~Tolooshams, A.~Song, S.~Temereanca, and D.~Ba.
\newblock Convolutional dictionary learning based auto-encoders for natural
  exponential-family distributions.
\newblock In \emph{International Conference on Machine Learning}, pages
  9493--9503. PMLR, 2020.

\bibitem[Ueda and Nakano(1998)]{UedaNakano1998}
N.~Ueda and R.~Nakano.
\newblock Deterministic annealing {EM} algorithm.
\newblock \emph{Neural Networks}, 11\penalty0 (2):\penalty0 271--82, 1998.

\bibitem[Valera and Ghahramani(2017)]{valera2017automatic}
I.~Valera and Z.~Ghahramani.
\newblock Automatic discovery of the statistical types of variables in a
  dataset.
\newblock In \emph{International Conference on Machine Learning}, pages
  3521--3529, 2017.

\bibitem[{van der Linden} and Hambleton(2013)]{LindenHambleton2013}
W.~J. {van der Linden} and R.~K. Hambleton.
\newblock \emph{Handbook of Modern Item Response Theory}.
\newblock Springer Science \& Business Media, 2013.

\bibitem[{van Hateren} and van~der Schaaf(1998)]{HaterenSchaaf1998}
J.~H. {van Hateren} and A.~van~der Schaaf.
\newblock Independent component filters of natural images compared with simple
  cells in primary visual cortex.
\newblock \emph{Proceedings of the Royal Society of London. Series B:
  Biological Sciences}, 265\penalty0 (1394):\penalty0 359--366, 1998.

\bibitem[Vergari et~al.(2019)Vergari, Molina, Peharz, Ghahramani, Kersting, and
  Valera]{vergari2019automatic}
A.~Vergari, A.~Molina, R.~Peharz, Z.~Ghahramani, K.~Kersting, and I.~Valera.
\newblock Automatic {B}ayesian density analysis.
\newblock In \emph{Proceedings of the AAAI Conference on Artificial
  Intelligence}, volume~33, pages 5207--5215, 2019.

\bibitem[Wainwright and Jordan(2008)]{wainwright2008graphical}
M.~J. Wainwright and M.~I. Jordan.
\newblock \emph{Graphical Models, Exponential Families, and Variational
  Inference}.
\newblock Foundations and Trends{\textregistered} in Machine Learning. Now
  Publishers, Inc., 2008.

\bibitem[Yin et~al.(2019)Yin, Yue, and Zhou]{YinEtAl2019}
M.~Yin, Y.~Yue, and M.~Zhou.
\newblock {ARSM: Augment-REINFORCE-swap-merge estimator for gradient
  backpropagation through {C}ategorical variables}.
\newblock In \emph{International Conference on Machine Learning}, 2019.

\bibitem[Zhang et~al.(2017)Zhang, Zuo, Chen, Meng, and Zhang]{ZhangEtAl2017}
K.~Zhang, W.~Zuo, Y.~Chen, D.~Meng, and L.~Zhang.
\newblock Beyond a {G}aussian denoiser: Residual learning of deep {CNN} for
  image denoising.
\newblock \emph{IEEE Transactions on Image Processing}, 26\penalty0
  (7):\penalty0 3142--3155, 2017.

\bibitem[Zhang et~al.(2018)Zhang, Zuo, and Zhang]{ZhangEtAl2018}
K.~Zhang, W.~Zuo, and L.~Zhang.
\newblock {FFDNet}: Toward a fast and flexible solution for {CNN}-based image
  denoising.
\newblock \emph{IEEE Transactions on Image Processing}, 27\penalty0
  (9):\penalty0 4608--4622, 2018.

\bibitem[Zhou and Wei(2020)]{ZhouWei2020}
D.~Zhou and X.-X. Wei.
\newblock Learning identifiable and interpretable latent models of
  high-dimensional neural activity using pi-{VAE}.
\newblock In \emph{Advances in Neural Information Processing Systems},
  volume~33, pages 7234--7247, 2020.

\bibitem[Zhou et~al.(2012)Zhou, Hannah, Dunson, and Carin]{ZhouEtAl2012}
M.~Zhou, L.~Hannah, D.~Dunson, and L.~Carin.
\newblock Beta-negative {B}inomial process and {P}oisson factor analysis.
\newblock In \emph{International Conference on Artificial Intelligence and
  Statistics}, pages 1462--1471, 2012.

\end{thebibliography}

\end{document}